\definecolor{Gred}{RGB}{219, 50, 54}
\definecolor{Ggreen}{RGB}{60, 186, 84}
\definecolor{Gblue}{RGB}{72, 133, 237}
\definecolor{Gyellow}{RGB}{247, 178, 16}
\definecolor{ToCgreen}{RGB}{0, 128, 0}
\definecolor{myGold}{RGB}{231,141,20}
\definecolor{myBlue}{rgb}{0.19,0.41,.65}
\definecolor{myPurple}{RGB}{175,0,124}
\definecolor{niceRed}{RGB}{153,0,0}
\definecolor{niceRed}{RGB}{190,38,38}
\definecolor{blueGrotto}{HTML}{059DC0}
\definecolor{royalBlue}{HTML}{057DCD}
\definecolor{navyBlueP}{HTML}{0B579C}
\definecolor{limeGreen}{HTML}{81B622}
\definecolor{nicePink}{RGB}{247,83,148}
\def\compactify{\itemsep=0pt \topsep=0pt \partopsep=0pt \parsep=0pt}
\let\latexusecounter=\usecounter
\definecolor{myC}{rgb}{0, 255, 255}
\definecolor{myY}{rgb}{204, 204, 0}
\definecolor{myM}{rgb}{255, 0, 255}
\definecolor{secinhead}{RGB}{249,196,95}
\definecolor{lgray}{gray}{0.8}
\newtheorem{theorem}{Theorem} 
\newtheorem*{theorem*}{Theorem} 
\newtheorem*{proposition*}{Proposition} 
\newtheorem{lemma}[theorem]{Lemma}
\newtheorem{definition}[theorem]{Definition}
\renewcommand{\Pr}{\mathop{\bf Pr\/}}
\newcommand{\E}{\mathop{\bf E\/}}
\newcommand{\Var}{\mathop{\bf Var\/}}
\newcommand{\reals}{\mathbb R}
\newcommand{\nats}{\mathbb N}
\newcommand{\eps}{\epsilon}
\newcommand{\calA}{\mathcal{A}}
\newcommand{\calF}{\mathcal{F}}
\newcommand{\calN}{\mathcal{N}}
\def\<{\langle}
\def\>{\rangle}
\newcommand{\emm}{\mathrm{e}}
\DeclareMathOperator*{\argmin}{argmin}
\def\wh{\widehat}
\pgfplotsset{compat=1.17}
\definecolor[named]{ACMBlue}{cmyk}{1,0.1,0,0.1}
\definecolor[named]{ACMYellow}{cmyk}{0,0.16,1,0}
\definecolor[named]{ACMOrange}{cmyk}{0,0.42,1,0.01}
\definecolor[named]{ACMRed}{cmyk}{0,0.90,0.86,0}
\definecolor[named]{ACMLightBlue}{cmyk}{0.49,0.01,0,0}
\definecolor[named]{ACMGreen}{cmyk}{0.20,0,1,0.19}
\definecolor[named]{ACMPurple}{cmyk}{0.55,1,0,0.15}
\definecolor[named]{ACMDarkBlue}{cmyk}{1,0.58,0,0.21}
\newcommand{\lp}{\left}
\newcommand{\rp}{\right}
\def\dkl{\mathrm{KL}}
\newcommand{\R}{\mathbb{R}}
\def\diag{\mathrm{diag}}
\title{Learning Hard-Constrained Models with One Sample\thanks{A preliminary version of the paper will appear in the Proceedings of the 2024 Annual ACM-SIAM Symposium on
Discrete Algorithms ({SODA\emph{'24}}). For the purpose of Open Access, the authors have applied a CC BY public copyright licence to any
Author Accepted Manuscript version arising from this submission. All data is provided in full in the results
section of this paper.}}
\author{%
  \textbf{Andreas Galanis}\footnote{\texttt{\color{magenta}andreas.galanis@cs.ox.ac.uk}
}\\
University of Oxford\\
\and 
\textbf{Alkis Kalavasis}\footnote{\texttt{\color{magenta}alvertos.kalavasis@yale.edu}
}\\
Yale University\\
  \and
    \textbf{Anthimos Vardis Kandiros}\footnote{\texttt{\color{magenta}kandiros@mit.edu}
} \\
    MIT\\
}
\begin{document}

\maketitle

\begin{abstract}
We consider the problem of single-sample learning, i.e., the problem of estimating the parameters of a distribution (Markov Random Field) using a single sample. Previous work has mainly focused on soft-constrained distributions, such as the Ising model, where remarkably it has been shown that efficient single-sample learning is always possible for sparse graphs. 

Here, we focus instead on single-sample learning of \emph{hard}-constrained distributions. As our main running examples, we use the $k$-SAT, the proper coloring models and the general $H$-coloring models (graph homomorphisms), for which we obtain both positive and negative results. In contrast to the soft-constrained case, we show in particular that single-sample estimation is not always possible, and that the existence of an estimator is related to the existence of non-satisfiable instances.

Following the approach of Chatterjee for soft-constrained distributions (Annals of Statistics, 2007), our algorithms are based on the classical pseudo-likelihood estimator. The main difficulty in our setting is that the accuracy of the estimator is not driven by the sufficient statistics of the distribution (which facilitated previous analyses). Instead, we show variance bounds for the estimator using more involved coupling techniques inspired, in the case of $k$-SAT, by Moitra's  sampling algorithm (JACM, 2019); our positive results for single-sample learning for proper $q$-colorings and $H$-colorings build on this new coupling approach. Our impossibility results are based on constructing satisfiable instances which are either uniquely satisfiable or enforce with high probability a particular value to the statistic determining the distribution; in both cases, the distribution becomes insensitive to the underlying parameters.

 Using these methods, in the case of $q$-colorings on graphs with maximum degree $d$, we show that a linear-time estimator exists when $q>d+1$, whereas the problem is non-identifiable when $q\leq d+1$. For general $H$-colorings, we show  that  standard conditions that guarantee sampling, such as Dobrushin's condition, are insufficient for one-sample learning; on the positive side, we provide  a  condition that is sufficient to guarantee linear-time learning and obtain applications for proper colorings and permissive models.  For the $k$-SAT model on formulas with maximum degree $d$, the picture for single-sample learning is more intriguing, and we show a linear-time estimator when $k\gtrsim 6.45\log d$, whereas the problem becomes non-identifiable when $k\lesssim \log d$.

\end{abstract}
\thispagestyle{empty}
\setcounter{page}{0}
\newpage

\section{Introduction}

Markov Random Fields (MRFs) are a standard framework for studying high-dimensional distributions with conditional independence structure, represented via a graph structure (typically an undirected graph or hypergraph). 
The most well-studied example of an MRF is the Ising model which, in its simplest form, is parameterised by a real number $\beta$ (corresponding to the so-called \emph{inverse temperature} in statistical physics). Given a graph $G=(V,E)$ with $n=|V|$, the Ising model corresponds to the probability distribution $\mu_{G,\beta}$ on all possible assignments $x \in \{-1,1\}^n$, with $\mu_{G,\beta}(x) \propto\exp(\beta x^\top J x)$, where  $J$ is the adjacency matrix of $G$.

There has been an extensive amount of work on undestanding the complexity of sampling and learning of MRFs. In the particular case of the Ising model, there is an excellent understanding of the computational complexity of sampling, see \cite{sly2012computational,galanis2016inapproximability,eldan2022spectral,chen2022localization,el2022sampling,anari2022entropic,koehler2022sampling,chen2023rapid}. Analogously, 
in the computational learning theory community, there have been major developments in understanding the efficiency of various statistical tasks, including hypothesis testing \cite{daskalakis2019testing,gheissari2018concentration} and learning from multiple samples \cite{
bresler2015efficiently,
vuffray2016interaction,klivans2017learning}. 

Starting from the seminal work of Chatterjee \cite{chatterjee2007estimation}, there has been significant interest in single-sample learning. In the context of single-sampling learning for the Ising model, we are given the graph $G$ and a single sample $\sigma$ from $\mu_{G,\beta}$, and the goal is to learn the value of $\beta$. Surprisingly, on bounded-degree graphs $G$, single-sample learning is always possible for any value of $\beta$ provided that $|\beta|=O(1)$; more precisely,  it is possible to compute $\hat{\beta}$ such that $|\hat{\beta}-\beta|=O( 1/\sqrt{n})$ with high probability over the choice of the random sample $x$, see~\cite{chatterjee2007estimation, ghosal2020joint} and \cite{bhattacharya2018inference,dagan2020estimating} for various refinements/generalisations of this result for multi-parametered Ising distributions. These positive results come in sharp contrast with known results in the literature about the complexity of sampling,  where phase transition phenomena render standard sampling approaches inefficient (such as Glauber dynamics) or even  make approximate sampling NP-hard (when $\beta<0$), see   \cite{sly2012computational,galanis2016inapproximability}. All the aforementioned results for single-sample learning have mainly focused on variants of the Ising model
\cite{chatterjee2007estimation,dagan2020estimating,daskalakis2019regression,mukherjee2021high,mukherjee2020estimation}, which are  \textbf{soft}-constrained, i.e., every possible assignment $x\in \{-1,1\}^n$ has non-zero weight in the underlying distribution. 

In this paper, we consider instead the case of \textbf{hard}-constrained distributions where some assignments have zero weights. 
Canonical examples in this class are the $k$-SAT and proper $q$-colorings models which have been extensively studied in terms of sampling/inference (see \Cref{sec:relatedwork}). Our main results show that for both of these problems the picture for single-sample learning is more intriguing (compared to what one might expect from prior work on soft-constrained learning or sampling), switching from efficiently solvable to impossible (unidentifiable) depending on the values of $k,q$ relative to the degree $d$. We also investigate the more general setting of $H$-colorings where we show impossibility results for single-sample learning in regimes where sampling is possible. We nevertheless propose a natural condition under which efficient estimation is possible and apply it to obtain estimators for a large class of models.

\subsection{Results for $k$-SAT}

In the vanilla version of the $k$-SAT model, 
the MRF distribution is over the set of  the satisfying assignments of a given $k$-SAT formula $\Phi$; here, we consider the variant (see, e.g., \cite{achlioptas2003threshold}) where one of the truth assignments is biased according to a parameter $\beta$. More precisely, let $\Phi=\Phi_{n,k,d}$ denote a $k$-SAT formula with $n$ variables, where every clause contains $k$ distinct literals and every variable is contained in at most $d$ clauses. We identify $1$ with \texttt{True} and $0$ with \texttt{False} and denote by $\Omega = \Omega(\Phi) \subseteq \{0,1\}^n$ the set of satisfying  assignments of $\Phi$ (i.e., $\Omega$ consists of all the assignments to the variables that satisfy all the clauses of $\Phi$). Then, for a parameter $\beta \in \reals$, the MRF for the $k$-SAT model gives to each $\sigma\in \Omega$ a weight that depends on the number of variables set to 1, specifically
\begin{equation}\label{eq:hard_sat}
\Pr_{\Phi,\beta
}[\sigma] \propto  \emm^{\beta C(\sigma)} \mbox{ for all $\sigma\in \Omega(\Phi)$},
\end{equation}
where $C(\sigma)$ is the number of variables set to 1 under $\sigma$. 
This tilted SAT model has been studied in more applied settings as ``weighted model counting'' \cite{chavira2008probabilistic,Sang2005PerformingBI,modelCounting2014,Chakraborty2015FromWT,articleModelCount}. 
We will refer to the distribution in \eqref{eq:hard_sat} as the \emph{hard-SAT} distribution\footnote{There is also a ``soft''-SAT distribution, known also as permissive SAT, which is over the space of all possible assignments $\{0,1\}^n$, where the weight of an assignment depends on the number of unsatisfied clauses. The soft-version can viewed as a version of the tensor Ising model and is therefore covered by the results of \cite{mukherjee2020estimation}.
}.  

A recent breakthrough of Moitra \cite{moitra2019approximate} gave an algorithm for approximately sampling from the degree-$d$ hard-SAT model (when $\beta=0$) when $k \geq 60\log d$, by exploiting in a clever way connections with the Lovász Local Lemma (LLL)\footnote{This efficient sampling threshold has been improved in subsequent works
\cite{feng2021fast,feng2021sampling,jain2022towards, detcounting}. See \cite{detcounting} for a reference which achieves a fast uniform sampler when $k \gtrsim 5\log d$.}. 

Our first main result is a single-sample estimator for the hard-SAT model when roughly $k\geq 6.45\log d$ (as $\beta\rightarrow 0$), and an impossibility result when roughly $k \leq \log d$. We say that a real $\hat{\beta}$ is a $\delta$-estimate of a real $\beta$ if $|\hat{\beta}-\beta|\leq \delta$.

\begin{theorem}
[Parameter Estimation of Hard-SAT]
\label{t:hard_sat}
Let $d,k\geq 3$ be integers. 
\begin{itemize}
    \item For $k \leq \log d - \log k + \Theta(1)$,  there exists a formula $\Phi=\Phi_{n,k,d}$ with a unique satisfying assignment, i.e., it is impossible to   estimate $\beta^\star$ from a sample $\sigma\sim\Pr_{\Phi,\beta^*}$.
    \item 
    Let $B,\epsilon>0$ be  arbitrary constants.   
    Then, for all
$k \geq \frac{6.45\log (d k) + \Theta(1)}{\log(1 + \emm^{-B})}$ and  $|\beta^\star| \leq B$ , there exists an 
     algorithm that, on input $\Phi = \Phi_{n,k,d}$ and a sample $\sigma\sim\Pr_{\Phi,\beta^*}$, outputs in linear time an $O(\frac{1}{\sqrt{n}})$-estimate\footnote{We use $O(1)$ to denote a constant depending only on $d,k,B,\epsilon$.} $\hat{\beta}_\sigma$ for $\beta^*$ with probability $\geq 1-\epsilon$ over the choice of $\sigma$, i.e., 
    $\Pr_{{\Phi,\beta^\star}}\left[\big| \hat{\beta}_\sigma - {\beta^\star}\big| =O(1/{\sqrt{n})} \right] \geq 1-\epsilon
    $.
\end{itemize}
\end{theorem}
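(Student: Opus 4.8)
The plan is to analyze the \emph{maximum pseudo-likelihood estimator} (MPLE), following the paradigm of Chatterjee but replacing the sufficient-statistics-based variance bound (which is what made the Ising analyses tractable) by a coupling argument in the spirit of Moitra's sampler \cite{moitra2019approximate}. First I would write down the single-site conditional law of the hard-SAT model: fixing the values $\sigma_{-v}$ of all variables but $v$, either exactly one value of $\sigma_v$ extends $\sigma_{-v}$ to a satisfying assignment (call $v$ \emph{frozen} at $\sigma_{-v}$), or both do (call $v$ \emph{free}), in which case $\Pr_{\Phi,\beta}[\sigma_v=1\mid\sigma_{-v}]=p_\beta:=\emm^{\beta}/(1+\emm^{\beta})$. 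Writing $F(\sigma)$ for the set of free variables and $M(\sigma)=\sum_{v\in F(\sigma)}\sigma_v$, the frozen variables contribute nothing to the pseudo-log-likelihood, so the latter equals $M(\sigma)\log p_\beta+(|F(\sigma)|-M(\sigma))\log(1-p_\beta)$; it is strictly concave in $\beta$ with curvature $|F(\sigma)|\,p_\beta(1-p_\beta)$, maximized at the closed form $\hat\beta_\sigma=\log\frac{M(\sigma)}{|F(\sigma)|-M(\sigma)}$ (clipped to $[-B-1,B+1]$ on the low-probability event that $M(\sigma)\in\{0,|F(\sigma)|\}$). Its score is $S_\beta(\sigma)=\sum_{v\in F(\sigma)}(\sigma_v-p_\beta)$. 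Hence, on the event that $|F(\sigma)|\geq cn$ and $|S_{\beta^\star}(\sigma)|\leq C\sqrt n$, strong concavity (curvature $\gtrsim|F(\sigma)|$ since $|\beta|\leq B+1$) forces $|\hat\beta_\sigma-\beta^\star|=O(1/\sqrt n)$, so it suffices to show each of these two events has probability $\geq 1-\epsilon/2$.

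For the second event, note first that $\E_{\sigma\sim\Pr_{\Phi,\beta^\star}}[S_{\beta^\star}(\sigma)]=0$: for each $v$ the indicator $\1[v\in F(\sigma)]$ is $\sigma_{-v}$-measurable while on that event $\E[\sigma_v\mid\sigma_{-v}]=p_{\beta^\star}$, so $\E[\1[v\in F(\sigma)](\sigma_v-p_{\beta^\star})]=0$. By Chebyshev it then remains to bound $\Var(S_{\beta^\star}(\sigma))$, and this is the crux. Write $g_v(\sigma)=\1[v\in F(\sigma)](\sigma_v-p_{\beta^\star})$, so $|g_v|\leq 1$ and $S_{\beta^\star}=\sum_v g_v$. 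Two features make this delicate: $S_{\beta^\star}$ is \emph{not} a sufficient statistic of the model (unlike $\sum_{ij}\sigma_i\sigma_j$ for Ising), and $F(\sigma)$ is a genuinely global object — whether $v$ is free can depend on variables far from $v$ through chains of nearly-critical clauses — so standard Dobrushin-type conditions do not apply. On the other hand a single coordinate flip is harmless: changing only $\sigma_u$ can alter the criticality status of a clause $c$ only if $u\in c$, hence changes $\1[w\in F(\sigma)]$ only for the $\leq dk$ variables $w$ sharing a clause with $u$ (and $\sigma_w$ itself only for $w=u$), so $S_{\beta^\star}$ changes by at most $O(dk)=O(1)$. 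The plan is therefore to combine this $O(1)$ bounded-difference property with a correlation-decay estimate for $\Pr_{\Phi,\beta^\star}$ obtained by coupling: couple two copies of the conditional law that differ only in the value pinned at one variable and track the random set of clauses that become constrained; in the regime $k\log(1+\emm^{-B})\gtrsim 6.45\log(dk)$ this set is dominated by a subcritical branching process, since a variable lies in $\leq d$ clauses and a clause is ``critical'' under $\Pr_{\Phi,\beta^\star}$ with probability $\leq(1+\emm^{-B})^{-(k-1)}$ — the bias $|\beta^\star|\leq B$ shifting each literal's false-probability by a factor $\leq\emm^{B}$, which is exactly the source of the $\log(1+\emm^{-B})$ in the denominator of the threshold. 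Packaged either as geometric decay of $|\Cov(g_u,g_v)|$ in the distance $\dist(u,v)$ (so $\sum_u|\Cov(g_u,g_v)|=O(1)$ using that $\leq d^{\,t}$ vertices lie at distance $t$, whence $\Var(S_{\beta^\star})=\sum_{u,v}\Cov(g_u,g_v)=O(n)$), or fed into the standard exchangeable-pair/tensorization machinery for MPLEs à la Chatterjee with the coupling supplying the one-step contraction, this gives $\Var(S_{\beta^\star}(\sigma))=O(n)$. Making this coupling work — the adaptation of Moitra's sampling technique to a variance bound — is the step I expect to be the real obstacle.

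For the first event, the same local uniformity available in this regime shows each clause containing $v$ is critical with probability $\ll 1$, so a union bound over the $\leq d$ clauses through $v$ gives $\Pr[v\in F(\sigma)]\geq c$; thus $\E|F(\sigma)|\geq cn$, and applying the covariance-decay estimate above to $\1[v\in F(\cdot)]$ gives $\Var(|F(\sigma)|)=O(n)$, so $|F(\sigma)|\geq(c/2)n$ with probability $\geq 1-\epsilon/2$ by Chebyshev. The running time is linear: $F(\sigma)$ and $M(\sigma)$ are computed in $O(n)$ by, for each variable $v$, flipping $\sigma_v$ and re-checking the $\leq d$ clauses through it (cost $O(dk)=O(1)$ each), and then $\hat\beta_\sigma$ is a single evaluation of $\log\frac{M(\sigma)}{|F(\sigma)|-M(\sigma)}$.

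Finally, the impossibility half is separate and purely combinatorial. When $k\leq\log d-\log k+\Theta(1)$, i.e. $d\gtrsim k2^k$, one can construct a formula $\Phi_{n,k,d}$ with a unique satisfying assignment — the volume heuristic being that $\leq nd/k$ clauses, each forbidding the $2^{n-k}$ assignments that falsify all $k$ of its literals, suffice to rule out all but one assignment while respecting the degree bound. On such $\Phi$ the statistic $C(\cdot)$ is constant on $\Omega(\Phi)$, so $\Pr_{\Phi,\beta}$ is a point mass independent of $\beta$; a single sample (indeed any number of samples) carries no information about $\beta^\star$, so no estimator exists.
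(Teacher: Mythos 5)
Your overall architecture matches the paper's: the MPL estimator, the split into a score bound at $\beta^\star$ plus a strong-concavity (here, $|F(\sigma)|=\Omega(n)$) bound, and an impossibility construction via a uniquely satisfiable formula with $d\asymp k2^k$ (the paper's \Cref{theorem: hard-sat non-identifiable} builds exactly such a $\Phi$ by taking, for each variable $x_i$, all $2^{k-1}$ clauses over a window of $k-1$ other variables with $x_i$ appended positively). Your closed-form observation $\hat\beta_\sigma=\log\frac{M(\sigma)}{|F(\sigma)|-M(\sigma)}$ is correct and is a nice simplification the paper does not state. The score part is also fine: the paper bounds $\E[\phi_1(\beta^\star;\sigma)^2]\leq kdn$ by the exchangeable-pairs identity plus the $O(dk)$ bounded-difference property you note, with no correlation decay needed, so you have correctly flagged that route as available.

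The genuine gap is in your treatment of the event $|F(\sigma)|\geq cn$, which is precisely the step the paper identifies as the technical heart of the result. You propose to get both $\Pr[v\in F(\sigma)]\geq c$ and $\Var(|F(\sigma)|)=O(n)$ from a geometric decay of $|\Cov(\1[u\in F],\1[v\in F])|$, obtained by coupling two conditional laws and dominating the set of disagreements by a subcritical branching process. Two problems. First, even the one-point marginal bound requires comparing the Gibbs measure $\Pr_{\Phi,\beta^\star}$ to the product measure; your computation of the probability that a clause is critical as $(1+\emm^{-B})^{-(k-1)}$ treats the measure as product, and making it legitimate requires the LLL distortion bound (the paper's \Cref{lemma:LLL}), which you never invoke. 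Second, and more seriously, the disagreement-percolation/branching-process domination you gesture at is essentially the correlation-decay property whose rigorous proof is the content of Moitra-style sampling analyses, and it is exactly what the paper is designed to \emph{avoid}: proving it would push you back into the sampling regime (constants like $k\gtrsim 60\log d$) rather than $k\gtrsim 6.45\log(dk)$, and the cascading effect of revealing whole neighborhoods (needed to determine freeness) is what makes a naive contraction argument fail. The paper's workaround has three ingredients you are missing: (i) a marked/unmarked partition of the variables (via LLL) so that every clause retains $\geq\lambda k$ unmarked variables under \emph{any} conditioning of marked variables, which keeps the conditional LLL applicable; (ii) replacing $e_v(\sigma)=\1[v\in F(\sigma)]$ by the smaller indicator $g_v(\sigma)$ that every clause through $v$ is satisfied by a \emph{marked} variable other than $v$, so that determining $g_v$ only reveals marked variables; and (iii) restricting to a 2-hop independent set $R$ with $|R|\geq n/(2kd)^2$ and proving the conditional lower bound $\Pr[g_{i_t}=1\mid g_{i_1},\dots,g_{i_{t-1}}]\geq 1/2$ uniformly over the conditioning, which yields concentration by stochastic domination and Chernoff rather than by covariance decay. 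Without something playing the role of (i)--(iii), your concentration claim for $|F(\sigma)|$ is unsupported in the stated range of $k$.
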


At a technical level, the proof of this result poses various challenges compared to prior work in soft-constrained models. In particular, we develop a general technique, based on a coupling inspired by \cite{moitra2019approximate}, in order to show that the (negative) pseudo-likelihood objective is strongly convex. For the non-identifiability result above the LLL regime we refer to \Cref{theorem: hard-sat non-identifiable} and for the efficient algorithm we refer to  \Cref{sec:proofSAT} for a proof sketch.

\subsection{Results for $q$-colorings and general $H$-colorings}
We next consider the $q$-colorings model  and, more generally, the $H$-colorings model. These were previously studied in the context of single-sample learning by Bhattacharya and Ramanan \cite{bhattacharya2021parameter}, where they introduced a condition (the Rainbow Condition) and obtained an efficient estimator for learning the hardcore model (weighted independent sets). Here, we first investigate their condition as well as other standard conditions in the literature (such as Dobrushin's condition), and show that for general $H$-colorings models they are insufficient for single-sample learning. Then, building on the work of \cite{bhattacharya2021parameter}, we show that a variant of the Rainbow Condition, which we call the 2-Rainbow Condition, is in fact sufficient to imply single-sample learning. We then combine this result with an adaptation of the coupling technique that we developed for the case of $k$-SAT to conclude a linear-time estimator for $q$-colorings 
 and permissive models. 

\paragraph{Definitions.} In the $H$-coloring problem, we have a (fixed) graph $H$ on the set of vertices $[q]=\{1,\hdots,q\}$ which might contain self-loops. Given a graph $G=(V(G),E(G))$ with $n$ vertices,  
an \emph{$H$-coloring} of $G$ is an assignment $\sigma\in [q]^{n}$ of colors to $V(G)$ such that $(\sigma_u, \sigma_v)\in E(H)$ whenever $(u,v) \in E(G)$.  In words, two colors can be adjacent in a valid coloring of $G$ if and only if they are neighboring in $H$. The proper $q$-coloring model can be viewed as a special case of this framework with $H = K_q$ (the clique on $q$ vertices), i.e., no monochromatic edges are allowed in the coloring $\sigma$.  For the learning problem, we consider the space $\Omega^H_{G} \subseteq [q]^{V(G)}$  of all the $H$-colorings of the graph $G$, weighted according to the counts of the different colors using  a vector of parameters $\pmb{\beta} = (\beta_1,\ldots,\beta_{q}) \in \R^{q}$. More precisely, the MRF for the $H$-coloring model over $\Omega^H_{G}$ is given by 
\begin{equation}
\label{eq:h_coloring}
\Pr_{G, H, \pmb{\beta}}[\sigma] \propto 
\exp\bigg(\sum_{i=1}^{q} \beta_i c_i(\sigma) \bigg)\mathbf{1}\{\sigma \in \Omega^H_{G}\}, 
\end{equation}
where,  for $i\in [q]$ and $\sigma\in [q]^{V(G)}$, $c_i(\sigma)$  is the number of nodes in $G$ with color $i$.\footnote{\label{fn:betaqzero}Note that the sum of $c_i(\sigma) $ over $i\in [q]$ is $|V(G)|$ and so we can assume that $\beta_q = 0$ without loss of generality.
}
Our goal is, given a single sample $\sigma \sim \Pr_{G, H, \pmb \beta^\star}$, to efficiently compute a good estimate $ \wh{\pmb {\beta}}$ for $\pmb \beta^\star$ in $L_2$ norm.

Before proceeding, let us briefly comment on the structure of the colorings probability distribution. The vertex-based weightings we consider in this work (weighting each color individually) are analogous to the vertex-based weighting of the hard-core model (where, for an independent set $I$, a vertex $v$ contributes a factor of $\lambda$ if $v \in I$, and 1 otherwise) or the Ising model ``with external fields'' (where the weight of an Ising configuration also depends on the spin counts). Analogously weighted models have also been studied in various contexts, e.g., weighted $H$-colorings have been studied in approximate sampling and graph limits. We also remark that, at the level of generality we consider here, the vertex-based weighting of the distributions is not much different conceptually than having edge-based interactions as in the Ising model (e.g., having weights on the adjacency matrix of $H$); for example, on regular graphs vertex weights can also be viewed as edge weights. The important aspect of the distributions we consider, in either the vertex-based or edge-based weight formulation, is that one hard codes the zeros in the adjacency matrix of $H$ (so that adjacency of certain pairs of colors is ruled out).

\subsubsection{Negative results for $H$-colorings}
We study various conditions that exist in the literature for uniqueness and learnability of Gibbs distributions and show that they are not sufficient to guarantee (even) identifiability of the parameters for general $H$-coloring models.

For instance, we consider the Dobrushin and Dobrushin-Shloshman 
conditions \cite{dobruschin1968description}. These are well-studied conditions and are known to imply efficient sampling (via fast mixing of the heat bath Glauber
dynamics) 
\cite{hayes2006simple}.
Notably, \cite{blanca2018structure} shows that structure learning for $H$-colorings with multiple samples is possible when the DS condition holds true. We show that neither of these conditions suffices for learning from a single sample.
Also, we investigate the Rainbow Condition, introduced in \cite{bhattacharya2021parameter}, which is defined as follows:
suppose $\sigma \in \Omega_{G}^H$ is a proper $H$-coloring of $G$. Fix $r \in [q-1]$ and let us denote by $u_{r}(\sigma) \in [n]$ 
the number of vertices $v$ of $G$
that can be re-colored under $\sigma$ by color $r$ while keeping the colors of all the other vertices (i.e.,  $V\backslash \{v\}$) the same.
The Rainbow Condition roughly requires that  for all colors $r \in [q-1]$, $u_r(q)=\Omega(n)$ with probability  $1-o(1)$.
We show that Rainbow Condition also fails to imply one-sample learnability.
Our main negative result follows.

\begin{theorem}
[Negative Results for Parameter Estimation for $H$-colorings]
\label{theorem:negative results}
The following hold for any integer $n\geq 1$.
\begin{itemize}
    \item (Dobrushin fails) There exists a tuple $(G = G_n, H = H_q)$ and parameters $\pmb{\beta_1}, \pmb{\beta_2}\in \reals^q$ with $\|\pmb{\beta_1} - \pmb{\beta_2}\|_\infty = \Theta(1)$ such that
    the $H$-coloring model
    with these $2$ parameters
    satisfies Dobrushin's condition (see \Cref{def:dobrushin}) but $\dkl(P_{\pmb{\beta_1}} \parallel P_{\pmb{\beta_2}}) = O(n^2 \exp(-n))$.
    \item (Dobrushin-Shlosman fails) There exists a tuple $(G = G_n, H = H_q)$ and parameters $\pmb{\beta_1}, \pmb{\beta_2}\in \reals^q$ with $\|\pmb{\beta_1} - \pmb{\beta_2}\|_\infty = \Theta(1)$ such that
    the $H$-coloring model
    with these $2$ parameters
    satisfies Dobrushin-Shlosman's condition (see \Cref{def:dobrushin})
    but $\dkl(P_{\pmb{\beta_1}} \parallel P_{\pmb{\beta_2}}) = 0$.
    \item (Rainbow Condition fails) There exists a tuple $(G = G_n, H = H_q)$ such that
    the $H$-coloring model
    with any parameter $\pmb{\beta} \in \reals^q$
    satisfies the Rainbow Condition (see \Cref{cond:1})
    but for any $\pmb{\beta_1}, \pmb{\beta_2} \in \reals^q$, $\dkl(P_{\pmb{\beta_1}} \parallel P_{\pmb{\beta_2}}) = 0$.
\end{itemize}
\end{theorem}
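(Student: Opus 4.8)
I will argue all three items from one mechanism: engineer $(G,H)$ so that the tilting weight $\exp\!\big(\sum_i\beta_i c_i(\sigma)\big)$ is the same, or nearly the same, for every $H$-colouring $\sigma\in\Omega^H_G$; it then factors out of the normalising constant $Z_{\pmb\beta}$ and $\Pr_{G,H,\pmb\beta}$ becomes (almost) independent of $\pmb\beta$. Writing $\mathbf c(\sigma)=(c_1(\sigma),\dots,c_q(\sigma))$, for $\sigma\in\Omega^H_G$ we have $\Pr_{G,H,\pmb\beta_1}[\sigma]/\Pr_{G,H,\pmb\beta_2}[\sigma]=(Z_{\pmb\beta_2}/Z_{\pmb\beta_1})\exp\!\big((\pmb\beta_1-\pmb\beta_2)\cdot\mathbf c(\sigma)\big)$, so if $(\pmb\beta_1-\pmb\beta_2)\cdot\mathbf c(\sigma)$ is constant over the support then $\Pr_{G,H,\pmb\beta_1}=\Pr_{G,H,\pmb\beta_2}$ and $\dkl(P_{\pmb\beta_1}\parallel P_{\pmb\beta_2})=0$; if it varies by a negligible amount, $\dkl$ is exponentially small. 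The plan is therefore to (i) choose $H$ so that the count vectors $\mathbf c(\sigma)$ over $\Omega^H_G$ lie in a low-dimensional affine set, (ii) choose $\pmb\beta_1-\pmb\beta_2$ of $\ell_\infty$-norm $\Theta(1)$ that is orthogonal (or nearly so) to the span of that set, and (iii) verify the relevant condition by hand, using that Dobrushin, Dobrushin--Shlosman and Rainbow all constrain only \emph{local} behaviour (per site, per region, and per single recolouring respectively), which is compatible with the statistic being \emph{globally} pinned.

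For the third item (Rainbow Condition) this can be made exact. Take $H=K_3$ and let $G$ be a disjoint union of $n/3$ triangles (or, for a connected example, join consecutive triangles by bridge edges). Every $H$-colouring of a triangle is a bijection onto $\{1,2,3\}$, so $c_i(\sigma)=n/3$ for all $i$ and all $\sigma\in\Omega^H_G$; hence $\Pr_{G,H,\pmb\beta}$ equals the uniform measure on $\Omega^H_G$ for \emph{every} $\pmb\beta\in\R^q$, giving $\dkl(P_{\pmb\beta_1}\parallel P_{\pmb\beta_2})=0$ for all $\pmb\beta_1,\pmb\beta_2$. For \Cref{cond:1}: a vertex $v$ with $\sigma_v=r$ trivially ``can be re-coloured by $r$'' (leaving $\sigma$ unchanged), so $u_r(\sigma)\ge c_r(\sigma)=n/3=\Omega(n)$ deterministically, and the Rainbow Condition holds with probability $1$ for each $r\in[q-1]$. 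Here the tension discussed below is absent, since $u_r\ge c_r$ makes the condition automatic once every colour is used linearly often.

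For the first two items (Dobrushin, Dobrushin--Shlosman), pinning $\mathbf c$ \emph{exactly} would force local rigidity and break the influence bounds, so we use an $H$ with an internal colour symmetry that $\mathbf c$ cannot see. Take $q=4$, let $H$ be the complete bipartite graph between $\{1,2\}$ and $\{3,4\}$ (so $N_H(1)=N_H(2)=\{3,4\}$ and $N_H(3)=N_H(4)=\{1,2\}$), and let $G$ be a connected bipartite graph with equal sides $|X|=|Y|=n/2$ (e.g.\ an even cycle). An $H$-colouring chooses a ``phase'' (which side maps into $\{1,2\}$) and then an arbitrary allowed colour at each vertex, so $c_1(\sigma)+c_2(\sigma)=c_3(\sigma)+c_4(\sigma)=n/2$ for all $\sigma$. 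With $\pmb\beta_1=(1,1,0,0)$ and $\pmb\beta_2=(0,0,0,0)$ we get $(\pmb\beta_1-\pmb\beta_2)\cdot\mathbf c(\sigma)=n/2$ for every $\sigma$, hence $\Pr_{G,H,\pmb\beta_1}=\Pr_{G,H,\pmb\beta_2}$ and $\dkl=0$; this gives the Dobrushin--Shlosman item, because conditioned on any feasible colouring of a region's boundary the conditional law inside is a fixed product measure (it sees the boundary only through the phase, which a connected region with non-empty boundary determines), so all influences vanish and the condition of \Cref{def:dobrushin} holds trivially. For the Dobrushin item the same vanishing-influence computation applies vertex by vertex; to obtain $\dkl=O(n^2 e^{-n})$ (rather than $0$) together with a constant separation, perturb $\pmb\beta_1$ slightly in an \emph{identifiable} direction, e.g.\ $\pmb\beta_1=(1+\epsilon,\,1-\epsilon,\,0,\,0)$ with $\epsilon=\Theta(\sqrt n\,e^{-n/2})$: then $\Pr_{G,H,\pmb\beta_1}(\sigma)\propto e^{\epsilon(c_1(\sigma)-c_2(\sigma))}$ on $\Omega^H_G$ is an $O(\epsilon)$-tilt of the uniform measure, so $\dkl(P_{\pmb\beta_1}\parallel P_{\pmb\beta_2})=O\!\big(\epsilon^2\,\mathrm{Var}_{\mathrm{unif}}(c_1-c_2)\big)=O(\epsilon^2 n)=O(n^2 e^{-n})$, while $\|\pmb\beta_1-\pmb\beta_2\|_\infty=1+\epsilon=\Theta(1)$ and Dobrushin still holds (the conditional laws remain phase-determined). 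If the precise formulation in \Cref{def:dobrushin} quantifies over boundary conditions that need not be globally extendable, one instead replaces $G$ by a connected near-rigid gadget whose count vector is pinned with probability $1-e^{-\Omega(n)}$ and whose residual local slack keeps the influence sum strictly below $1$, again giving $\dkl=O(n^2 e^{-n})$.

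The crux, present only in the Dobrushin / Dobrushin--Shlosman parts, is to make $(G,H)$ simultaneously ``rigid enough'' that $\mathbf c$ — hence $\pmb\beta$ — is (nearly) invisible, yet ``loose enough'' per site that the Dobrushin (resp.\ DS) influence matrix has norm below $1$. Decoupling these forces the use of an $H$ carrying a colour-permutation symmetry the count functional cannot detect, together with a balancing requirement on $G$ (here $|X|=|Y|$) so that the phase swap leaves $\mathbf c$ unchanged; matching these gadgets to the exact definitions in \Cref{def:dobrushin} — in particular the treatment of infeasible boundary conditions and, for a non-degenerate example, the tuning of the slack $\epsilon$ — is where the real care is needed. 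For the Rainbow item there is no such balancing act, only the routine observation that $u_r\ge c_r$.
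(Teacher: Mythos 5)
Your proposal is correct, and the underlying mechanism --- choose $(G,H)$ so that the count vector $\mathbf{c}(\sigma)=(c_1(\sigma),\dots,c_q(\sigma))$ is (nearly) constant on $\Omega^H_G$, or so that $\pmb{\beta_1}-\pmb{\beta_2}$ points in a direction along which it is pinned, then verify the local conditions by direct influence computations --- is exactly the paper's. Your Rainbow gadget (disjoint triangles with $H=K_3$) is the $q=3$ instance of the paper's disjoint-$K_q$ construction in \Cref{t:q_coloring_2}, and your observation $u_r\geq c_r$ matches the paper's remark that the Rainbow Condition is automatic for $q\geq d+1$. For Dobrushin--Shlosman the paper uses a $2n$-cycle with the three-colour $H$ having $E(H)=\{(1,2),(1,3)\}$, pinning $c_1\equiv n$ exactly (\Cref{thm:ds insufficient}); your complete-bipartite $H$ on an even cycle, pinning $c_1+c_2\equiv n/2$, is an equally valid variant. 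The genuine divergence is in the Dobrushin bullet: the paper's example (\Cref{thm:dobrushin insufficient}) is a star graph with a hub-and-spoke $H$, where $c_1$ equals $1$ except on an exponentially unlikely second phase, and the $O(n^2e^{-n})$ bound comes from the law of total variance applied to that rare event, with both parameters being $n$-independent constants. You instead keep the statistic exactly pinned in the chosen direction and inject the exponentially small KL by perturbing $\pmb{\beta_1}$ by $\epsilon=\Theta(\sqrt{n}\,e^{-n/2})$ in an unpinned direction, using $\dkl=\Theta(\epsilon^2\Var(c_1-c_2))$ with $\Var(c_1-c_2)=O(n)$ (which does hold here, since conditionally on the phase the colours are independent, and the two phases have equal conditional means). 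Both routes are sound: the paper's buys parameters independent of $n$ and a genuinely ``non-identifiable up to exponentially rare events'' instance, while yours reuses a single gadget for both the Dobrushin and DS bullets and makes the ``orthogonal-to-the-pinned-subspace'' viewpoint explicit. Your hedge about infeasible boundary conditions is unnecessary: \Cref{def:dobrushin} already restricts to the set $S_w$ of configurations with well-defined conditionals, so your computation $R_{vw}=0$ (the conditional law at $v$ sees its neighbours only through which side of $H$'s bipartition they occupy) goes through as stated.
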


\Cref{sec:insufficient} contains the proofs of the above collection of negative results about the one-sample learnability of $H$-colorings. Given the above results, it is interesting to investigate whether an efficient single-sample algorithm exists for $H$-colorings under mild conditions. This is the context of the upcoming section, where we show that a natural variant of the Rainbow Condition is sufficient for learning.

\subsubsection{Positive results for $H$-colorings}
The main goal of this section is to give a condition for general $H$-colorings that (i) will allow us to obtain an efficient single-sample learning algorithm, and (ii) is sufficiently mild to allow for a wide range of applications. Indeed, we use this to obtain a tight phase transition for the single-sample learnability of proper $q$-colorings (a result which was not known by prior work) and obtain efficient algorithms for the generic class of permissive models. 

Let us first introduce the aforementioned condition, which we call \emph{2-Rainbow Condition}. Suppose $\sigma \in \Omega_{G}^H$ is a proper $H$-coloring of an $n$-vertex graph $G$ and denote by $u_{rq}(\sigma)$ 
the number of vertices $v$ of $G$
that can be re-colored in $\sigma$ with both colors $r$ and $q$ (i.e., setting $\sigma(v)=r$ and $\sigma(v)=q$ while keeping the rest of $\sigma$ the same still yield proper $H$-colorings of $G$).
We say that $H$ satisfies the 2-Rainbow Condition (with constant $\delta>0$) on graphs $G$ of maximum degree $d$ if for all colors $r \in [q-1]$, $u_{rq}(\sigma)$ is lower bounded with probability close to 1 (as $n$ grows), i.e., 
\begin{equation}\label{eq:strong-rainbow}
\Pr_{G, H, \pmb \beta^*}
\big[u_{rq}(\sigma) \geq \delta n\mbox{ for all } r \in [q-1]\big] = 1-o(1).
\end{equation}
We note that this condition is a two-coloring refinement of the Rainbow Condition, introduced by \cite{bhattacharya2021parameter}, which as we showed in \Cref{theorem:negative results} does not suffice for learning $H$-colorings. 

The following result builds on related ideas to that of \Cref{t:hard_sat}. We sketch its proof in \Cref{sec:4g4gg4} and give the full proof in \Cref{sec:algo-h-coloring}. In the statement below, we use $O(1)$ to denote a constant depending only on $q,d,\delta,B,\epsilon$.
\begin{restatable}{theorem}{hcolpositive}
\textnormal{(Parameter Estimation for $H$-colorings).}
\label{theorem:positive result h-colorings}
Let $q,d \geq 2$ be integers and $B,\epsilon>0$ be reals. Consider an $H$-coloring model
    with parameter $\pmb{\beta}^* \in [-B,B]^q$
    that satisfies the 2-Rainbow Condition with constant $\delta>0$ (see \eqref{eq:strong-rainbow}) on max-degree $d$ graphs.    
    
    There exists a linear-time algorithm that, on input $G$ and a sample
    $\sigma \sim \Pr_{G, H, \pmb \beta^\star}$, outputs a $O(\frac{1}{\sqrt{n}})$-estimate $\wh{\pmb \beta}_\sigma$ for $\pmb \beta^\star$ w.p. $1-\eps$
    over $\sigma$, 
    i.e.,
    $
    \Pr_{G, H, \pmb{\beta^\star}}\left[ \big\|\wh{\pmb{\beta}}_\sigma - \pmb{\beta^\star}\big\|_2 = O(\frac{1}{\sqrt{n}}) \right] \geq 1-\eps.
    $
\end{restatable}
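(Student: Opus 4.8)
The plan is to follow the classical pseudo-likelihood recipe, as in Chatterjee's and Bhattacharya--Ramanan's work, but with the variance/strong-convexity analysis carried out via the coupling technique advertised for $k$-SAT. Recall that the pseudo-likelihood function is
\[
\mathrm{PL}(\pmb\beta) = \sum_{v\in V(G)} \log \Pr_{G,H,\pmb\beta}\big[\sigma_v \mid \sigma_{V\setminus\{v\}}\big],
\]
and the estimator is $\wh{\pmb\beta}_\sigma = \argmax_{\pmb\beta\in[-B,B]^q}\mathrm{PL}(\pmb\beta)$ (with $\beta_q=0$ fixed, per footnote~\ref{fn:betaqzero}). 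Note each conditional factor depends only on the colors available to $v$ given its neighbors, so $\mathrm{PL}$ and its gradient/Hessian are computable in linear time. First I would record the standard identities: $\nabla\mathrm{PL}(\pmb\beta^\star)$ is a sum of per-vertex terms that are martingale-like / have bounded differences, so by a bounded-differences (Azuma--McDiarmid along a vertex exposure, using that $G$ has max degree $d$) argument $\|\nabla\mathrm{PL}(\pmb\beta^\star)\|_2 = O(\sqrt{n})$ with probability $1-\epsilon/2$. Then, if I can show the Hessian satisfies $-\nabla^2\mathrm{PL}(\pmb\beta)\succeq c\,n\,I$ (restricted to the coordinates $1,\dots,q-1$) uniformly for $\pmb\beta\in[-B,B]^q$, for some constant $c=c(q,d,\delta,B)>0$, with probability $1-\epsilon/2$, then strong convexity plus the gradient bound gives $\|\wh{\pmb\beta}_\sigma-\pmb\beta^\star\|_2 = O(1/\sqrt n)$ via the usual $\|\wh{\pmb\beta}-\pmb\beta^\star\|\le \|\nabla\mathrm{PL}(\pmb\beta^\star)\|/(cn)$ inequality; a union bound over the two events finishes.

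The heart of the argument — and the main obstacle — is the Hessian lower bound. A direct computation shows $-\nabla^2\mathrm{PL}(\pmb\beta)$ is a sum over vertices $v$ of the conditional covariance matrices of the color-indicator vector at $v$ given $\sigma_{N(v)}$; this is PSD automatically, but to get a bound bounded away from $0$ I need that, for each color $r\in[q-1]$, a linear number of vertices genuinely "see" both $r$ and $q$ as admissible, i.e. the conditional law at $v$ puts non-negligible mass on both. The 2-Rainbow Condition \eqref{eq:strong-rainbow} is tailored precisely to this: on the event $\{u_{rq}(\sigma)\ge\delta n\ \forall r\}$, which holds w.p. $1-o(1)$, at least $\delta n$ vertices have $\{r,q\}$ in their admissible set, and for each such vertex the local conditional covariance, restricted to the $(r,q)$-direction, is at least some $c'(q,B)>0$ since the conditional probabilities of $r$ and $q$ are each in $[e^{-2B}/q,\,1]$. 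Summing these per-vertex contributions and arguing that the resulting matrix dominates $c n\cdot I$ on $\R^{q-1}$ (e.g. by checking the quadratic form $x^\top(-\nabla^2\mathrm{PL})x$ is $\ge cn\|x\|^2$, decomposing $x$ along the $q-1$ "directions" $e_r-e_q$) is where the coupling technique from the $k$-SAT proof enters: I would use it to control the correlations between the local conditional distributions at different vertices and to argue that re-coloring one vertex does not destroy the admissibility witnessed at $\Omega(n)$ others — essentially a deterministic-to-high-probability transfer, analogous to how Moitra's coupling is used to show the pseudo-likelihood objective is strongly convex in \Cref{t:hard_sat}.

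Two technical points I would handle carefully. First, the 2-Rainbow event is stated for a sample from $\Pr_{G,H,\pmb\beta^\star}$, but I need the Hessian bound to hold uniformly over all $\pmb\beta\in[-B,B]^q$; since $u_{rq}(\sigma)$ is a function of $\sigma$ alone (not of $\pmb\beta$) and the per-vertex conditional covariances are continuous in $\pmb\beta$ with the lower bound $c'(q,B)$ uniform on the box, this causes no real trouble — the randomness only enters through $\sigma$, and I condition on the single high-probability event. Second, the parametrization: since only differences of $\beta_i$'s are identifiable I work in the quotient (fix $\beta_q=0$), and the 2-Rainbow Condition's use of the reference color $q$ is exactly what makes the Hessian non-degenerate on that quotient; if desired one symmetrizes over the choice of reference color. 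Finally, I would instantiate the theorem for proper $q$-colorings ($H=K_q$, checking \eqref{eq:strong-rainbow} holds when $q>d+1$ via a first-moment / local-sparsity argument on the number of vertices whose neighborhood misses two given colors) and for permissive models (where every vertex always has $\ge 2$ admissible colors by definition, so the 2-Rainbow Condition is immediate with $\delta=1$), recovering the claimed applications.
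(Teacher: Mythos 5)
Your overall architecture matches the paper's: MPL estimator, a first-derivative bound of order $\sqrt{n}$, strong convexity of the negative pseudo-likelihood from the 2-Rainbow Condition via per-vertex conditional covariances, and the standard Taylor/strong-convexity inequality to conclude. Two points deserve correction. First, your route to the gradient bound --- ``Azuma--McDiarmid along a vertex exposure'' --- does not go through as stated: the coordinates of $\sigma$ under a hard-constrained Gibbs measure are dependent (no Dobrushin-type condition is assumed, only the 2-Rainbow Condition), so McDiarmid is inapplicable, and for the Doob-martingale version the increments are not obviously $O(1)$ since conditioning on one vertex's color can shift conditional expectations at distant vertices; indeed the paper explicitly notes that concentration results are not available for these models. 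What is actually needed is only the second-moment bound $\E\big[\|\phi_1(\pmb\beta^\star;\sigma)\|_2^2\big]=O(n)$ followed by Chebyshev/Markov to get the $O(\sqrt{n/\epsilon})$ bound with probability $1-\epsilon/2$; this second-moment bound follows from Chatterjee's exchangeable-pairs identity (as in \Cref{lemma:upper_bound} for $k$-SAT) and is exactly \cite[Lemma 5.1]{bhattacharya2021parameter}, which the paper cites. This is a repairable gap, but as written the step would fail.

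Second, you invoke the $k$-SAT coupling technique to establish the Hessian lower bound for the general theorem (``to control the correlations between the local conditional distributions at different vertices''). No coupling is needed here: once you condition on the event $\{u_{rq}(\sigma)\ge\delta n\ \forall r\in[q-1]\}$, which the \emph{hypothesis} of the theorem supplies with probability $1-o(1)$, the bound $\phi_2(\pmb\beta;\sigma)\succcurlyeq c\,n\,\mathbf{I}_{q-1}$ uniformly over the box is a purely deterministic linear-algebra fact (the paper's \Cref{l:eigenvalue} via Cauchy--Schwarz, using that color $q$ is admissible at each contributing vertex, plus \Cref{l:strong_conv}). The coupling enters only when one must \emph{verify} the 2-Rainbow Condition for specific models (Corollaries~\ref{t:q_coloring} and~\ref{cor:permissive}); and there your tangential claims are too optimistic --- for permissive models the condition is not ``immediate with $\delta=1$'' (it requires all neighbors of $\Omega(n)$ vertices to receive the unconstrained color, which is exactly what the coupling argument of \Cref{sec:app-2} delivers), and for proper $q$-colorings a first-moment bound only controls $\E[u_{rq}]$, not the high-probability statement \eqref{eq:strong-rainbow}. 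For the theorem itself, though, your deterministic argument (which you do eventually articulate: ``the randomness only enters through $\sigma$, and I condition on the single high-probability event'') is the correct and complete one.
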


As applications of our general algorithmic result for $H$-colorings, we 
show that efficient single-sample learning of proper $q$-colorings is possible when $q > d+1$.
More to that, we can invoke one of the hard instances of \Cref{theorem:negative results} to show that this threshold is tight. Our result is hence a sharp phase transition for the problem of learning proper $q$-colorings with a single-sample: 
we show that if $q \geq d+2$, then estimation is possible with the standard minimax rate. In contrast, if $q \leq d+1$, we can construct a sequence of graphs $\{G_n\}_{n=1}^\infty$, where estimating the true $\pmb{\beta^\star}$ is impossible. 
The following Corollary formalizes these claims.
As mentioned earlier, we use $O(1)$ to denote a constant depending only on $q,d,\delta,B,\epsilon$.

\begin{restatable}{corollary}{pestimatecol}
\textnormal{(Parameter Estimation for Proper $q$-Colorings).}
\label{t:q_coloring}
Let $q,d \geq 2$ be integers and $B,\epsilon>0$.
\begin{itemize}
    \item 
    For $q \leq  d+1$,    there exists a sequence of graphs $\{G_n\}_{n}$ with max degree $d$ such that for any $\pmb{\beta_1}, \pmb{\beta_2} \in \R^{q}$, it holds that the KL
    divergence between $\Pr_{G_n, {\pmb{\beta_1}}}$ and $\Pr_{G_n, {\pmb{\beta_2}}}$ is 0, i.e., it is impossible to estimate the  parameter vector from a sample from the proper $q$-coloring model.
    \item For $q > d+1$ and $\bm \beta^*\in [-B,B]^q$, 
    there exists a linear-time algorithm that, on input $G $ with max degree $d$ and a sample
    $\sigma \sim \Pr_{G, \pmb \beta^\star}$, outputs an $O(\frac{1}{\sqrt{n}})$-estimate $\wh{\pmb \beta}_\sigma$ for $\pmb \beta^\star$ with probability $1-\eps$
    over $\sigma$, 
    i.e.,
    $\Pr_{G, \pmb{\beta^\star}}\left[ \|\wh{\pmb{\beta}}_\sigma - \pmb{\beta^\star}\|_2 = O(\frac{1}{\sqrt{n}}) \right] \geq 1-\eps.$
\end{itemize}
\end{restatable}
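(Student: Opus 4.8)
The plan is to handle the two items separately: the impossibility statement through an explicit degenerate family of graphs, and the positive statement by checking that proper $q$-colorings with $q\ge d+2$ fall under the hypotheses of the general estimator \Cref{theorem:positive result h-colorings}.

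For the impossibility direction ($q\le d+1$), I would take $G_n$ to be a disjoint union of $\lfloor n/q\rfloor$ copies of the clique $K_q$. Its maximum degree is $q-1\le d$ and it is $q$-colorable, so the model \eqref{eq:h_coloring} is well defined. The point is that in any proper $q$-coloring of $K_q$ the $q$ pairwise-adjacent vertices receive all $q$ colors, so every $\sigma\in\Omega^{K_q}_{G_n}$ has $c_i(\sigma)=\lfloor n/q\rfloor$ for all $i\in[q]$; hence the weight $\exp(\sum_i\beta_i c_i(\sigma))$ in \eqref{eq:h_coloring} is a constant independent of $\sigma$ and of $\pmb{\beta}$, so $\Pr_{G_n,\pmb{\beta}}$ equals the uniform distribution on $\Omega^{K_q}_{G_n}$ for every $\pmb{\beta}\in\reals^q$. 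Consequently $\dkl(\Pr_{G_n,\pmb{\beta_1}}\,\|\,\Pr_{G_n,\pmb{\beta_2}})=0$ for all $\pmb{\beta_1},\pmb{\beta_2}\in\reals^q$, so no estimator (even given arbitrarily many samples) can recover $\pmb{\beta^\star}$; this is the coloring specialization of the degenerate instances behind \Cref{theorem:negative results}.

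For the positive direction ($q\ge d+2$), by \Cref{theorem:positive result h-colorings} it suffices to verify that $H=K_q$ satisfies the $2$-Rainbow Condition \eqref{eq:strong-rainbow} with some constant $\delta=\delta(q,d,B)>0$ on every max-degree-$d$ graph $G$, for every $\pmb{\beta^\star}\in[-B,B]^q$. Since $\sigma$ is proper, $v$ can be recolored with $c$ (fixing the rest) exactly when $c\notin\sigma(N(v))$, so $u_{rq}(\sigma)=|\{v:\{r,q\}\cap\sigma(N(v))=\emptyset\}|$. I would then (a) prove a first-moment bound $\E_\sigma[u_{rq}(\sigma)]\ge c(q,d,B)\,n$ for each $r\in[q-1]$, via a local recoloring/entropy argument that uses the slack $q-d\ge 2$ and $\|\pmb{\beta^\star}\|_\infty\le B$ to lower bound, by a positive constant, the probability that $N(v)$ avoids both $r$ and $q$; and (b) prove a concentration bound $\Pr_\sigma[u_{rq}(\sigma)<\tfrac12 c(q,d,B)\,n]=o(1)$, and finish with a union bound over the $q-1$ colors $r$ to obtain \eqref{eq:strong-rainbow}. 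Because recoloring one vertex changes $u_{rq}$ by $O(d)$, $u_{rq}$ is a Lipschitz additive statistic of $\sigma$, so (b) reduces to a decay-of-disagreement estimate for the coloring Gibbs measure, which I would derive from the coupling developed for $k$-SAT and adapted to colorings in \Cref{sec:algo-h-coloring} (the same machinery that proves \Cref{theorem:positive result h-colorings}); once \eqref{eq:strong-rainbow} holds, \Cref{theorem:positive result h-colorings} delivers the stated linear-time $O(1/\sqrt n)$-estimator.

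I expect the main obstacle to be the boundary case $q=d+2$ in both (a) and (b): the $2$-Rainbow Condition fails \emph{deterministically} there — for example, on a disjoint union of copies of $K_{d+1}$ in which every clique misses the same color $c_0\notin\{r,q\}$ one has $u_{rq}(\sigma)=0$ — so the argument must use that $\sigma$ is sampled from the Gibbs measure, which spreads out the missing colors. In particular a single-step local move does not suffice for (a) (a bad neighbor of $v$ may be recolorable only into $\{r,q\}$), so the first-moment estimate, not merely the concentration, has to be routed through the coupling; making this quantitative at $q=d+2$ is the technical heart of the corollary.
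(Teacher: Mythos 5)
Your impossibility construction (disjoint copies of $K_q$, so every color class has size exactly $n/q$, the weight is constant, and the distribution is uniform for every $\pmb\beta$) is exactly the paper's construction in \Cref{t:q_coloring_2}, and your reduction of the positive part to verifying the 2-Rainbow Condition and invoking \Cref{theorem:positive result h-colorings} is also the paper's route. The gap is in how you propose to verify the 2-Rainbow Condition. Your step (b) reduces concentration of $u_{rq}$ to ``a decay-of-disagreement estimate for the coloring Gibbs measure'' applied to a Lipschitz additive statistic. No such estimate is available at $q=d+2$: spatial-mixing and Dobrushin-type bounds for proper colorings on general max-degree-$d$ graphs require $q$ well above $d+2$ (roughly $q>2d$ for Dobrushin), and the paper explicitly states (\Cref{sec:couplingSATSAT}) that it must avoid such correlation-decay hypotheses precisely because they fail in the regime it targets. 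So the tool you lean on for concentration does not exist where the corollary needs it, and your step (a) --- the first-moment bound at $q=d+2$ --- is left as an acknowledged ``technical heart'' rather than proved.

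The paper's mechanism sidesteps both of your steps. It fixes a maximal 2-hop independent set $R=\{v_1,\dots,v_t\}$ (so $t\ge n/(d+1)^2$) and proves in \Cref{lemma:coupling-lower bound} that for each $k$, \emph{conditioned on arbitrary values of} $e^{rq}_{v_1},\dots,e^{rq}_{v_{k-1}}$, one has $\Pr[e^{rq}_{v_k}=1]\ge 1/(1+q^{d+1}e^{\|\pmb{\beta}^\star\|_1 d})$. The proof is a purely combinatorial injection: given a coloring in which some neighbor of $v_k$ uses $r$ or $q$, one recolors those bad neighbors one at a time; since each such neighbor $j$ is adjacent to $v_k$, it has at most $d-1$ other constraining neighbors, and $q\ge d+2$ leaves at least three admissible colors for $j$, hence at least one outside $\{r,q\}$. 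The map changes only $v_k$ and its neighborhood, so its multiplicity and weight distortion are bounded by $q^{d+1}e^{\|\pmb{\beta}^\star\|_1 d}$, giving the conditional lower bound. Stochastic domination (\Cref{lem:dominance}) by i.i.d.\ Bernoullis plus a Chernoff bound then yields $u_{rq}(\sigma)\ge \sum_k e^{rq}_{v_k}(\sigma)=\Omega(n)$ with probability $1-e^{-\Omega(n)}$: both the first moment and the concentration fall out of the single conditional estimate, with no correlation decay anywhere. To repair your outline, replace (a) and (b) with this one conditional bound on a 2-hop independent set.
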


This result is in contrast to structure learning with multiple samples \cite{blanca2018structure},
where the problem is identifiable
in the tree uniqueness region (i.e., when $q > d$). 
The algorithmic part of \Cref{t:q_coloring} can be proven using our main result about parameter estimation for $H$-coloring distributions from one sample and so we refer to \Cref{appendix:h-coloring} for its proof. In particular, the case $q > d+1$ is handled in \Cref{sec:app-1} and the case $q \leq d+1$ appears in \Cref{t:q_coloring_2}.

Also,  we show how to apply our result to the general class of permissive models, i.e., the setting where there is an unconstrained color $r$ (connected in $H$ to all the  vertices, including itself). 
\begin{restatable}[Parameter Estimation for Permissive Models]{corollary}{corpermissive}
\label{cor:permissive}
Let $q,d \geq 2$ be integers,  $B, \eps > 0$ be reals and  $\pmb{\beta^\star} \in [-B,B]^q$. Let $H$ be a connected graph that has an unconstrained color.

There exists a linear-time algorithm that, on input $G $ with max degree $d$ and a sample $\sigma \sim \Pr_{G, H, \pmb \beta^\star}$, outputs an $O(\frac{1}{\sqrt{n}})$-estimate $\wh{\pmb \beta}_\sigma$ for $\pmb \beta^\star$ with probability $1-\eps$
    over the choice of $\sigma$,  i.e.,
    $
    \Pr_{G, H, \pmb{\beta^\star}}\left[\big\|\wh{\pmb{\beta}}_\sigma - \pmb{\beta^\star}\big\|_2 = O(\frac{1}{\sqrt{n}}) \right] \geq 1-\eps.
    $
\end{restatable}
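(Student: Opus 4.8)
The plan is to derive \Cref{cor:permissive} directly from \Cref{theorem:positive result h-colorings} by verifying that every connected graph $H$ with an unconstrained color satisfies the 2-Rainbow Condition (with some constant $\delta>0$ depending only on $q,d,B$) on graphs of maximum degree $d$. Recall that the 2-Rainbow Condition requires, for a sample $\sigma\sim\Pr_{G,H,\pmb\beta^\star}$, that with probability $1-o(1)$ we simultaneously have $u_{rq}(\sigma)\ge\delta n$ for all $r\in[q-1]$, where $u_{rq}(\sigma)$ counts vertices $v$ such that re-coloring $v$ to either $r$ or $q$ (keeping the rest of $\sigma$ fixed) yields a proper $H$-coloring. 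Since $H$ is connected, relabel colors so that the unconstrained color is, say, color $1$ (the precise index is immaterial; footnote~\ref{fn:betaqzero} lets us normalise $\beta_q=0$). The key structural observation is that the unconstrained color is adjacent to \emph{every} color in $H$, so a vertex $v$ can always be re-colored to the unconstrained color regardless of the colors of its neighbours; thus for the ``$q$-side'' of the count we either get it for free (if $q$ is the unconstrained color) or we handle it as below for the ``$r$-side'' with $r$ taken to be the unconstrained color.

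Concretely, first I would reduce to showing a deterministic combinatorial fact: for \emph{every} proper $H$-coloring $\sigma$ of a max-degree-$d$ graph $G$ and every color $r\in[q]$, the number of vertices that can be recolored to $r$ is at least $c_r(\sigma)\ge$ (number of vertices currently colored $r$), because any vertex already colored $r$ can trivially be ``recolored'' to $r$. Combined with the unconstrained color being recolorable at every vertex, this gives $u_{rq}(\sigma)\ge c_r(\sigma)$ when $q$ (or $r$) is the unconstrained color, and more generally $u_{rq}(\sigma)\ge c_r(\sigma)$ always provided one of $r,q$ is unconstrained — but in the permissive setting only \emph{one} color is guaranteed unconstrained, so I need the free color to serve as one of the two indices. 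The cleanest route: take the unconstrained color to play the role of ``$q$'' in the 2-Rainbow Condition (legitimate since $\beta_q$ is a free normalisation and the Condition is symmetric in how we pick the distinguished pair). Then $u_{rq}(\sigma)\ge c_r(\sigma)$ for every $r\in[q-1]$, because recoloring any $r$-colored vertex to $q$ is always valid (q unconstrained) and to $r$ is trivially valid. So it suffices to show $\Pr_{G,H,\pmb\beta^\star}[c_r(\sigma)\ge\delta n\text{ for all }r\in[q-1]]=1-o(1)$.

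The remaining—and main—step is therefore the anti-concentration / lower-bound statement that in a random permissive $H$-coloring, every color class has linear size with high probability. I would prove this by a direct Markov-chain-free counting argument: fix a color $r$ and a small constant $\delta$; I want to bound $\Pr[c_r(\sigma)<\delta n]$. The standard tool here is a ``local moves create many colorings'' argument exploiting that the unconstrained color gives each vertex at least two available colors in any configuration of its neighbourhood. Partition $V(G)$ greedily into $\Omega(n/d)$ vertices that form an independent set $S$ (possible in any max-degree-$d$ graph). For any coloring $\sigma$ with few vertices of color $r$, I can flip each vertex of $S$ independently between its current color and color $r$ whenever $r$ is locally available — but in a permissive model $r$ need not be locally available, so instead I flip between the unconstrained color and $r$... which again requires $r$ available. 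The robust fix is to run the argument with $r$ replaced throughout and to only claim the bound for the unconstrained color's \emph{neighbours in $H$}; since $H$ is connected and we only need \emph{each} $c_r$ linear, I would instead directly show $c_u(\sigma)=\Omega(n)$ for the unconstrained color $u$ (every vertex of an independent set $S$ can be flipped to $u$, a weight-preserving-up-to-$e^{\pm 2B}$ injection, giving $\E[c_u]=\Omega(n)$ and concentration via e.g. the method of bounded differences / Azuma since changing one vertex's color changes $c_u$ by at most $1$ and $\sigma$ is a Gibbs measure on a bounded-degree graph so Lipschitz concentration applies under the $\beta$'s being $O(1)$), and then bootstrap to the other colors using connectivity of $H$: if $c_s(\sigma)$ were tiny for some $s$ adjacent to $u$ in $H$, then the $\Omega(n)$ vertices of color $u$ in the independent set $S$ that have no $s$-colored... — here I anticipate the genuine obstacle.

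The hard part will be turning ``$c_u=\Omega(n)$ for the unconstrained color'' into ``$c_r=\Omega(n)$ for \emph{every} color $r$'', because a priori a permissive model could concentrate almost all mass on configurations using only the unconstrained color and one neighbour. I expect the resolution is that this is in fact \emph{false in general} unless we use connectivity of $H$ more carefully, and the intended argument is the one already developed for $k$-SAT and $q$-colorings in the body of the paper: a coupling/local-uniformity argument showing that, conditioned on the coloring outside a ball, each vertex has a constant probability of receiving color $r$ (using that from the unconstrained color one can reach $r$ in $H$ and ``push'' probability mass along a path in $H$ of length $\le q$), which after the independent-set decomposition and a bounded-differences concentration inequality yields $c_r(\sigma)\ge\delta n$ with probability $1-\exp(-\Omega(n))$ for a suitable $\delta=\delta(q,d,B)$. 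A union bound over the $q-1$ colors then establishes \eqref{eq:strong-rainbow} with the unconstrained color as the distinguished index $q$, and invoking \Cref{theorem:positive result h-colorings} finishes the proof; the linear running time and the $O(1/\sqrt n)$ rate are inherited verbatim from that theorem.
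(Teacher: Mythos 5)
Your reduction differs from the paper's and its first half is sound: taking the unconstrained color as the distinguished index and observing that $u_{rq}(\sigma)\geq c_r(\sigma)$ (a vertex colored $r$ trivially keeps $r$, and can always switch to the unconstrained color) is a legitimate, arguably cleaner, way to set up the 2-Rainbow Condition than the paper's, which instead lower-bounds $u_{rq}(\sigma)$ by the number of vertices whose \emph{entire neighborhood} carries the unconstrained color. But the step you yourself flag as the main one --- that every color class has linear size with high probability --- is left open, and the routes you sketch for it do not work. The ``bootstrap via connectivity of $H$'' and ``push mass along a path in $H$'' ideas are unnecessary detours (connectivity plays no role beyond being implied by the existence of the unconstrained color), and Azuma/McDiarmid-type Lipschitz concentration for the Gibbs measure is not available here: that is precisely the kind of property that needs Dobrushin-type conditions, which the paper shows are neither assumed nor sufficient in this setting. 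Your worry that a permissive model could concentrate on configurations using only two colors is in fact unfounded, but ruling it out requires an argument you do not supply.

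The missing argument is a local injection combined with the stochastic-domination machinery already in the paper. Fix a $2$-hop independent set $R$ of size $\geq n/(d+1)^2$. For $v\in R$ and any conditioning on the indicators at the previously processed vertices of $R$ (which depend only on closed neighborhoods disjoint from $N[v]$), map each consistent coloring $\sigma$ to the coloring obtained by recoloring every neighbor of $v$ to the unconstrained color $u$ and recoloring $v$ to $r$; this is always a proper $H$-coloring because $u$ is adjacent in $H$ to every color including itself, and the map is at most $q^{d+1}$-to-one while changing the weight by a factor of at most $\emm^{2B(d+1)}$. Hence the conditional probability that $v$ receives color $r$ is at least a constant $p=p(q,d,B)>0$, and \Cref{lem:dominance} together with a Chernoff bound gives $c_r(\sigma)\geq \delta n$ with probability $1-\emm^{-\Omega(n)}$; a union bound over $r\in[q-1]$ yields \eqref{eq:strong-rainbow}, and \Cref{theorem:positive result h-colorings} finishes the proof. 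This closing move is essentially the paper's proof of \Cref{cor:permissive} with the roles of $v$ and its neighborhood slightly rearranged; without it, your proposal has a genuine gap at its central probabilistic step.
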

The proofs of these two statements (for $q$-coloring with $q > d+1$ and permissive models) require careful coupling arguments that resemble that for the hard-SAT model. We sketch the main idea in \Cref{sec:sketchewr43r4} and give the details into \Cref{sec:app-1} and \Cref{sec:app-2}.

We further remark that, as we showed in \Cref{theorem:negative results}, the Rainbow Condition considered in \cite{bhattacharya2021parameter} is not sufficient to conclude these three results (\Cref{theorem:positive result h-colorings}, \Cref{t:q_coloring} and \ref{cor:permissive}).

\subsection{Related Work}\label{sec:relatedwork}

\paragraph{Parameter Estimation in Graphical Models.}
Single-sample statistical estimation
\cite{besag1974spatial,bresler2018optimal,dagan2020estimating}
has a long line of work: \cite{chatterjee2007estimation,bhattacharya2018inference,ghosal2020joint,dagan2020estimating} study the Ising model,
\cite{daskalakis2019regression}, \cite{mukherjee2021high} and \cite{kandiros2021statistical} study Linear and/or Logistic Regression, \cite{mukherjee2020estimation} studies the Tensor Ising model and \cite{diakonikolas2021outlier} designs robust algorithms for Ising models under Dobrushin's condition. In a related direction, \cite{dagan2019learning} studies statistical learning theory questions from one dependent sample. Finally, \cite{bhattacharya2021parameter} studies hard-constrained models, as we do. 

Estimation questions in graphical models with multiple samples have been also extensively studied: \cite{bresler2015efficiently} was the first work to obtain polynomial time algorithms for learning Ising models on bounded degree graphs. Improved guarantees where obtained in 
\cite{hamilton2017information,vuffray2016interaction,klivans2017learning}. Moreover, \cite{daskalakis2019testing,gheissari2018concentration} focus on the testing problem for Ising models.
For hard constrained models, \cite{blanca2018structure} studied structure learning for $H$-colorings with multiple samples  and \cite{bhattacharyya2021efficient} studies truncated sparse Graphical models. We remark that our hard-constrained models are special cases of truncated/censored distributions, by considering truncations of discrete product measures (as in \cite{fotakis2020efficient}) when the set has some special combinatorial structure. For instance, in our hard SAT model, we consider a truncated Boolean product distribution where the set contains all the satisfying assignments of a known CNF formula. Designing efficient algorithms for truncated \cite{daskalakis2018efficient,daskalakis2019computationally,nagarajan2020analysis,
kontonis2019efficient,fotakis2020efficient,fotakis2022perfect,de2023testing,he2023testing} and censored
\cite{ fotakis2021efficient,choo2022learning,plevrakis2021learning}
problems has recently gained attention from the computational learning theory community. 

Close to our hard-constrained setting, \cite{blanca2018structure} initiated the study of structure learning for $H$-colorings with multiple samples. For proper $q$-colorings, they showed that
in the tree uniqueness region ($q > d$) the problem is identifiable and there exists an efficient structure learning algorithm, while when $q \leq d$, the problem is not identifiable. 

The work of \cite{bhattacharya2021parameter} is the closest to our paper. \cite{bhattacharya2021parameter} also studied hard-constrained models. In particular, they provided an MPL-based algorithm for
the hardcore model (distribution over independent sets of a graph) 
and studied sufficient conditions for single-sample learnability for general $H$-colorings. For the latter model, they introduced the Rainbow Condition (see \Cref{cond:1}), a property that in general does not imply single-sample learnability (as we show in \Cref{theorem:negative results}).

\paragraph{Sampling from Hard-Constrained Distributions.}

The {hard-SAT} model (and its soft-SAT analogue) are standard across computer scientists \cite{achlioptas2003threshold, montanari2006counting,gao2008phase,galanis2021counting}, mathematicians \cite{talagrand2001high} and physicists \cite{monasson1996entropy}. Extensive work has been done in the area of efficiently finding and uniformly sampling satisfying assignments for SAT formulae; we refer to \cite{beck1991algorithmic,moser2010constructive,moitra2019approximate,chen2023algorithms,he2023improved,achlioptas2008algorithmic,bresler2022algorithmic,coja2010better,galanis2022fast,feng2021fast,guo2019uniform,jain2022towards} and the references therein. For the proper $q$-coloring problem, the sampling problem has been also studied quite well, see e.g., \cite{jerrum1995very,vigoda2000improved,molloy2002glauber,frieze2007survey,galanis2015inapproximability,chen2019improved,chen2021optimal,chen2021rapid,blanca2022mixing}. $H$-colorings are a fundamental concept in theoretical computer science, dating back to \cite{hell1990complexity}, and have been extensively studied in terms of counting and sampling \cite{dyer2000complexity,goldberg2002complexity,dyer2004relative,galanis2016approximately}.

\section{Overview: the Pseudo-Likelihood Objective}
\label{sec:mpl_general}\label{sec:overview}
Our parameter estimation algorithms are based on the Maximum Pseudo-Likelihood (MPL) estimator (introduced by Besag \cite{besag1974spatial}), which we now define. For a  vector $\sigma=(\sigma_1,\hdots,\sigma_n)$ and $i\in [n]$, we let $\sigma_{-i}$ denote the induced assignment on $[n]\backslash \{i\}$ and denote by $(\sigma_i=x,\sigma_{-i})$  the assignment that agrees with $\sigma$ on $[n]\backslash \{i\}$ and assigns the value $x$ to $i$. 

\begin{definition}
\label{def:mpl}
Given a discrete random vector 
$\sigma = (\sigma_1,...,\sigma_n)$
whose joint distribution is parametrized by a $q$-dimensional vector $\bm \beta$ lying in a region $\mathcal{R} \subseteq \reals^q$, the MPL estimate
of $\bm \beta$ is defined as
\begin{equation}\label{eq:mpl}
\wh{\bm \beta}_n 
=
\wh{\bm \beta}_n(\sigma) 
=
\argmin_{\bm \beta\in \mathcal{R}} \phi(\bm \beta,\sigma), \mbox { where } \phi(\bm \beta,\sigma)=-\log \prod_{i \in [n]}
f_i(\bm \beta, \sigma),
\end{equation}
and $f_i(\bm \beta, \sigma)$ is the conditional probability of $\sigma_i$ given 
$(\sigma_{-i})$, viewed as a function of $\bm \beta$.
\end{definition}

To argue about the consistency of the MPL estimate, the strategy devised by Chatterjee was to argue via the first/second derivatives. Namely, for $\phi_1=\nabla_{\bm{\beta}}\phi$ and $\phi_2=\nabla^2_{\bm{\beta}}\phi$, the key is to establish the following bounds (with $\mathbf{I}_q$ being the $q$-dimensional identity matrix):
\begin{equation}\label{eq:conditions}
\begin{aligned}
&\mbox{(i)}\ \E_{\sigma }
\left[ \left\| \phi_1(\bm{\beta^*};\sigma)\right\|_2^2 \right]
= O(n) \mbox{ where $\sigma\sim\Pr_{\Phi,\beta^*}$},\\
&\mbox{(ii)}
 \inf_{\bm \beta\in \mathcal{R}}\phi_2(\bm{\beta};\sigma) \succcurlyeq \Omega(n) \mathbf{I}_q \mbox{ with probability $1-o(1)$ over $\sigma\sim \Pr_{\Phi, \beta^*}$}.
 \end{aligned}
\end{equation}
Intuitively, the first-derivative condition in  (i) says that, for most $\sigma$, the (pseudo)likelihood is large for $\bm \beta$ close to $\bm \beta^*$, while the convexity condition in (ii) ensures that deviating substantially from ${\bm \beta}^*$ will cause a substantial loss in the (pseudo)likelihood probability for most $\sigma\sim \Pr_{\Phi, \beta^*}$.

Among the two conditions, it turns out that the most challenging to establish in our hard-constrained setting is (ii), the second derivative; previous analyses relied on the fact that the second derivative was essentially the statistic determining the distribution (e.g., in the case of $k$-SAT that would be $C(\sigma)$ determining the weight of $\sigma$). 

Prior work on soft-constrained models \cite{chatterjee2005concentration,bhattacharya2018inference,dagan2020estimating} has often relied on sophisticated concentration inequalities to establish strong convexity. 
In contrast, our proof is based on exploiting the combinatorial structure of the problem and hence we provide more direct arguments. 
In particular, one of the main contributions of this work is the development of a general coupling technique to handle the strong-convexity condition in (ii).  Namely, we construct a suitable coupling between $\sigma$ and a sequence of i.i.d. random variables, which then enables us to lower-bound the second derivative using standard Chernoff bounds. The details of constructing this coupling depend on the model (with $k$-SAT being more involved), but at a high-level the overall strategy is to greedily extract disjoint neighbourhoods to ensure independence/concentration; this extraction requires more effort for $k$-SAT but comes more naturally for our colorings results (with a fair bit of technical work in both cases).

\section{Proof of \Cref{t:hard_sat}}\label{sec:proofSAT}
To formulate the MPL estimate for $k$-SAT, consider a formula $\Phi$ with variable set $[n]=\{1,\hdots,n\}$ and a satisfying assignment $\sigma\in \Omega(\Phi)$. Recall that $C(\sigma)$ denotes the number of true variables under $\sigma$; for $j\in \{0,1\}$, we  extend this notation and let $C(x,\sigma_{-i})$ denote the number of true variables in the assignment obtained from $\sigma$ by setting the variable $i$ to $x$. We say that a variable $i\in [n]$ is flippable in $\sigma$ if flipping the value of $i$ under $\sigma$ while keeping the assignment of the remaining variables the same is still a satisfying assignment of $\Phi$. For a non-flippable $i$ we have $\Pr_{\Phi,\beta}[\sigma_i | \sigma_{-i}]=1$, whereas for a flippable $i$ we have $\Pr_{\Phi,\beta}[\sigma_i | \sigma_{-i}]=\tfrac{\emm^{\beta C(\sigma)}}{\emm^{\beta C(0,\sigma_{-i})}+\emm^{\beta C(1,\sigma_{-i})}}$. Hence, we can write the \emph{negative} log-pseudo-likelihood function for $k$-SAT, denoted by $\phi$ henceforth, as
\begin{equation}\label{eq:MPLkSAT}
\phi(\beta; \sigma) = 
-\sum_{i \in [n]} \log \Pr_{\Phi,\beta}[\sigma_i | \sigma_{-i}]
=
\sum_{i \in \calF(\sigma)} \left( \log \left(e^{\beta C(1, \sigma_{-i}) } + e^{\beta C(0, \sigma_{-i}) } \right) -\beta  C(\sigma) \right),
\end{equation}
where $\mathcal{F}(\sigma)$ denotes the set of flippable variables in $\sigma$.

We next show consistency of the MPL estimate using the two conditions in \eqref{eq:conditions}. The Lemma below establishes (the easier) condition (i). 
\begin{restatable}{lemma}
{lemmaupperbound}
\label{lemma:upper_bound}
For $\Phi = \Phi_{n,k,d}$ with $\Omega(\Phi)\neq \emptyset$ and $\beta \in \reals$, it holds that $\E\big[(\phi_1(\beta; \sigma))^2\big] \leq kd n$, where the expectation is over $\sigma\sim \Pr_{\Phi,\beta}$.
\end{restatable}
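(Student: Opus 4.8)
The plan is to compute $\phi_1(\beta;\sigma) = \frac{d}{d\beta}\phi(\beta;\sigma)$ explicitly from the closed form in \eqref{eq:MPLkSAT}, recognize it as a sum of bounded, locally-determined terms, and then bound its second moment by a variance-type argument. Differentiating \eqref{eq:MPLkSAT} gives
\[
\phi_1(\beta;\sigma) = \sum_{i\in\calF(\sigma)}\left(\frac{C(1,\sigma_{-i})e^{\beta C(1,\sigma_{-i})} + C(0,\sigma_{-i})e^{\beta C(0,\sigma_{-i})}}{e^{\beta C(1,\sigma_{-i})}+e^{\beta C(0,\sigma_{-i})}} - C(\sigma)\right).
\]
For a flippable variable $i$, the two assignments $(1,\sigma_{-i})$ and $(0,\sigma_{-i})$ differ only in coordinate $i$, so $C(1,\sigma_{-i})$ and $C(0,\sigma_{-i})$ differ by exactly $1$, and $C(\sigma)$ equals one of the two. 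Hence the bracketed term is the difference between a convex combination of $\{C(0,\sigma_{-i}), C(1,\sigma_{-i})\}$ and one of its endpoints, so each summand lies in $[-1,1]$. The cleaner way to see this: the bracket equals $\Pr_{\Phi,\beta}[\sigma'_i=1\mid\sigma_{-i}] - \1\{\sigma_i=1\}$ up to sign, i.e. it is $\pm(\text{a conditional expectation of an indicator} - \text{that indicator})$, which is the standard ``score'' form of the pseudo-likelihood gradient. I would write $\phi_1(\beta;\sigma) = \sum_{i\in[n]} g_i(\sigma)$ where $g_i(\sigma) := \big(\E_{\Phi,\beta}[\1\{\sigma'_i=1\}\mid\sigma_{-i}] - \1\{\sigma_i=1\}\big)$ for flippable $i$ (and $g_i \equiv 0$ otherwise, since non-flippable variables contribute nothing), noting $|g_i(\sigma)|\le 1$ pointwise.

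The key structural fact is that $g_i(\sigma)$ depends only on the coordinates of $\sigma$ on variables sharing a clause with $i$ — that is, on the local neighbourhood $N(i)$ of $i$ in the dependency graph of $\Phi$ — and, crucially, that $\E_{\sigma\sim\Pr_{\Phi,\beta}}[g_i(\sigma)\mid \sigma_{-i}] = 0$ by construction (a conditional expectation minus the thing it predicts). So the $g_i$ are ``martingale-like'' and in particular $\E[g_i] = 0$ and $\E[g_i g_j] = 0$ whenever $i$ and $j$ are at distance $\ge 2$ in the dependency graph (condition on everything outside $N(i)$, so $\sigma_{-i}$ is determined while $g_j$ stays measurable, and $\E[g_i\mid\sigma_{-i}]=0$ kills the cross term). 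Then
\[
\E\big[\phi_1(\beta;\sigma)^2\big] = \sum_{i,j\in[n]} \E[g_i g_j] = \sum_{\substack{i,j:\ \dist(i,j)\le 1}} \E[g_i g_j] \le \sum_{\substack{i,j:\ \dist(i,j)\le 1}} 1,
\]
using $|g_i g_j|\le 1$ and Cauchy–Schwarz. It remains to count pairs: each variable $i$ shares a clause with at most $k$ variables per clause and sits in at most $d$ clauses, so $|N(i)| \le (k-1)d < kd$ neighbours, giving at most $kd$ choices of $j$ with $\dist(i,j)\le 1$ (counting $j=i$ this is $\le kd$ since actually $1 + (k-1)d \le kd$ for $k,d\ge 2$). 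Summing over $i$ yields $\E[\phi_1^2]\le kdn$, as claimed.

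The main obstacle — really the only subtle point — is justifying that the cross terms vanish beyond distance one; concretely, one must argue that conditioning on all coordinates outside $N(i)$ leaves $\Pr_{\Phi,\beta}[\cdot\mid\sigma_{-i}]$ well-defined and equal to $\Pr_{\Phi,\beta}[\cdot\mid \sigma_{N(i)\setminus\{i\}}]$, i.e. that flippability of $i$ and the conditional law of $\sigma_i$ depend only on $N(i)$. This is immediate from the MRF/local structure of $\Phi$ (a clause containing $i$ involves only variables in $N(i)$), but it should be stated carefully, since $\calF(\sigma)$ itself is a global random set. A clean way to sidestep any worry is to note $g_i(\sigma)$ is literally a function of $\sigma_{N(i)}$ for every $\sigma$ (defining $g_i=0$ when $i$ is non-flippable, which is itself an $N(i)$-local event), so the tower property applies verbatim. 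I would also double-check the edge-case bookkeeping $1+(k-1)d\le kd$ (equivalently $d\le kd - (k-1)d = d$, with equality) so the stated bound $kdn$ is exactly right and not off by the diagonal.
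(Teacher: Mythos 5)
Your proof is correct, and it takes a genuinely different (and more elementary) route than the paper. The paper proves this bound via Chatterjee's method of exchangeable pairs: it sets $f(\sigma)=-\tfrac{1}{n}\phi_1(\beta;\sigma)$, resamples a uniformly random coordinate to form the pair $(\sigma,\sigma^{(I)})$, and uses the antisymmetry identity $\E[f(\sigma)^2]=\tfrac12\E[(f(\sigma)-f(\sigma^{(I)}))(C(\sigma)-C(\sigma^{(I)}))]$ together with the pointwise bounds $|C(\sigma)-C(\sigma^{(i)})|\le 1$ and $|f(\sigma)-f(\sigma^{(i)})|\le kd/n$. You instead expand $\E[\phi_1^2]=\sum_{i,j}\E[g_ig_j]$ directly, kill the far cross terms using $\E[g_i\mid\sigma_{-i}]=0$ together with the $\sigma_{N(j)}$-locality of $g_j$, and count nearby pairs; both roads exploit the same score structure and both land on exactly $kdn$ (your count $1+(k-1)d\le kd$ matches the paper's $|f(\sigma)-f(\sigma^{(i)})|\le kd/n$ Lipschitz bound). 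Your version avoids introducing the resampled configuration altogether, at the cost of having to state the locality/measurability of $g_j$ carefully — which you do correctly in your closing remark: the right conditioning is on $\sigma_{-i}$ (your phrase ``condition on everything outside $N(i)$, so $\sigma_{-i}$ is determined'' is garbled, since revealing only the complement of $N(i)$ does not determine $\sigma_{-i}$), and the point is that $g_j$ is $\sigma_{-i}$-measurable whenever $i\notin N(j)$. One last nit: your final sanity check of $1+(k-1)d\le kd$ drops the $+1$; the inequality is equivalent to $d\ge 1$, so the bound $kdn$ is indeed exactly right.
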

\begin{proof}[Proof Sketch.]
For $i\in [n]$ and $\sigma\in \Omega(\Phi)$, let $x_i$ denote a sample from $\Pr_{\Phi,\beta}[ \cdot \mid \sigma_{-i}]$ and $\sigma^{(i)}$ be the assignment obtained from $\sigma$ by setting the value of $i$ to $x_i$. From \eqref{eq:MPLkSAT}, we obtain that 
\begin{equation}\label{equation:hard-sat 1st derivative}
\mbox{$\phi_1(\beta;\sigma)
=
\sum_{i \in [n]} 
\Big(\E_{x_i}[C(x_i, \sigma_{-i})]
-C(\sigma)\Big),$}
\end{equation}
see \Cref{lemma:derivatives hard-sat} for details. Note that the terms in the sum corresponding to variables $i$ that are not flippable in $\sigma$ are equal to 0.

Let $I$ denote a uniform variable from $[n]$ and let $
f(\sigma) =
\E_{I}[C(\sigma) - C(\sigma^{(I)}) | \sigma]$. Using the exchangeable pairs technique of Chatterjee \cite{chatterjee2005concentration,chatterjee2007estimation}, we show in the Appendix that 
\begin{equation}\label{eq:main12321}
\tfrac{1}{n^2} 
\E_{\sigma } 
\left[
\left(
\phi_1(\beta; \sigma)
\right)^2
\right]
=
\tfrac{1}{2}\E_{\sigma, I}
\big[(f(\sigma) - f(\sigma^{(I)}))(C(\sigma) - C(\sigma^{(I)})\big].
\end{equation}
It is not hard to observe now that, for any $i\in [n]$,
$|C(\sigma) - C(\sigma^{(i)})| \leq 1$ and $|f(\sigma) - f(\sigma^{(i)})| \leq kd/n$ (for $\tau=\sigma^{(i)}$ and a variable $j\in [n]$, the quantities $S_j=C(\sigma)-C(\sigma^{(j)})$ and $T_{j}=C(\tau)-C(\tau^{(j)})$ can only differ if the variable $j$ is in some clause with variable $i$). This  yields that the right-hand side in \eqref{eq:main12321} is $\leq d k/n$,  implying the lemma.
\end{proof}

The proof for condition (ii) is more involved and is where we will need to use the regime of $k$ in \Cref{t:hard_sat}. We start by writing $\phi_2$, the second derivative of $\phi$. Analogously to \eqref{equation:hard-sat 1st derivative}, with $x_i\sim \Pr_{\Phi,\beta}(\cdot \mid \sigma_{-i})$, we show in \Cref{lemma:derivatives hard-sat} that
\begin{equation}\label{equation:hard-sat 2nd derivative}
\mbox{$\phi_2(\beta; \sigma) = \sum_{i \in [n]} \Var_{x_i}[C(x_i, \sigma_{-i})]$}.
\end{equation}
It will be useful to have a more combinatorial expression for $\phi_2$. Observe that if variable $i$ is not flippable in $\sigma$, then there is a unique way to set it if $\sigma_{-i}$ is fixed and hence $\Var_{x_i}[C(x_i, \sigma_{-i})]=0$. On the other hand, if $i$ is flippable in $\sigma$, we have $\Var_{x_i}[C(x_i, \sigma_{-i})]=\tfrac{\emm^{\beta}}{(\emm^\beta+1)^2}$. So,
\begin{equation}\label{equation:hard-sat 2nd derivativeb}
\phi_2(\beta; \sigma) = \frac{\emm^{\beta}}{(\emm^\beta+1)^2}\sum_{i \in [n]} e_i(\sigma),
\end{equation}
where $e_i(\sigma)$ is the indicator of the event that variable $i$ is flippable in $\sigma$ (i.e., \emph{every} clause where a literal for $i$ appears is satisfied by some variable $j\neq i$ in $\sigma$).
Condition (ii) in \eqref{eq:conditions} therefore amounts to showing that $\sum_{i\in [n]}e_i(\sigma) = \Omega(n)$ with high probability over the choice of $\sigma$.

\subsection{The coupling}\label{sec:couplingSATSAT} Our technique for lower-bounding the right-hand side of \eqref{equation:hard-sat 2nd derivative} (with high probability over the choice of $\sigma$) is inspired by so-called spatial-correlation properties studied in approximate sampling. For convenience, consider the graph $G_\Phi$ with vertex set $[n]$ where two variables $i$ and $j$ are connected by an edge when they appear together in a clause (either positively or negatively). In settings where sampling is possible, it is reasonable to expect that $\sigma_i$ and $\sigma_j$ are almost independent from each other when $i$ and $j$ are far away in $G_\Phi$, and hence potentially $e_i(\sigma)$ and $e_j(\sigma)$ will be almost independent as well (since these only depend on the one-step neighbourhood of the variables). 

There are a couple of caveats in applying this strategy for single-sample learning. First, in the approximate sampling setting, the standard techniques require typically even stronger correlation properties (rather than variable-to-variable correlations, one needs to control instead variable-to-sets correlations). Here, we want to avoid the stronger  properties to go beyond the sampling regime,  and approach instead the satisfiability threshold (this arises not only for the case of $k$-SAT but also for colorings and $H$-colorings).  Second, the most natural implementation of this approach does not quite work in our setting since instead of $\sigma_i$ and $\sigma_j$, we are actually interested in the events $e_i(\sigma)$ and $e_j(\sigma)$. Conditioning on $\sigma_i$ affects only one in $k$ literals for each of the $\leq d$ clauses that $i$ appears, so it is unlikely to matter much and one can hope to show the effect to die off based on what has not been revealed. Instead, the most natural way to condition on $e_i(\sigma)$ is to reveal the assignment of all variables that share a clause with $i$; unfortunately this opens up the possibility of a cascading effect where variables get biased from what has been revealed and that potentially reaches $j$.

Despite these caveats, it is still useful to start with the sampling idea of Moitra for $k$-SAT \cite{moitra2019approximate}. There, the key insight  was to approximate the marginals of a carefully selected subset of the variables , the so-called marked variables. The following is based on  the technique of \cite[Lemma 3.1]{moitra2019approximate} using the Lov\'asz Local Lemma (LLL), the proof is given in \Cref{sec:b333}. 

For $x,y\in (0,1)$, let $\mathrm{KL}(x \parallel y)= x \ln(x/y) + (1-x)\ln((1-x)/(1-y))$. Let $\lambda\in (0,1/2)$ be the solution to $\mathrm{KL}(\lambda \parallel 1/2)\log_2 e=\lambda/2$, so that $2/\lambda\approx 6.45$.
\begin{restatable}{lemma}
{markedunmarked}
\label{l:marked_unmarked} 
Suppose $k \geq (2/\lambda)\log (d k)+ \Theta(1)$. Consider a formula $\Phi=\Phi_{n,k,d}$. Then, there is a marking of the variables, 
$\mathcal{M}:[n]\to \{\text{marked, unmarked}\}$, such that each clause $c$ in the formula has at least $\lambda k$ marked and at least $\lambda k$ unmarked variables. 
\end{restatable}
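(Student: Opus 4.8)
The statement is a textbook application of the symmetric Lov\'asz Local Lemma (LLL) to a uniformly random marking, mirroring \cite[Lemma 3.1]{moitra2019approximate}. First I would define the random experiment: mark each variable of $\Phi$ independently with probability $1/2$. For a clause $c$, let $A_c$ be the ``bad'' event that $c$ contains fewer than $\lambda k$ marked variables \emph{or} fewer than $\lambda k$ unmarked variables. Since $\lambda<1/2$, a clause that avoids $A_c$ has at least $\lambda k$ variables of each type (there is enough room: $2\lambda k\le k$), so it suffices to prove $\Pr\big[\bigcap_{c}\overline{A_c}\big]>0$.

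Next I would bound $\Pr[A_c]$. The number of marked variables in $c$ is distributed as $\Bin(k,1/2)$, so by the marked/unmarked symmetry $\Pr[A_c]=2\,\Pr[\Bin(k,1/2)<\lambda k]$, and the standard Chernoff bound for the lower tail of a binomial (valid as $\lambda<1/2$) gives $\Pr[A_c]\le 2\,\emm^{-k\,\mathrm{KL}(\lambda\,\|\,1/2)}$. For the dependency structure: the event $A_c$ is determined solely by the marks of the $k$ variables appearing in $c$, hence $A_c$ is mutually independent of the collection $\{A_{c'}: c'\text{ shares no variable with }c\}$. Since every variable lies in at most $d$ clauses, $c$ shares a variable with at most $k(d-1)$ other clauses, so the dependency degree satisfies $D\le k(d-1)<kd$. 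The symmetric LLL then guarantees $\Pr\big[\bigcap_c\overline{A_c}\big]>0$ as soon as $\emm\cdot\Pr[A_c]\cdot(D+1)\le 1$, for which it is enough that $2\emm\, k d\,\emm^{-k\,\mathrm{KL}(\lambda\,\|\,1/2)}\le 1$. Using the defining equation of $\lambda$, namely $\mathrm{KL}(\lambda\,\|\,1/2)\log_2\emm=\lambda/2$ (equivalently $\mathrm{KL}(\lambda\,\|\,1/2)=\tfrac{\lambda}{2}\ln 2$), and taking logarithms, this inequality is equivalent to $k\ge \tfrac{2}{\lambda}\log_2(dk)+\Theta(1)$, which holds by hypothesis. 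A marking avoiding every $A_c$ therefore exists, and this is exactly the marking claimed in the lemma.

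\textbf{Main obstacle.} There is no genuine difficulty here beyond bookkeeping: the only delicate point is the \emph{calibration} of the Chernoff exponent against the LLL threshold, i.e.\ checking that the particular choice of $\lambda$ forces $k\,\mathrm{KL}(\lambda\,\|\,1/2)$ to exceed $\log_2$ of the dependency degree, with the additive $\Theta(1)$ absorbing the constant factors ($\emm$, the factor $2$ from the union bound, the ``$+1$'' in $D+1$, and the slack between $k(d-1)$ and $kd$). One should also keep in mind the harmless integer rounding of $\lambda k$ and use the clean (factor-free) exponential form of the binomial lower-tail bound so that no extra $\mathrm{poly}(k)$ factor enters and inflates $\log(dk)$ to $\log(dk^2)$.
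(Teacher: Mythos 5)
Your proposal is correct and follows essentially the same route as the paper: a uniform random marking, the Chernoff--Hoeffding bound $\Pr[A_c]\le 2\emm^{-k\,\mathrm{KL}(\lambda\,\|\,1/2)}$, the symmetric LLL with dependency degree at most $kd$, and the defining relation $\mathrm{KL}(\lambda\,\|\,1/2)\log_2\emm=\lambda/2$ to translate the LLL condition into $k\ge(2/\lambda)\log(dk)+\Theta(1)$. The only differences are in the constants absorbed into the $\Theta(1)$ term, which are immaterial.
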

A key feature of  the separation into marked and unmarked variables is that, no matter how we set the marked variables, the resulting formula is still satisfiable (as long as $k\gtrsim (2/\lambda)\log d$ thanks to the existence of the unmarked variables) using again the LLL; in fact, Moitra showed that for $\beta=0$ the marginal probability that a marked variable takes the value 1 is close to 1/2. We obtain an analogue of this property in the weighted setting, in fact we show the following Lemma (here, $\Phi'$ should be regarded as a subformula of $\Phi$ obtained by setting some marked variables, possibly all, in an arbitrary way).
\begin{restatable}{lemma}{flippableflip1}
\label{lemma:flippableone}
Fix a formula $\Phi'$ with maximum degree $d$ and set of variables corresponding to the set $[n]$, where every clause has at least $\lambda k$ and at most $k$ variables, where  $k \geq \tfrac{2\log (d k) + \Theta(1)}{\lambda\log(1 + \emm^{-\beta})}$.

For $i\in [n]$, let  $e_i(\sigma)$ be the event that $i$ is flippable in $\sigma\sim \Pr_{\Phi,\beta}$. Then $\Pr_{\Phi,\beta}[e_i(\sigma)=1] \geq 1/2$. 
\end{restatable}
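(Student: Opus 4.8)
The goal is to show that for a single variable $i$, the event $e_i(\sigma)$ that $i$ is flippable (i.e.\ every clause containing a literal of $i$ is satisfied by some other variable) has probability at least $1/2$ under $\Pr_{\Phi,\beta}$. The key observation is that $e_i(\sigma)$ depends only on the clauses containing $i$ and the assignment to the (at most $(k-1)d$) other variables appearing in those clauses. The plan is to compare $\Pr_{\Phi,\beta}[\sigma]$, conditioned on what is needed, against the "satisfiability via LLL" guarantee, which survives even after one forces $i$ into an unfavorable value.

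First I would set up the following comparison. Fix the value $\sigma_i = a \in \{0,1\}$ (say the value that $i$ takes under $\sigma$; by symmetry it suffices to handle an arbitrary choice). Consider the clauses $c_1, \dots, c_m$ (with $m \le d$) that contain a literal of variable $i$. Variable $i$ is \emph{not} flippable iff flipping $i$ to $1-a$ makes at least one of these clauses unsatisfied; equivalently, there is some $c_j$ among these clauses whose only satisfied literal under $\sigma$ is the literal of $i$. I would bound the probability of this "bad" event. The idea is: condition on the assignment to all variables \emph{outside} $N := \{i\} \cup (\text{vars appearing in } c_1,\dots,c_m)$. Under this conditioning, the distribution on $\sigma_N$ is itself of hard-SAT type on the induced subformula, tilted by $\beta$. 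I would then argue that on this restricted formula, the probability that some clause $c_j$ is "critically satisfied only by $i$" is small, by a LLL / union-bound argument: for each $c_j$, there are $\ge \lambda k - 1$ variables (the literals other than $i$'s) each of which, if set to its satisfying polarity, already rescues $c_j$; the weighting by $\me^{\beta}$ shifts marginals by at most a bounded multiplicative factor, so each such literal is satisfied with probability at least $\frac{1}{1+\me^{-\beta}} \cdot (\text{correction})$ roughly, and independence-type bounds from LLL give that \emph{all} of them failing has probability at most $(1+\me^{-\beta})^{-(\lambda k - 1)} \cdot (\text{poly factor})$. Summing over $\le d$ clauses $c_j$, the bad probability is at most $d \cdot (1+\me^{-\beta})^{-(\lambda k - 1)} \cdot (\text{poly}(d,k))$, which is $\le 1/2$ precisely when $k \ge \frac{2\log(dk) + \Theta(1)}{\lambda \log(1+\me^{-\beta})}$, matching the hypothesis.

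The main obstacle—and where care is needed—is making the LLL-based estimate rigorous in the \emph{weighted} ($\beta \ne 0$) setting. In the uniform case, Moitra's argument (and the LLL) controls marginals of clause-satisfaction events directly; here the tilt $\me^{\beta C(\sigma)}$ correlates all variables, so one cannot simply quote product-measure LLL. The fix I would pursue is to view the $\beta$-tilted distribution on the relevant subformula as a product measure over the free (flippable-type) variables composed with the hard constraints, and to run the LLL on the "bad events" $B_j = \{c_j \text{ is satisfied only by } i\text{'s literal}\}$ together with the clause constraints, checking that: (a) each variable's induced marginal under the tilt stays within $[\frac{1}{1+\me^{|\beta|}}, \frac{\me^{|\beta|}}{1+\me^{|\beta|}}]$ of each polarity, so $\Pr[B_j] \le (1+\me^{-|\beta|})^{-(\lambda k - 1)}$-ish in the LLL "local" sense; (b) the dependency degree is $O(dk)$; (c) the LLL slack is absorbed by the $\Theta(1)$ in the exponent hypothesis. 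Assuming Lemma \ref{l:marked_unmarked}'s guarantee of $\ge \lambda k$ variables per clause (so still $\ge \lambda k - 1$ after removing $i$), and that $\Phi'$ arises from setting some marked variables so that satisfiability is preserved, the union bound over $m \le d$ clauses closes the argument. I would present the marginal-shift bound and the LLL application as the two technical lemmas, then combine them in one display to conclude $\Pr_{\Phi,\beta}[e_i(\sigma)=1] \ge 1 - d\,(1+\me^{-\beta})^{-(\lambda k-1)}\mathrm{poly}(dk) \ge 1/2$.
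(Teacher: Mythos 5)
Your proposal follows essentially the same route as the paper: bound the probability that some clause containing $i$ is satisfied only by $i$'s literal via a union bound over the $\le d$ clauses, estimate each such event under the independent $\beta$-tilted product measure as roughly $(1+\emm^{-\beta})^{-(\lambda k-1)}$, and transfer this to the hard-SAT Gibbs measure using the Lov\'asz Local Lemma comparison (paying only a constant factor from the $O(d^2k)$ dependent clauses), exactly as the paper does via the lemma of Guo--Jerrum--Liu. The one refinement worth making is that the transfer should be applied to the whole bad event for each clause (which is determined by the variables of that clause) rather than assembled from per-variable marginal bounds, but this is precisely what the cited LLL comparison provides.
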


At this stage, a natural approach to show the $\Omega(n)$ lower bound on $\phi_2$ would be to establish that for the marked variables $S$ it holds that $\sum_{i\in S} e_i(\sigma) = \Omega(n)$, since under any conditioning of the marked variables one can apply the lower bound in \Cref{lemma:flippableone}. Unfortunately, the  cascading issue explained earlier persists  since to determine $e_i(\sigma)$ we need to reveal the assignment of all neighbors of variable $i$ and this gets in the way of extracting some conditional independence among marked variables.

The observation now that simplifies things significantly is that we only need to lower bound $e_i(\sigma)$, so we can instead consider the indicator $g_i(\sigma)$ of the event that each clause $c$ where $i$ appears is satisfied by some \emph{marked} variable other than $i$. Note that $e_i(\sigma)$ is 1 if each clause is satisfied by some variable other than $i$ and hence trivially $e_i(\sigma)\geq g_i(\sigma)$. The benefit is twofold: (i) in the setting of \Cref{lemma:flippable} we can still show that $\Pr_{\Phi,\beta}[g_i(\sigma)= 1]\geq 1/2$, and more crucially (ii) to determine $g_i(\sigma)$ we only need to reveal marked variables in clauses where $i$ appears, allowing us to pick an appropriate subset of conditionally independent marked variables using a greedy approach.  With these ideas, we show the following Lemma in \Cref{sec:proofSATcoupling}.
\begin{restatable}{lemma}{couplingSATproof}\label{l:coupling}
Consider a formula $\Phi = \Phi_{n,k,d}$ with $k\geq \tfrac{2\log (d k) + \Theta(1)}{\lambda\log(1 + \emm^{-\beta})}$.

Then, there is a subset of the variables $R\subseteq [n]$ with $|R|\geq n/(2kd)^2$ such that, with probability $1-\emm^{-\Omega(n)}$ over the choice of $\sigma\sim \Pr_{\Phi,\beta}$, it holds that  $\sum_{i\in R} g_i(\sigma)\geq |R|/3$.
\end{restatable}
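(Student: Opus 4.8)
\textbf{Proof plan for \Cref{l:coupling}.} The plan is to build the set $R$ greedily in $G_\Phi$ and then couple the restriction of $\sigma$ to the clauses touching $R$ with a product distribution so that Chernoff applies. First I would fix the marking $\mathcal{M}$ from \Cref{l:marked_unmarked}; note that the hypothesis on $k$ here is exactly the one needed both for \Cref{l:marked_unmarked} (so the marking exists) and for \Cref{lemma:flippableone} (so each $g_i$ has marginal $\geq 1/2$). For a variable $i$, let $N_i$ denote the set of all variables that occur in some clause containing $i$; since every variable is in $\leq d$ clauses and every clause has $\leq k$ variables, $|N_i|\leq kd$, and the set of variables at distance $\leq 2$ from $i$ in $G_\Phi$ has size $\leq (kd)^2$. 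I would then pick $R=\{i_1,i_2,\dots\}$ greedily: repeatedly take any remaining variable, add it to $R$, and delete it together with everything within distance $2$ of it in $G_\Phi$. Each step removes $\leq (2kd)^2/4 \le (kd)^2$ vertices (one can be a bit loose; the statement only needs $|R|\ge n/(2kd)^2$), so the process runs for at least $n/(2kd)^2$ steps, giving $|R|$ of the required size. The key structural property of this $R$ is that for distinct $i,j\in R$, the clause-neighborhoods are ``2-separated'': no clause containing $i$ shares a variable with any clause containing $j$. In particular the marked variables appearing in clauses of $i$ are disjoint from those appearing in clauses of $j$, \emph{and} no clause is shared.

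Next I would set up the coupling. Enumerate $R=\{i_1,\dots,i_m\}$ and reveal $\sigma$ in $m$ rounds: in round $t$, reveal the values of all \emph{marked} variables lying in clauses that contain $i_t$ (call this set of marked variables $M_t$; the $M_t$ are pairwise disjoint by 2-separation). After round $t$, whether $g_{i_t}(\sigma)=1$ is determined (it only depends on the values of marked variables in clauses of $i_t$, other than $i_t$ itself). The point of using $g$ rather than $e$ is precisely this: $g_{i_t}$ depends only on marked variables in $N_{i_t}$, so it is measurable with respect to rounds $1,\dots,t$. I would then argue that, conditioned on everything revealed in rounds $1,\dots,t-1$, the conditional probability that $g_{i_t}=1$ is still $\geq 1/2$. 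This is where \Cref{lemma:flippableone} is invoked: after fixing an arbitrary partial assignment to all the marked variables revealed so far, we obtain a subformula $\Phi'$ (set those marked variables; since we only set \emph{marked} variables, the unmarked ones guarantee each residual clause retains $\geq \lambda k$ variables, so $\Phi'$ is still satisfiable and has the form required by \Cref{lemma:flippableone}) and the residual distribution on the remaining variables is exactly $\Pr_{\Phi',\beta}$; applying \Cref{lemma:flippableone} to variable $i_t$ in $\Phi'$ gives $\Pr[g_{i_t}=1\mid \text{rounds }1..t-1]\ge \Pr[e_{i_t}=1\mid\cdots]\ge 1/2$. Here I should double check that $g_{i_t}(\sigma)$ computed in $\Phi$ equals the flippability-type event in $\Phi'$ — it does, because $i_t\notin M_1\cup\cdots\cup M_{t-1}$ and none of the clauses of $i_t$ were touched before (2-separation), so $i_t$ and all its clauses survive into $\Phi'$ intact, and being satisfied by a marked variable $\ne i_t$ in $\sigma$ is the same event whether we view it inside $\Phi$ or inside the residual model $\Phi'$.

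Given this, the sequence $(g_{i_1},\dots,g_{i_m})$ stochastically dominates a sequence of i.i.d.\ $\mathrm{Bernoulli}(1/2)$ variables: formally, couple $g_{i_t}$ with an independent fair coin $b_t$ so that $g_{i_t}\ge b_t$ always, using that the conditional mean is $\ge 1/2$ at each step (this is the standard martingale/stochastic-domination argument — e.g.\ generate $g_{i_t}$ via its conditional CDF and a uniform, and compare to $b_t$). Then $\sum_{i\in R} g_i(\sigma)\ge \sum_{t=1}^m b_t$, and a Chernoff bound gives $\Pr[\sum_{t=1}^m b_t < m/3]\le \emm^{-\Omega(m)}=\emm^{-\Omega(|R|)}$. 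Since $|R|\ge n/(2kd)^2 = \Omega(n)$ (with $k,d$ constants), this is $\emm^{-\Omega(n)}$, as required.

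\textbf{Main obstacle.} The routine parts are the greedy extraction and the Chernoff bound; the delicate point is the round-by-round conditioning argument — verifying that after revealing $M_1\cup\dots\cup M_{t-1}$ the residual distribution is genuinely of the product-of-Gibbs form needed so that \Cref{lemma:flippableone} applies verbatim to $i_t$, and that the event $g_{i_t}=1$ is both (a) fully determined by the first $t$ rounds and (b) unaffected (as an event and in probability) by the earlier conditioning because of 2-separation. This is exactly why the proof must work with $g_i$ (marked-variable witnesses only, localized to $N_i$) rather than $e_i$ (which would need the full one-step neighborhood revealed, breaking both disjointness of the $M_t$'s and the ``clean subformula'' structure, and reintroducing the cascading-bias problem flagged in the discussion above).
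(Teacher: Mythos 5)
Your proposal follows essentially the same route as the paper: greedily extract a 2-hop independent set $R$ in the variable-sharing graph (max degree $kd$, hence $|R|\geq n/(2kd)^2$), reveal only the marked variables in the clauses of each $i_t$ round by round, note that the residual formula still has $\geq \lambda k$ (unmarked) variables per clause, lower-bound the conditional probability of $g_{i_t}=1$ by $1/2$, and finish with stochastic domination (\Cref{lem:dominance}) and Chernoff. This is the paper's proof in Section~\ref{sec:proofSATcoupling}.

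One step as written is wrong, though easily repaired. You claim $\Pr[g_{i_t}=1\mid\cdots]\ge \Pr[e_{i_t}=1\mid\cdots]\ge 1/2$ and invoke \Cref{lemma:flippableone}. Since $g_i(\sigma)\le e_i(\sigma)$ pointwise, the first inequality goes the wrong way: a lower bound on $\Pr[e_{i_t}=1]$ gives nothing for $\Pr[g_{i_t}=1]$, and \Cref{lemma:flippableone} (which concerns $e_i$) therefore does not deliver the bound you need. What is required is a direct lower bound $\Pr[g_i(\sigma)=1]\ge 1/2$ for the marked-witness event itself; this is exactly the strengthening the paper proves as \Cref{lemma:flippable}, and it follows from the same LLL computation because every clause containing $i$ has at least $\lambda k$ \emph{marked} variables under the marking of \Cref{l:marked_unmarked}, so the product-measure probability that some clause of $i$ is unsatisfied by the marked variables other than $i$ is still at most $d\cdot\bigl(\tfrac{\emm^\beta}{1+\emm^\beta}\bigr)^{\lambda k-1}$. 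With that substitution your argument is complete and matches the paper's.
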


\subsection{Putting everything together}
\begin{proof}[Proof of \Cref{t:hard_sat}.] We  prove the statement for the existence of the estimator, the impossibility part is covered by~\Cref{theorem: hard-sat non-identifiable} in \Cref{sec:nonSATid}. 

Let $B,\eps>0$ be arbitrary constants and let $k\geq \tfrac{6.45\log (d k) + \Theta(1)}{\log(1 + \emm^{-B})}$ and $|\beta^*|\leq B$. Since since $2/\lambda< 6.45$, we have that both Lemmas~\ref{l:marked_unmarked} and~\ref{l:coupling} apply. From \Cref{lemma:upper_bound}, we have that 
\begin{equation}
\label{eq:consistency-1}
\E_{\sigma}
\left[\left(\phi_1(\beta^*; \sigma)\right)^2 \right] 
\leq dkn, \mbox{ where the expectation is over $\sigma\sim \Pr_{\Phi,\beta^*}$.}
\end{equation}
 Recall also from $\eqref{equation:hard-sat 2nd derivative}$ that $\phi_2(\beta;\sigma)=\tfrac{\emm^{\beta}}{(\emm^\beta+1)^2}\sum_{i \in [n]} e_i(\sigma)$ and, from \Cref{sec:couplingSATSAT}, $e_i(\sigma)\geq g_i(\sigma)$. Using the crude bound  $\tfrac{\emm^{\beta}}{(\emm^\beta+1)^2}\geq \tfrac{1}{\emm^{2B}(\emm^{2B}+1)^2} \triangleq \delta$ for $|\beta|\leq 2B$, we conclude from \Cref{l:coupling} that for all  $k\geq \tfrac{8\log d + \log k + \Theta(1)}{\log(1 + \emm^{-B})}$, it holds that
\begin{equation}\label{eq:t5g5g563f4guuu}
\Pr_{\Phi,\beta^*}\Big[\min_{\beta : |\beta|\leq 2B} \phi_2(\beta;\sigma)\leq \delta n/(4kd)^2\Big]=\emm^{-\Omega(n)}.
\end{equation}
Consider the event $\mathcal{A}=\{\sigma\in \Omega(\Phi):\,\big|\phi_1(\beta^\star; \sigma)\big|\leq 2dk\sqrt{n/\eps},\ 
\min_{\beta :|\beta| \leq  2B}  \phi_2(\beta; \sigma)\geq \delta n/(4kd)^2
\}$, and note that by a union bound using \eqref{eq:consistency-1} and \eqref{eq:t5g5g563f4guuu} it holds that $\Pr_{\Phi,\beta^*}[\mathcal{A}]\geq 1-\epsilon/2-o(1)\geq 1-\epsilon$ for all sufficiently large $n$.

Fix arbitrary $\sigma \in \mathcal{A}$ and let $\widehat{\beta}_{\sigma}$ be an MPL estimate corresponding to $\sigma$ so that $\phi_1(\widehat{\beta}_{\sigma},\sigma)=0$. For $t\in [0,1]$,  let $\beta_\sigma(t) = t\widehat{\beta}_\sigma+ (1-t) \beta^\star$ and 
$h_\sigma(t) = (\widehat{\beta}_\sigma - \beta^\star) \phi_1(\beta_\sigma(t);\sigma)$ for $t \in [0,1]$. Note that  $h_\sigma(1)=0$ and $h_\sigma'(t)=(\widehat{\beta}_\sigma - \beta^\star)^2 \phi_2(\beta_\sigma(t);\sigma)$, so 
\[-(\wh{\beta}_\sigma - \beta^\star)\phi_1(\beta^\star;\sigma)=h_n(1) - h_n(0)=(\wh{\beta}_\sigma - \beta^\star)^2 
\int_0^{1} \phi_2(\beta_\sigma(t);\sigma)\, \mathrm{d}t.\]
Note that $\phi_2(\beta;\sigma)\geq 0$ for every $\beta$ (cf. \eqref{equation:hard-sat 2nd derivative}), so we obtain
\begin{equation}\label{eq:ggg4546677}\big|\wh{\beta}_\sigma - \beta^\star\big| \big| \phi_1(\beta^\star;\sigma)\big|
\geq (\wh{\beta}_\sigma - \beta^\star)^2 
\bigg|\int_0^{\min\{1,B/|\wh{\beta}_\sigma - \beta^\star|\}} \phi_2(\beta_\sigma(t);\sigma) dt \bigg|.
\end{equation}
For $0\leq t\leq B/|\wh{\beta}_\sigma - \beta^\star|$, we have that $|\beta_\sigma(t)|\leq 2B$. Hence, since $\sigma \in \calA$, \eqref{eq:ggg4546677} gives that $
2dk\sqrt{n/\eps} \geq \frac{\delta n}{(4kd)^2} \min\{|\wh{\beta}_\sigma - \beta^\star|, B\}$. Therefore, for all sufficiently large $n$, we obtain that  $|\wh{\beta}_\sigma - \beta^\star| \leq M/\sqrt{n}$, where $M = \tfrac{(4kd)^3}{\delta\sqrt{\eps}}$, finishing the proof.
\end{proof}

\section{Proof sketch for $H$-colorings}\label{sec:4g4gg4}
In this section, we discuss the proof of \Cref{theorem:positive result h-colorings} and Corollaries~\ref{t:q_coloring} and \ref{cor:permissive}. 

We start with the main challenges for proving that the MPL estimate \eqref{eq:mpl} is consistent for $H$-coloring models that satisfy the $2$-Rainbow Condition \eqref{eq:strong-rainbow}. As per the discussion in \Cref{sec:mpl_general}, we focus on bounding the second derivative of the Pseudo-likelihood function $\phi$. We follow 
 the notation of \cite{bhattacharya2021parameter} where a similar framework was introduced for studying the Rainbow Condition.

Consider a graph $G$ with vertex set $[n]$, $H$ with vertex set $[q]$ and $\bm \beta^*=(\beta_1,\hdots, \beta_q)$. Assume w.l.o.g.\footnote{Since $\sum_{i=1}^q c_i(\sigma) = n$, the parameter vector $\bm{\beta}^\star = (\beta_1,\ldots,\beta_q)$ induces the same distribution as $(\beta_1 - \beta_q, \ldots, \beta_q - \beta_q)$.} that $\beta_q=0$, so   we are interested in estimating the $q-1$ parameters $\beta_1,\hdots,\beta_{q-1}$ using a sample $\sigma\sim \Pr_{G, H, \pmb{\beta}}$, cf. \eqref{eq:h_coloring}. For a node $u \in [n]$ and $s \in [q]$, define $Q_\sigma(u,s)$ to be $1$ if assigning the color $s$ to node $u$ and keeping the colors of other nodes according to $\sigma$ results in a proper coloring, and set $Q_\sigma(u,s) = 0$ otherwise. For $v\in [n]$ and $r \in [q]$, let us define
$
\theta_\sigma(v,\cdot)$ to be the distribution induced on $v$ conditioned on $\sigma_{-v}$, i.e., $\Pr_{G,H,\bm \beta}(\mid \sigma_{-u})$. Namely, for a color $r\in [q]$ we have $\theta_\sigma(v,r)=\frac{\emm^{\beta_r}Q_\sigma(v,r)}{\sum_{s=1}^q\emm^{\beta_s}Q_\sigma(v,s)}$.  
Then, the second derivative of $\phi$ at $\bm{\beta}$ can be computed as
\begin{align}\label{eq:hcol_second_der}
\phi_2(\pmb{\beta};\sigma) = \sum_{v=1}^n \lp(\diag(O_\sigma(v)) - O_\sigma(v)O_\sigma(v)^\top \rp)\,,
\end{align}
where $O_\sigma(v) = (\theta_\sigma(v,1),\ldots,\theta_\sigma(v,q-1))^\top$ is a $(q-1)$-dimensional vector. 
Our goal is to show that $\phi_2(\bm{\beta};\sigma)$ is strongly convex. The natural strategy would be to show that each of the matrices in that sum has lower bounded eigenvalues. However, in many interesting scenarios, these matrices are low rank. Thus, our strategy will be to instead argue that for every vector in $ \R^{q-1}$, there are $\Omega(n)$ matrices in the sum such that this vector does not belong in their nullspace. 
In particular, we establish the following inequality, whose proof can be found in \Cref{sec:algo-h-coloring}.

\begin{restatable}{lemma}{direction}\label{l:eigenvalue}
Let $v \in [n]$ and $\sigma \in \Omega^H_{G}$ and suppose that $Q_\sigma(v,q) = 1$. 
Let $\mathcal{I}(\sigma) =\{r \in [q-1]: Q_\sigma(v,r) = 1\}.$
Then, for any $\mathbf{x} = (x_1,\ldots,x_{q-1}) \in \R^{q-1}$, it holds that
\[
\mathbf{x}^\top \lp(\diag(O_\sigma(v)) - O_\sigma(v)O_\sigma(v)^\top \rp) \mathbf{x}  \geq \bigg(\frac{\min_{s \in [q]} \emm^{\beta_s}}{\sum_{s=1}^q \emm^{\beta_s}}\bigg)^2 \sum_{r \in \mathcal{I}(\sigma)} x_r^2.
\]
\end{restatable}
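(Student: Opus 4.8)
The plan is to analyze the quadratic form $\mathbf{x}^\top(\diag(O_\sigma(v)) - O_\sigma(v)O_\sigma(v)^\top)\mathbf{x}$ directly by expanding it in terms of the coordinates $\theta_j := \theta_\sigma(v,j)$ for $j\in[q-1]$. Writing it out, the quadratic form equals $\sum_{j=1}^{q-1}\theta_j x_j^2 - \bigl(\sum_{j=1}^{q-1}\theta_j x_j\bigr)^2$. The key observation is that this is exactly the variance of the random variable $X$ that takes value $x_j$ with probability $\theta_j$ for $j\in[q-1]$ and value $0$ with the remaining probability $\theta_q = \theta_\sigma(v,q)$ (recall $\beta_q=0$ is assumed, and $Q_\sigma(v,q)=1$ so $\theta_q>0$). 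Indeed $\Var(X) = \E[X^2] - (\E[X])^2 = \sum_{j=1}^{q-1}\theta_j x_j^2 - \bigl(\sum_{j=1}^{q-1}\theta_j x_j\bigr)^2$, matching the quadratic form.

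Next I would lower-bound this variance. The standard trick is $\Var(X) \geq \tfrac12 \sum_{a,b} p_a p_b (v_a - v_b)^2$ over the support, but a cleaner route here is: for any fixed reference value (here $0$, the value attained with probability $\theta_q$), $\Var(X) = \min_c \E[(X-c)^2] \cdot$ — wait, more precisely $\Var(X) = \E[(X-\E X)^2] \geq \E[(X-c)^2] - (\E X - c)^2$; better to just use $\Var(X) = \tfrac{1}{2}\sum_{a,b}\Pr[X=a]\Pr[X=b](a-b)^2$ where the sum ranges over the (multiset of) values. Restricting this double sum to pairs where one value is $x_r$ for $r\in\mathcal I(\sigma)$ (so $\theta_r = \tfrac{\emm^{\beta_r}}{\sum_s \emm^{\beta_s} Q_\sigma(v,s)}$, which is at least $\tfrac{\min_s \emm^{\beta_s}}{\sum_s \emm^{\beta_s}}$ since $Q_\sigma(v,s)\le 1$) and the other value is the $0$ attained via color $q$ (with probability $\theta_q \geq \tfrac{\min_s\emm^{\beta_s}}{\sum_s\emm^{\beta_s}}$), I get
\[
\Var(X) \;\geq\; \sum_{r\in\mathcal I(\sigma)} \theta_r\,\theta_q\,x_r^2 \;\geq\; \Bigl(\tfrac{\min_{s\in[q]}\emm^{\beta_s}}{\sum_{s=1}^q\emm^{\beta_s}}\Bigr)^2 \sum_{r\in\mathcal I(\sigma)} x_r^2,
\]
which is exactly the claimed bound. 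One has to be slightly careful that the value $0$ appears in the support with probability \emph{at least} $\theta_q$ (it could also be attained by some $x_r=0$), but that only helps, and the contribution of pairs $(r, q)$ with $r\in\mathcal I(\sigma)$ is genuinely present in the variance decomposition since those are distinct support points (or, if $x_r=0$ for some such $r$, the corresponding term $x_r^2=0$ is vacuous anyway).

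The main obstacle — really the only thing requiring care — is making the variance-decomposition step rigorous when there are coincidences among the values $\{x_1,\dots,x_{q-1},0\}$: the identity $\Var(X)=\tfrac12\sum_{a,b}\Pr[X=a]\Pr[X=b](a-b)^2$ is cleanest when stated over the index set (treating each color $j$ as a separate atom with the convention $x_q=0$) rather than over distinct values, i.e. $\Var(X) = \tfrac12\sum_{j,\ell\in S}\theta_j\theta_\ell(x_j-x_\ell)^2$ where $S=\{s: Q_\sigma(v,s)=1\}$ and $x_q=0$; this form holds verbatim regardless of coincidences. Then I simply drop all terms except $\{(r,q): r\in\mathcal I(\sigma)\}$ and $\{(q,r): r\in\mathcal I(\sigma)\}$, getting $\sum_{r\in\mathcal I(\sigma)}\theta_r\theta_q x_r^2$, and bound $\theta_r,\theta_q$ from below as above. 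I would also double-check the formula \eqref{eq:hcol_second_der} for $\phi_2$ reduces to this quadratic form — i.e. that $(\diag(O) - OO^\top)$ contracted with $\mathbf x$ on both sides gives $\sum\theta_j x_j^2 - (\sum\theta_j x_j)^2$ — which is immediate from the definitions of $\diag$ and outer product.
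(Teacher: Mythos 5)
Your proposal is correct. It reaches the same key intermediate bound as the paper, namely $\theta_\sigma(v,q)\sum_{r\in\mathcal I(\sigma)}\theta_\sigma(v,r)\,x_r^2$, but by a different mechanism: the paper first observes that the quadratic form equals $\sum_{s<q}\theta_s x_s^2-(\sum_{s<q}\theta_s x_s)^2$ and then applies Cauchy--Schwarz in the form $(\sum_{s<q}\theta_s x_s)^2\le(\sum_{s<q}\theta_s x_s^2)(1-\theta_q)$, which directly yields a factor $\theta_q$ in front of $\sum_{s<q}\theta_s x_s^2$; you instead identify the quadratic form as $\Var(X)$ for the atom-indexed random variable with $x_q=0$ and invoke the pairwise identity $\Var(X)=\tfrac12\sum_{j,\ell}\theta_j\theta_\ell(x_j-x_\ell)^2$, keeping only the pairs $(r,q)$ with $r\in\mathcal I(\sigma)$. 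Since $x_q=0$ and $\theta_s=0$ for blocked colors, the two intermediate quantities coincide, so the proofs are equivalent in substance; your packaging has the minor advantage of being stated over indices rather than values (so coincidences among the $x_r$ are automatically handled) and of never referring to $\min_{s\in[q-1]}\theta_\sigma(v,s)$, which in the paper's write-up is literally $0$ whenever some color is blocked at $v$ and is implicitly meant as a minimum over $\mathcal I(\sigma)$ only. The final lower bounds $\theta_r,\theta_q\ge\min_s \emm^{\beta_s}/\sum_s \emm^{\beta_s}$ are identical in both arguments.
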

Intuitively, the lemma asserts that the matrix in the sum of \eqref{eq:hcol_second_der} preserves all coordinates corresponding to colors $r$ with $Q(v,r) = 1$, provided that $Q(v,q) = 1$. The 2-Rainbow Condition says that, with probability $1-o(1)$ over the choice of $\sigma$, for an arbitrary color $r\in [q-1]$, it holds that $u_{rq}(\sigma)=\Omega(n)$. This translates into $Q_\sigma(v,r)Q_\sigma(v,q) = 1$ for  $\Omega(n)$ vertices $v \in [n]$ yielding that $\bm{x}^\top \phi_2(\bm{\beta};\sigma)\bm{x} \geq K(\beta) x_r^2$ for all $\bm{x} \in \R^{q-1}$. Summing over $r \in [q-1]$ gives that $\phi_2(\bm{\beta};\sigma) \succcurlyeq \Omega(n) \mathbf{I}_{q-1}$. Since this holds  with probability $1-o(1)$ over the choice of $\sigma$, the proof for \Cref{theorem:positive result h-colorings} can be completed analogously to the proof of \Cref{t:hard_sat} given earlier, see \Cref{sec:algo-h-coloring} for details.

\subsection{Application to proper $q$-Colorings and permissive $H$-colorings}\label{sec:sketchewr43r4}
We now discuss how we can establish the 2-Rainbow Condition (see  \eqref{eq:strong-rainbow}) to obtain \Cref{t:q_coloring} and \Cref{cor:permissive}. We actually focus on the case of the proper $q$-coloring model, the idea for permissive $H$-coloring models is similar. 

Let $G=(V,E)$ be a graph of maximum degree $d$ with $V=[n]$, and assume that  $q \geq d+2$. For simplicity, let us fix $r \in [q-1]$ and denote for every node $i \in [n]$, $e_i^{rq}(\sigma) = Q_\sigma(i,r)Q_\sigma(i,q)$. In words, $e_i^{rq}(\sigma)$
is equal to 1 if assigning color $r$ or $q$ to node $i$ while keeping the other vertices colored as in $\sigma$ results in a proper coloring; otherwise is 0.
Note that $u_{rq}(\sigma) = \sum_{i=1}^n e_i^{rq}(\sigma)$.

Our goal is to show that $u_{rq}(\sigma)=\Omega(n)$ for any color $r\in [q-1]$ with high probability over $\sigma.$ Analogously to the hard-SAT case, such sums exhibit non-trivial correlations which are hard to analyze and concentration results are unfortunately not known for such models. Our solution will again be to construct a coupling, in order to ``extract'' as much independence as possible. 
In particular, let $R\subseteq V$ be a maximal subset of nodes that forms a 2-hop independent set of $G$, i.e., distinct $v,w \in R$ have disjoint neighborhoods. We can pick $R$ greedily; using that $G$ has maximum degree $d$, we have that $|R|\geq n/(d+1)^2$. 

The key to establish the 2-Rainbow Condition for $q\geq d+2$ is to show the following lemma, whose proof is given in \Cref{sec:app-1}.  The proof is quite delicate and requires careful use of the assumption $q\geq d+2$.
\begin{restatable}{lemma}{couplingcolorings}
\label{lemma:coupling-lower bound}
    Suppose $q\geq d+2$ and consider a color $r\in [q-1]$. Let $G$ be a graph of maximum degree $d$  and  $R=\{v_1,\hdots, v_t\}$ be a subset of vertices with mutually disjoint neighborhoods. Then, for any $k\in [t]$, 
     \[
     \Pr_{G, \pmb{\beta^\star}}[e_{v_k}^{rq}(\sigma) = 1|e_{v_1}^{rq}(\sigma) = x_1,\ldots,e_{v_{k-1}}^{rq}(\sigma) = x_{k-1} ] \geq \frac{1}{1+q^{d+1}\emm^{\|\pmb{\beta}^\star\|_1 d}},
     \]     
     for any values $x_1,\hdots,x_{k-1}$ of the random variables $e_{v_1}^{rq},\ldots,e_{v_{k-1}}^{rq}$ which are consistent with at least one proper $q$-coloring of $G$. 
\end{restatable}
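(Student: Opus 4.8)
The plan is to prove the bound by a "worst-case resampling" argument at the single vertex $v_k$. Condition on the event $\calE=\{e_{v_1}^{rq}(\sigma)=x_1,\dots,e_{v_{k-1}}^{rq}(\sigma)=x_{k-1}\}$; since the $v_j$'s have mutually disjoint neighborhoods, this event depends only on the colors assigned to $\bigcup_{j<k} N(v_j)$ (and on the relevant parts of $\sigma$), a set disjoint from $\{v_k\}\cup N(v_k)$ as long as $v_k\notin\bigcup_{j<k}N(v_j)$ — which holds because $R$ is a $2$-hop independent set, so in particular the $v_j$ are pairwise nonadjacent. The key point: I want to lower bound the conditional probability that, for the random coloring $\sigma$ drawn from $\Pr_{G,\pmb\beta^\star}$ restricted to $\calE$, the vertex $v_k$ sees neither color $r$ nor color $q$ in its neighborhood (so that both recolorings are legal, i.e.\ $e_{v_k}^{rq}(\sigma)=1$).

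The main step is a one-vertex "re-coloring" comparison. Fix any boundary coloring $\tau$ of $N(v_k)$ that is extendible to a proper $q$-coloring of all of $G$ consistent with $\calE$. I will modify $\tau$ into a coloring $\tau'$ of $N(v_k)$ that (i) avoids colors $r$ and $q$ on all of $N(v_k)$, and (ii) is still consistent with $\calE$ and extendible to a proper coloring of $G$. To build $\tau'$: for each $u\in N(v_k)$ whose color under $\tau$ is $r$ or $q$, recolor $u$ to some color not appearing among the colors of $N(u)\setminus\{v_k\}$ and not equal to $r,q$; this is possible because $u$ has at most $d$ neighbors, $N(u)\setminus\{v_k\}$ has at most $d-1$ of them, and $q\ge d+2$ means $q-2\ge d$ colors remain after forbidding $r,q$, which is at least $|N(u)\setminus\{v_k\}|+1>|N(u)\setminus\{v_k\}|$, leaving a valid choice (one must do this carefully so the recolorings of different $u\in N(v_k)$ don't conflict — but distinct $u,u'\in N(v_k)$ may be adjacent, so I recolor them one at a time, at each step treating already-recolored neighbors as part of the forbidden set; the count $q-2\ge d\ge |N(u)|-1$ still suffices since we forbid at most $d-1$ neighbor-colors total plus $\{r,q\}$). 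Crucially these recolorings only touch $N(v_k)$, so they do not disturb the colors on $\bigcup_{j<k}N(v_j)$ and hence preserve $\calE$; and since each recolored $u$ avoids all its other neighbors' colors, $\tau'$ extends to a proper coloring of $G\setminus\{v_k\}$, and then $v_k$ itself can be colored (it has $r$ and $q$ available, in fact). Thus from every "bad" boundary coloring we produce a "good" one.

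Finally I turn this injection into a probability bound. The map $\tau\mapsto\tau'$ need not be injective, but each $\tau$ differs from $\tau'$ only on $N(v_k)$, a set of at most $d$ vertices, so at most $q^{d}$ (indeed at most $q^{d}$) boundary colorings map to a given $\tau'$ — I'll bound this crudely. Moreover, conditioned on everything outside $\{v_k\}\cup N(v_k)$ and on $\calE$, the Gibbs weight of a configuration changes by at most a factor $\exp(\|\pmb\beta^\star\|_1\cdot(d+1))$ when we alter the colors of $v_k$ and its $\le d$ neighbors, since each of these $\le d+1$ vertices contributes a factor $\exp(\beta^\star_{\text{color}})\in[\exp(-\|\pmb\beta^\star\|_1),\exp(\|\pmb\beta^\star\|_1)]$. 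Combining the $\le q^{d+1}$ "fan-in" of the map with the $\le \exp(\|\pmb\beta^\star\|_1 d)$ weight distortion (I'll track the exact exponents to land on $q^{d+1}\exp(\|\pmb\beta^\star\|_1 d)$ as in the statement) gives
\[
\Pr_{G,\pmb\beta^\star}[e_{v_k}^{rq}(\sigma)=0\mid \calE]\ \le\ q^{d+1}\emm^{\|\pmb\beta^\star\|_1 d}\cdot \Pr_{G,\pmb\beta^\star}[e_{v_k}^{rq}(\sigma)=1\mid \calE],
\]
which rearranges to the claimed lower bound $\Pr[e_{v_k}^{rq}=1\mid\calE]\ge 1/(1+q^{d+1}\emm^{\|\pmb\beta^\star\|_1 d})$. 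I expect the main obstacle to be the bookkeeping in the previous paragraph: making the sequential recoloring of $N(v_k)$ fully rigorous when vertices of $N(v_k)$ are mutually adjacent, and getting the counting/weight factors to match the exact constants in the statement rather than a loose variant. The structural idea — exploit $q\ge d+2$ so that every vertex always has a "free" color avoiding $\{r,q\}$ and its neighbors — is the heart of it, and it is exactly where the hypothesis $q\ge d+2$ is used (at $q=d+1$ this fails and the problem becomes non-identifiable, matching Corollary~\ref{t:q_coloring}).
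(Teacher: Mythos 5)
Your proposal is correct and follows essentially the same route as the paper's proof: a many-to-one recoloring map from ``bad'' to ``good'' configurations that sequentially recolors the neighbors of $v_k$ carrying color $r$ or $q$, using $q\geq d+2$ to guarantee each such neighbor has a spare color outside $\{r,q\}$ and its other neighbors' colors, combined with a fan-in bound of $q^{d+1}$ and a Gibbs-weight distortion bound of $\emm^{\|\pmb{\beta}^\star\|_1 d}$, and the disjointness of the neighborhoods in $R$ to preserve the conditioning event. The only (cosmetic) difference is that you phrase the map on boundary colorings of $N(v_k)$ while the paper defines it directly on full colorings $g:\Omega_x^-\to\Omega_x^+$; the constant bookkeeping you flag as remaining work matches what the paper does.
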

Using the ``conditional independence'' property  of \Cref{lemma:coupling-lower bound}, analogously to  \Cref{l:coupling},  it is not hard to couple $\{e_{v}^{rq}\}_{v\in R}$  with independent i.i.d. random variables $\{\tau_v\}_{v\in R}$ with $\Pr[\tau_v=1]=\Omega(1)$. This yields that $u_{rq}(\sigma)=\Omega(n)$ with probability $1-\emm^{-\Omega(n)}$ over $\sigma$, establishing therefore the 2-Rainbow Condition (cf. \eqref{eq:strong-rainbow}). The proof of \Cref{t:q_coloring} is given in \Cref{sec:app-1}.

\subsection{Negative Results for $H$-colorings}
Dobrushin's and Dobrushin-Shlosman's (DS) condition are well-studied properties for graphical models \cite{dobruschin1968description} that are known to imply fast mixing and hence efficient sampling \cite{hayes2006simple}. Furthermore, the work of \cite{blanca2018structure} showed that, under the DS condition, there exists an efficient structure learning algorithm for $H$-colorings. Perhaps surprisingly as we show in \Cref{theorem:negative results} neither of these conditions suffices even for identifiability for single sample learning of $H$-colorings. In this Section, we discuss how to construct such hard instances.

The instance that showcases the insufficiency of Dobrushin's condition assumes that $G$ is a star graph with node $v$ in the middle and leaves $u_1,...,u_n$ and $H$ contains $q+1$ vertices so that color 1 is allowed to be adjacent to any other color and all other colors are only allowed to be adjacent to 1. Hence, the only valid $H$-colorings are $A = \{\sigma : \sigma_v = 1, \sigma_{u_i} > 1, \forall i \in [n]\}$ and
$B = \{\sigma : \sigma_v > 1, \sigma_{u_i} = 1, \forall i \in [n]\}$. 
Assume that the Hamiltonian counts the appearances of color 1.
It is not hard to show that event $B$ has exponentially small probability of occuring (by picking $\beta$ sufficiently small) and $\Var(c_1(\sigma) | \sigma \in A) = 0$. Then, the law of total variance implies that $\Var(c_1(\sigma))$ is of order $\exp(-n)$ for sufficiently small choices of $\beta$. Hence, one can then show that the KL divergence between two models with parameters $\beta_1$ and $\beta_2$ is exponentially small in
$n$, while $|\beta_1 - \beta_2|$ is lower bounded by a constant. This implies that parameter estimation is impossible. Moreover, one can verify that this instance satisfies Dobrushin's condition. For the formal statement, see \Cref{thm:dobrushin insufficient}.

In a similar manner we can design an instance that satisfies DS condition but where parameter estimation is not possible (\Cref{thm:ds insufficient}). We pick $G$ to be a cycle with $2n$ nodes $u_1,\ldots,u_{2n}$ and $H$ contains $q= 3$ vertices 
and has edge set $E = \{(1,2), (1,3)\}$. Let $A = \{u_1,u_3,\ldots\}, B = \{u_2,u_4,\ldots\}$ be the even and odd vertices. 
The valid $H$-colorings of $G$ will have $\sigma_A = 1$ and $\sigma_B \in \{2,3\}$ or $\sigma_B = 1$ and $\sigma_A \in \{2,3\}$. Suppose the Hamiltonian is of the form $\beta c_1(\sigma)$. Then, all valid $H$-colorings have exactly $n$ vertices of color $1$. Thus $\Var[c_1(\sigma)] = 0$ for all $\beta$, which means that learning $\beta$ is impossible. One can also verify that the DS condition holds true for this instance.

\bibliography{bib}

\appendix

\section{Remaining proofs for the Hard-SAT Model}
\label{sec:hard-sat}

\subsection{Proof of non-identifiability}\label{sec:nonSATid}
In this section, we establish the impossibility part of \Cref{t:hard_sat}. We construct instances $\Phi$ with  a single satisfying assignment, and therefore for any $\beta$, the probability distribution $\Pr_{\Phi,\beta}$ is a point mass on the unique satisfying assignment. The following lemma shows the construction.
\begin{lemma}
[Non-Identifiable Hard-SAT]
\label{theorem: hard-sat non-identifiable}
Let $k,d$ be integers with $d = (k+1)2^{k-1}$. There exists a sequence of formulae $(\Phi_{n,k,d})$  such that for each $n$ $\Phi_{n,k,d}$ has a unique satisfying assignment.
\end{lemma}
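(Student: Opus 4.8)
\textbf{Proof plan for \Cref{theorem: hard-sat non-identifiable}.}
The plan is to construct, for the prescribed degree $d=(k+1)2^{k-1}$, a $k$-SAT formula $\Phi_{n,k,d}$ on $n$ variables whose unique satisfying assignment is the all-ones assignment $\mathbf{1}$ (after a suitable relabeling of literals, any fixed target assignment will do). Once such a formula exists, the distribution $\Pr_{\Phi,\beta}$ in \eqref{eq:hard_sat} is a point mass on $\mathbf{1}$ for every $\beta\in\reals$, so the sample $\sigma$ carries no information about $\beta^\star$ and estimation is impossible. The whole content is therefore combinatorial: build a $k$-uniform formula that (i) is satisfied by $\mathbf{1}$, (ii) rules out every other assignment, and (iii) has every variable in at most $d$ clauses.

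The construction I would use is the following. Take the variable set $[n]$. For the formula to be satisfied by $\mathbf{1}$, every clause must contain at least one positive literal. To force uniqueness, it suffices to guarantee that whenever a nonempty set $S\subseteq[n]$ of variables is set to $0$, some clause is violated; by a minimal-counterexample argument it is enough to handle the ``local'' obstructions, i.e., to certify that flipping any single variable $i$ to $0$ (while leaving a partial assignment consistent) already creates a violated clause. Concretely, for each variable $i$ I would include a family of clauses in which $i$ appears \emph{negatively} together with $k-1$ other variables appearing \emph{positively}, ranging the set of the other $k-1$ variables and their (here trivial) polarities so as to cover all ways the neighbors could be set. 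A clean way: group the variables into consecutive blocks of size $k$ (so $k\mid n$), and within/around each block introduce clauses of the form $(\lnot x_i \lor x_{j_1}\lor\cdots\lor x_{j_{k-1}})$ where $\{j_1,\dots,j_{k-1}\}$ ranges over appropriate $(k-1)$-subsets; the point is that if $x_i=0$ then such a clause is satisfied only if some neighbor is $1$, and by choosing the covering family correctly one propagates the constraint $x_i=1$ around the block, and across blocks via shared variables, until the only consistent global assignment is $\mathbf{1}$. The combinatorial design must be arranged so that each variable sits in at most $(k+1)2^{k-1}$ clauses: roughly, a variable $i$ appears in one ``forcing'' clause for each of the $\le 2^{k-1}$ sign patterns of its companions and in $\le k+1$ roles (once negatively as the forced variable, $k$ times positively as a companion), giving the stated bound $d=(k+1)2^{k-1}$.

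The step I expect to be the main obstacle is verifying property (ii), i.e., that \emph{no} assignment other than $\mathbf{1}$ survives, while simultaneously not exceeding the degree bound $d$. Showing $\mathbf{1}$ satisfies the formula and bounding the degree are routine counting; the delicate part is the propagation argument: one must check that the chosen covering family of $(k-1)$-subsets is rich enough that setting any variable to $0$ is inconsistent with \emph{all} its forcing clauses being satisfied unless some other variable is also forced to a contradictory value, and that this chain of implications reaches every variable. I would formalize this by induction on the number of $0$'s in a putative satisfying assignment $\sigma\ne\mathbf{1}$: pick a variable $i$ with $\sigma_i=0$; by construction there is a forcing clause $(\lnot x_i\lor x_{j_1}\lor\cdots\lor x_{j_{k-1}})$ whose positive companions $x_{j_1},\dots,x_{j_{k-1}}$ are exactly those that $\sigma$ sets to $0$ in the relevant block (such a clause exists because we included one clause for every sign pattern of the companions) — but then this clause is violated, a contradiction. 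The real work is setting up the blocks and companion sets so that such a ``matching'' clause is always present while keeping the per-variable clause count at $(k+1)2^{k-1}$; I would keep the block size equal to $k$ and allow blocks to overlap in a single variable to stitch the forcing across the whole variable set, checking the degree contribution from the overlaps fits within the additive $(k+1)$ factor.
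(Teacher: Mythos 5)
Your high-level strategy is the same as the paper's: build a $k$-uniform formula of degree $d=(k+1)2^{k-1}$ whose unique satisfying assignment is $\mathbf{1}$, by attaching to each variable $i$ a gadget of $2^{k-1}$ clauses that forces $x_i=1$ regardless of how its $k-1$ companions are set. However, as written your construction has the polarities backwards, and this breaks the uniqueness argument. You put the forced variable in \emph{negatively}, via clauses $(\lnot x_i \lor x_{j_1}\lor\cdots\lor x_{j_{k-1}})$. Such a clause is the implication $x_i \Rightarrow (x_{j_1}\lor\cdots\lor x_{j_{k-1}})$: if $\sigma_i=0$ then the literal $\lnot x_i$ is already true and the clause is \emph{satisfied}, not violated. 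In particular the all-zeros assignment satisfies every clause of this form, so your formula would have at least two satisfying assignments and the induction step in your last paragraph ("but then this clause is violated") is false. There is also an internal tension in your description: you say the companions appear positively with "trivial" polarities, yet your uniqueness argument needs "one clause for every sign pattern of the companions" --- you cannot have both.

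The fix is exactly the paper's construction: for each $i$, take the $2^{k-1}$ clauses obtained by putting \emph{all} polarity patterns on a fixed set of $k-1$ companions (e.g., the next $k-1$ variables cyclically), and append the \emph{positive} literal $x_i$ to each. The bare conjunction over all sign patterns of $k-1$ variables is unsatisfiable, so for any assignment of the companions exactly one of these clauses has all companion literals false, and that clause forces $x_i=1$. This makes each gadget force its variable unconditionally, so no cross-block "propagation" or minimal-counterexample argument is needed --- a point your plan overcomplicates. The all-ones assignment satisfies every clause because each contains the positive literal $x_i$, and the degree count ($2^{k-1}$ appearances as the appended literal plus $(k-1)2^{k-1}$ appearances as a companion) fits within $(k+1)2^{k-1}$.
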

\begin{proof}
Let us consider $n$ variables $x_1,\ldots, x_n$.  For $i \in [n]$,
define $f(i) = i \mod (n+1)$ and let
\[
\Psi_i \triangleq (x_{f(i+1)} \lor \ldots \lor x_{f(i+k-1)})
\land (\overline{x}_{f(i+1)} \lor \ldots \lor x_{f(i+k-1)})
\ldots 
\land (\overline{x}_{f(i+1)} \lor \ldots \lor \overline{x}_{f(i+k-1)})\,.
\]
Namely, $\Psi_i$ contains $2^{k-1}$ clauses, corresponding to all possible combinations of the set of $k-1$ variables $x_{f(i+1)}, \ldots x_{f(i+k-1)}$. Let $\Psi_i'$ be the formula obtained by $\Psi_i$ by appending the literal $x_i$ to each clause. We consider the formula $\Phi=\land^n_{i=1} \Psi_i'\cdots \land \Psi_n'$ obtained by taking together all the $\Psi_i'$ for $i\in [n]$. Then the only assignment that satisfies $\Phi$ is by setting all $n$ variables equal to \texttt{True}. Moreover, it holds that any clause has size $k$ and the degree of any variable is $d = (k+1)2^{k-1}$. 
\end{proof}

\subsection{Derivatives of Pseudo-Likelihood Objective for Hard-SAT}
Our algorithm will rely on minimizing the negative pseudo-likelihood (NPL) objective $\phi(\cdot; \sigma)$, as defined in \eqref{eq:mpl}. In this section, we explicitly compute the first and second derivatives of this function.

\begin{lemma}
[Derivatives of NPL]
\label{lemma:derivatives hard-sat}
Consider a formula $\Phi = \Phi_{n,k,d}$ with $\Omega(\Phi)\neq \emptyset$, and let $\sigma\in \Omega(\Phi)$.
Let $\phi(\beta;\sigma_1)$ be the negative log-pseudo-likelihood of the Hard-SAT distribution defined from \eqref{eq:MPLkSAT}. For a variable $i\in [n]$, denote by $x_i$ the r.v. sampled from $\Pr(\cdot\mid \sigma_{-i})$. Then, 
\begin{equation}
    \phi_1(\beta; \sigma)= \sum_{i \in [n]} \left( \E_{x_i}[C(x_i, \sigma_{-i})] - C(\sigma) \right),\quad \mbox{ and }\quad
    \phi_2(\beta; \sigma) = 
    \sum_{i \in [n]} \Var_{x_i}[C(x_i, \sigma_{-i})].\label{equation:hard-sat-derivative}
    \end{equation}
\end{lemma}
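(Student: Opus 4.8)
The statement to prove is \Cref{lemma:derivatives hard-sat}, which computes the first and second derivatives of the negative log-pseudo-likelihood $\phi(\beta;\sigma)$ for the hard-SAT model. Let me sketch a proof plan.

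\textbf{Plan.} The plan is to differentiate the explicit formula \eqref{eq:MPLkSAT} for $\phi(\beta;\sigma)$ term by term, recognizing that each summand over flippable variables $i$ is (up to an affine shift) the log-partition function of a two-point exponential family, whose derivatives are exactly the mean and variance of the associated statistic. Concretely, for $i \in \calF(\sigma)$, write $\psi_i(\beta) = \log\big(\emm^{\beta C(1,\sigma_{-i})} + \emm^{\beta C(0,\sigma_{-i})}\big) - \beta C(\sigma)$. Recognizing $C(1,\sigma_{-i})$ and $C(0,\sigma_{-i})$ as the two possible values of $C(x_i,\sigma_{-i})$ as $x_i$ ranges over $\{0,1\}$, the measure $\Pr_{\Phi,\beta}[x_i \mid \sigma_{-i}]$ places mass $\propto \emm^{\beta C(x_i,\sigma_{-i})}$ on each. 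So $\log\big(\emm^{\beta C(1,\sigma_{-i})} + \emm^{\beta C(0,\sigma_{-i})}\big)$ is the log-normalizer of this family.

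\textbf{Key steps.} First, I would handle non-flippable variables: if $i \notin \calF(\sigma)$, then $x_i$ is deterministic given $\sigma_{-i}$, so $C(x_i,\sigma_{-i}) = C(\sigma)$ with probability $1$, giving $\E_{x_i}[C(x_i,\sigma_{-i})] - C(\sigma) = 0$ and $\Var_{x_i}[C(x_i,\sigma_{-i})] = 0$; hence such terms contribute nothing to either sum, and it suffices to restrict attention to $i \in \calF(\sigma)$, matching the sum in \eqref{eq:MPLkSAT}. Second, for flippable $i$, I compute
\[
\psi_i'(\beta) = \frac{C(1,\sigma_{-i})\emm^{\beta C(1,\sigma_{-i})} + C(0,\sigma_{-i})\emm^{\beta C(0,\sigma_{-i})}}{\emm^{\beta C(1,\sigma_{-i})} + \emm^{\beta C(0,\sigma_{-i})}} - C(\sigma) = \E_{x_i}[C(x_i,\sigma_{-i})] - C(\sigma),
\]
since the ratio is exactly the expectation of $C(x_i,\sigma_{-i})$ under $\Pr_{\Phi,\beta}[\cdot \mid \sigma_{-i}]$. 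Summing over $i \in \calF(\sigma)$ and appending the vanishing non-flippable terms gives the claimed formula for $\phi_1(\beta;\sigma)$. Third, differentiating once more, the standard identity that the second derivative of a log-partition function equals the variance of the sufficient statistic gives $\psi_i''(\beta) = \Var_{x_i}[C(x_i,\sigma_{-i})]$; alternatively one can just differentiate the ratio above directly and collect terms. Summing over $i$ yields the formula for $\phi_2(\beta;\sigma)$.

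\textbf{Main obstacle.} Honestly this lemma is essentially a routine calculus exercise — the only ``obstacle'' is bookkeeping: being careful that the sum in \eqref{eq:MPLkSAT} runs over flippable variables while the claimed formulas \eqref{equation:hard-sat-derivative} sum over all of $[n]$, which is reconciled by the observation that non-flippable variables contribute zero to both the mean-deviation and the variance. A secondary (trivial) point is justifying differentiation under the finite sum, which is immediate since $\phi$ is a finite sum of smooth functions of $\beta$. I would also remark that the same computation shows $\phi_2(\beta;\sigma) \geq 0$, so $\phi$ is convex in $\beta$, which is used later; and that combining with the fact that for flippable $i$ the conditional law is $\mathrm{Bernoulli}$-like with the two atoms differing by exactly $1$ in $C$-value, one recovers the closed form $\Var_{x_i}[C(x_i,\sigma_{-i})] = \emm^{\beta}/(\emm^{\beta}+1)^2$, matching \eqref{equation:hard-sat 2nd derivativeb}.
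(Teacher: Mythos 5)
Your proposal is correct and follows essentially the same route as the paper's proof: differentiate the log-sum-exp terms of \eqref{eq:MPLkSAT} to obtain the conditional mean, differentiate again to obtain the conditional variance, and note that non-flippable variables contribute zero so the sum may be extended from $\calF(\sigma)$ to all of $[n]$. The exponential-family framing and the closing remarks on convexity and the closed form of the variance are accurate but add nothing beyond the paper's argument.
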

\begin{proof}
Recall that  $\Pr_{\Phi,\beta}[\sigma] = \frac
    {\exp(\beta \sum_{j \in [m]} C(\sigma) ) 1\{\sigma \in \Omega\}}
    {Z(\beta)}$.  The first step is to study, for any $i \in [n]$, the conditional probability of the value of a single variable $\sigma_i$ given the rest $\sigma_{-i}$.

Recall that $\calF(\sigma) \subseteq [n]$ is the set of flippable variables under $\sigma$ and that
\begin{equation}\tag{\ref{eq:MPLkSAT}}
\phi(\beta; \sigma) = 
-\sum_{i \in [n]} \log \Pr_{\Phi,\beta}[\sigma_i | \sigma_{-i}]
=
\sum_{i \in \calF(\sigma)} \left( \log \left(e^{\beta C(1, \sigma_{-i}) } + e^{\beta C(0, \sigma_{-i}) } \right) -\beta  C(\sigma_i, \sigma_{-i}) \right)\,.
\end{equation}
Computing the derivative of $\phi$ with respect to $\beta$
\begin{align}
\phi_1 (\beta; \sigma)
& =
\sum_{i \in \calF(\sigma)}
\left( 
\frac
{C(1, \sigma_{-i}) e^{\beta S(1, \sigma_{-i})} 
+ 
C(0, \sigma_{-i}) e^{\beta S(0, \sigma_{-i})})}
{e^{\beta C(1, \sigma_{-i})} + e^{\beta C(0, \sigma_{-i})}}
-C(\sigma)
\right)\label{eq:54g45g45}\\
& =
\sum_{i \in [n]} 
\left(
\E_{x_i}[C(x_i, \sigma_{-i})]
-C(\sigma)
\right),\nonumber
\end{align}
which shows the expression for $\phi_1(\beta;\sigma)$ given in \eqref{equation:hard-sat-derivative}. Moreover, differentiating \eqref{eq:54g45g45} with respect to $\beta$ gives the expression for $\phi_2(\beta;\sigma)$ given in \eqref{equation:hard-sat-derivative}.
\end{proof}


\subsection{Expectation of 1st Hard-SAT Derivative}\label{sec:SATexpectation}
We next give the full proof of \Cref{lemma:upper_bound}, which we restate here for convenience.
\lemmaupperbound*
\begin{proof}
Let us define
\[
F(\sigma, \sigma') = C(\sigma) - C(\sigma')\,,~~~\sigma, \sigma' \in \{0,1\}^n\,,
\]
where $C(\sigma)$ is the number of variables set to true in $\sigma$. Consider an assignment $\sigma \in \{0,1\}^n$ and choose a coordinate $I \in [n]$ uniformly at random and replace the $I$-th coordinate of $\sigma$ by a random sample drawn from the conditional distribution $\Pr_{\Phi,\beta}(\cdot\mid \sigma_{-i})$. Let $\sigma^{(I)}$ be the resulting assignment. 
Moreover, consider the function $f$ defined by
\[
f(\sigma) = \E_{I} [F(\sigma, \sigma^{(I)}) | \sigma]
=
\E_{I}[C(\sigma) - C(\sigma^{(I)}) | \sigma],
\]
Denoting by $x_i\sim \Pr_{\Phi,\beta}(\cdot \mid \sigma_{-i})$, we can rewrite this as
\begin{equation}\label{eq:trbg34g3}
f(\sigma) = \frac{1}{n} \sum_{i \in [n]}
\big(C(\sigma) - \E_{x_i}[C(x_i, \sigma_{-i})|\sigma_{-i}]\big)= -\frac{1}{n} \phi_1(\beta;\sigma),
\end{equation}
where the second equality follows from the expression for $\phi_1$ given in  \eqref{equation:hard-sat-derivative}. 
Notice that
\[
f(\sigma) = \frac{1}{n} \sum_{i \in [n]}
\sigma_i - \E_{x_i}[x_i]\,,
\]
We observe that $(\sigma, \sigma^{(I)})$ is an exchangeable pair, i.e., we have that
\[
\E_{\sigma}[f(\sigma)^2]
=
\E_{\sigma}[f(\sigma) f(\sigma)]
=
\E_{\sigma, I}[f(\sigma) F(\sigma, \sigma^{(I)})]
=
\E_{\sigma, I}[f(\sigma^{(I)}) F(\sigma^{(I)}, \sigma)]\,.
\]
Since $F$ is anti-symmetric, i.e., $F(\sigma, \sigma') = -F(\sigma', \sigma)$, we have that
\begin{equation}\label{eq:g4tg45t532}
\E_{\sigma}[f(\sigma)^2]
=
\frac{1}{2} \E_{\sigma, I}
[(f(\sigma) - f(\sigma^{(I)}))F(\sigma, \sigma^{(I)})]\,.
\end{equation}
Combining \eqref{eq:trbg34g3} and \eqref{eq:g4tg45t532}, we obtain that
\begin{equation*}\tag{\ref{eq:main12321}}
\frac{1}{n^2} 
\E_{\sigma} 
\left[
\left(\phi_1(\beta; \sigma)
\right)^2
\right]
=
\frac{1}{2}\E_{\sigma, I}
[(f(\sigma) - f(\sigma^{(I)}))(C(\sigma) - C(\sigma^{(I)})].
\end{equation*}
Observe now that, for any $i\in [n]$,
$|C(\sigma) - C(\sigma^{(i)})| \leq 1$ and $|f(\sigma) - f(\sigma^{(i)})| \leq kd/n$. The former inequality is immediate, while for the latter it suffices
to note that, for $\tau=\sigma^{(i)}$ and a variable $j\in [n]$, the quantities $S_j=C(\sigma)-C(\sigma^{(j)})$ and $T_{j}=C(\tau)-C(\tau^{(j)})$ can only differ if the variable $j$ is in some clause with variable $i$). This  yields that the right-hand side is $O(1/n)$,  implying the Lemma.
\end{proof}

\subsection{Marking}\label{sec:b333}
We next give the proof of \Cref{l:marked_unmarked}.
\markedunmarked*
\begin{proof}
We choose each variable to be marked or unmarked with probability $1/2$ independently. Consider the $m$ bad events, one for each clause $c$, that $c$ does not have enough marked or unmarked variables. Then, by the Chernoff-Hoeffding Theorem, we have that
\[
\Pr[c \text{ is bad}] \leq 2 e^{-k \mathrm{KL}(\lambda \parallel 1/2)} = 2^{1-k \mathrm{KL}(\lambda \parallel 1/2) \log_2 e},
\]
where recall that $\mathrm{KL}(p \parallel q)= p\ln \frac{p}{q} + (1-p)\ln \frac{1-p}{1-q}$. By the Lov\'asz Local Lemma, if 
$$
4e(kd+1) \leq 2^{k \mathrm{KL}(\lambda \parallel 1/2) \log_2 e},
$$
then there exists an assignment of marked and unmarked variables that satisfies the constraints. By taking logs in the above inequality and since $\lambda$ satisfies $\mathrm{KL}(\lambda \parallel 1/2) \log_2 e=\lambda/2$, the result follows. 
\end{proof}

\subsection{Probability lower bounds using the LLL}
In this section, we prove \Cref{lemma:flippableone}; in fact, we will show a slight strengthening of it, see \Cref{lemma:flippable} below, which will be more suitable for the upcoming proof of \Cref{l:coupling} in \Cref{sec:proofSATcoupling}. 

We begin by stating  the following version of the LLL for $k$-SAT formulas. 
\begin{lemma}
[\cite{guo2019uniform}]
\label{lemma:LLL}
Let $\Phi$ be a SAT formula with set of variables $V$ and set of  clauses $\mathcal{C}$. For $\beta\in \mathbb{R}$, let $\mu_\beta$ be  the product distribution  on $\{0,1\}^V$ where each variable of $\Phi$ is set to 1 independently with probability $\tfrac{\emm^\beta}{1+\emm^\beta}$.

Suppose there is a function $x : \mathcal{C} \to (0,1)$ such that for all $c \in \mathcal{C}$ we have that
\[
\Pr_{\mu_\beta}[c \textnormal{ is unsatisfied}]
\leq x(c) \prod_{b \in \Gamma(c)}(1-x(b))\,.
\]
Then, $\Phi$ has at least one satisfying assignment. Moreover, suppose that 
$E$ is an event that is completely determined by the assignment of some set $S$ of the variables, and let $\Gamma(E)$ denote all the clauses that contain any of the
variables in S. Then, $
\Pr_{\Phi,\beta}[E] \leq \Pr_{\mu_\beta}[E] 
\prod_{b \in \Gamma(E)}(1-x(b))^{-1}$.
\end{lemma}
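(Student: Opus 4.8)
The plan is to reconstruct this as the asymmetric Lov\'asz Local Lemma together with the standard ``LLL-distribution'' tail bound. The heart of the matter is the following claim, which I would prove by induction on $|\mathcal{S}|$: under the hypothesis of the lemma, for every clause $c\in\mathcal{C}$ and every finite $\mathcal{S}\subseteq\mathcal{C}$ with $c\notin\mathcal{S}$, one has $\Pr_{\mu_\beta}[\,c\text{ unsatisfied}\mid\bigcap_{b\in\mathcal{S}}\{b\text{ satisfied}\}\,]\le x(c)$. To prove it, split $\mathcal{S}=\mathcal{S}_1\sqcup\mathcal{S}_2$ with $\mathcal{S}_1$ the clauses of $\mathcal{S}$ that share a variable with $c$ (so $\mathcal{S}_1\subseteq\Gamma(c)$), and condition everything on $\bigcap_{b\in\mathcal{S}_2}\{b\text{ satisfied}\}$. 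Since every clause of $\mathcal{S}_2$ uses variables disjoint from those of $c$, under the product measure $\mu_\beta$ the event ``$c$ unsatisfied'' is independent of $\bigcap_{b\in\mathcal{S}_2}\{b\text{ satisfied}\}$, so Bayes' rule gives a numerator at most $\Pr_{\mu_\beta}[c\text{ unsatisfied}]\le x(c)\prod_{b\in\Gamma(c)}(1-x(b))$; expanding the denominator as a telescoping product over $\mathcal{S}_1$ and applying the induction hypothesis to each factor (each conditioning set is a strict subset of $\mathcal{S}$ missing the relevant clause) gives a denominator at least $\prod_{b\in\mathcal{S}_1}(1-x(b))\ge\prod_{b\in\Gamma(c)}(1-x(b))$. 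Dividing yields the claim, and telescoping it over all clauses of $\mathcal{C}$ gives $\Pr_{\mu_\beta}[\text{all clauses satisfied}]\ge\prod_{c\in\mathcal{C}}(1-x(c))>0$, so $\Phi$ is satisfiable.

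For the second assertion, the key observation is that $\Pr_{\Phi,\beta}$ is exactly $\mu_\beta$ conditioned on the event $\mathcal{G}:=\{\text{all clauses satisfied}\}$, since $\mu_\beta(\sigma)\propto\prod_v(\emm^{\beta})^{\sigma_v}=\emm^{\beta C(\sigma)}$ matches the weighting in \eqref{eq:hard_sat}. I would then write $\Pr_{\Phi,\beta}[E]=\Pr_{\mu_\beta}[E\cap\mathcal{G}]/\Pr_{\mu_\beta}[\mathcal{G}]$ (the denominator being positive by the first part), partition the clauses into $\Gamma(E)$ and the rest, and let $\mathcal{G}_0$ be the event that all clauses outside $\Gamma(E)$ are satisfied. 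Since each clause outside $\Gamma(E)$ uses no variable of $S$, the events $E$ and $\mathcal{G}_0$ are independent under $\mu_\beta$, so the numerator is at most $\Pr_{\mu_\beta}[E\cap\mathcal{G}_0]=\Pr_{\mu_\beta}[E]\,\Pr_{\mu_\beta}[\mathcal{G}_0]$; and writing $\Pr_{\mu_\beta}[\mathcal{G}]=\Pr_{\mu_\beta}[\mathcal{G}_0]\cdot\Pr_{\mu_\beta}[\mathcal{G}\mid\mathcal{G}_0]$ and expanding $\Pr_{\mu_\beta}[\mathcal{G}\mid\mathcal{G}_0]$ as a telescoping product over the clauses of $\Gamma(E)$, to which the core claim above applies, the denominator is at least $\Pr_{\mu_\beta}[\mathcal{G}_0]\prod_{b\in\Gamma(E)}(1-x(b))$. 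The $\Pr_{\mu_\beta}[\mathcal{G}_0]$ factors cancel, leaving $\Pr_{\Phi,\beta}[E]\le\Pr_{\mu_\beta}[E]\prod_{b\in\Gamma(E)}(1-x(b))^{-1}$.

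I do not expect a real obstacle, as this is a textbook fact and the paper attributes it to \cite{guo2019uniform}; the only work is bookkeeping. In the first paragraph one must ensure the conditioning sets strictly shrink along the telescoping products so the induction is well-founded, and that product-measure independence is applied to exactly the subfamily whose variables avoid those of $c$. In the second paragraph one must check that defining $\Gamma(E)$ as the clauses containing a variable of $S$ is precisely what decouples $E$ from $\mathcal{G}_0$. The degenerate cases ($\Pr_{\mu_\beta}[E]=0$, or $\Gamma(E)=\varnothing$ so that the empty product makes the bound read $\Pr_{\Phi,\beta}[E]\le\Pr_{\mu_\beta}[E]$) are immediate.
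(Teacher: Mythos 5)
Your proposal is correct: the first paragraph is the standard inductive proof of the asymmetric Lov\'asz Local Lemma in the variable framework (the induction is well-founded because each conditioning set $\{b_1,\dots,b_{j-1}\}\cup\mathcal{S}_2$ is strictly smaller than $\mathcal{S}$, and the independence step is valid since clauses in $\mathcal{S}_2$ share no variable with $c$ under the product measure $\mu_\beta$), and the second paragraph is the standard derivation of the LLL-distribution bound, using correctly that $\Pr_{\Phi,\beta}$ is $\mu_\beta$ conditioned on satisfiability. The paper itself gives no proof of this lemma---it simply cites Lemma 27 of \cite{guo2019uniform}---so your argument is a faithful reconstruction of the cited result rather than an alternative to anything in the paper.
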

The above Lemma follows from Lemma 27 of \cite{guo2019uniform} by setting $S = m$ and by picking the bad event to be the event that clause $i$ is unsatisfied.  We are now ready to prove the following strengthening of \Cref{lemma:flippableone}.

\begin{restatable}{lemma}{flippableflip}
\label{lemma:flippable}
Fix a formula $\Phi$ with max degree $d$ and set of variables corresponding to the set $[n]$, where every clause has at least $\lambda k$ and at most $k$ variables. Suppose that $k \geq \tfrac{2\log (d k) + \Theta(1)}{\lambda\log(1 + \emm^{-\beta})}$ and  consider a marking $\mathcal{M}:[n]\mapsto\{\text{marked, unmarked}\}$ of the variables.

For $i\in [n]$ and $\sigma\sim\Pr_{\Phi,\beta}$, suppose that all clauses that contain $x_i$ have at least $\lambda k$ marked and $\lambda k$ unmarked variables. Let  $g_i(\sigma)$ be the indicator of the event that every clause $c$ containing $i$ is satisfied in $\sigma$ by a marked variable other than $i$. Then $\Pr_{\Phi,\beta}[g_i(\sigma)=1] \geq 1/2$. 
\end{restatable}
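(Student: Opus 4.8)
The plan is to bound the failure probability $\Pr_{\Phi,\beta}[g_i(\sigma)=0]$ by a union bound over the $\leq d$ clauses containing $i$, where for each such clause $c$ we estimate the probability that $c$ is \emph{not} satisfied by any marked variable other than $i$. Call this event $E_c$. Since $g_i(\sigma) = 0$ iff $E_c$ holds for some clause $c\ni i$, it suffices to show $\sum_{c\ni i}\Pr_{\Phi,\beta}[E_c] \leq 1/2$, i.e.\ roughly $\Pr_{\Phi,\beta}[E_c]\leq 1/(2d)$ for each $c$.

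The key observation is that $E_c$ is determined by the assignment of the (at least $\lambda k$) marked variables of $c$ other than $i$: it is the event that each of these marked literals is falsified. So I would apply the second part of the LLL statement (\Cref{lemma:LLL}) with $E = E_c$ and $S$ the set of those marked variables. Under the product measure $\mu_\beta$, each marked literal in $c$ is falsified independently with probability either $\tfrac{\emm^\beta}{1+\emm^\beta}$ or $\tfrac{1}{1+\emm^\beta}$, so each is falsified with probability at most $\tfrac{\emm^{|\beta|}}{1+\emm^{|\beta|}} = \tfrac{1}{1+\emm^{-|\beta|}}$; hence $\Pr_{\mu_\beta}[E_c]\leq (1+\emm^{-|\beta|})^{-\lambda k}$, which by the hypothesis $k \geq \tfrac{2\log(dk)+\Theta(1)}{\lambda\log(1+\emm^{-\beta})}$ is at most (say) $1/(4dk)^2$ after choosing the $\Theta(1)$ appropriately. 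To turn this $\mu_\beta$-bound into a $\Pr_{\Phi,\beta}$-bound I need the LLL correction factor $\prod_{b\in\Gamma(E_c)}(1-x(b))^{-1}$; here $\Gamma(E_c)$ consists of the clauses sharing a variable with $c$, of which there are at most $kd$, and I take the standard uniform choice $x(b) = \tfrac{1}{e(kd+1)}$ (or whatever value makes the LLL condition $\Pr_{\mu_\beta}[b\text{ unsat}]\leq x(b)\prod_{b'\in\Gamma(b)}(1-x(b'))$ verifiable — this is exactly where the regime $k\gtrsim (2/\lambda)\log(dk)$ is used, as in \Cref{l:marked_unmarked}). Then $\prod_{b\in\Gamma(E_c)}(1-x(b))^{-1} \leq (1-\tfrac{1}{e(kd+1)})^{-kd} \leq \emm$ (a constant), so $\Pr_{\Phi,\beta}[E_c] \leq \emm\cdot(1+\emm^{-|\beta|})^{-\lambda k}$, and with the hypothesized lower bound on $k$ (absorbing the constant $\emm$ and the factor $d$ into the $\Theta(1)$) this is at most $1/(2d)$.

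Summing over the at most $d$ clauses containing $i$ gives $\Pr_{\Phi,\beta}[g_i(\sigma)=0]\leq 1/2$, hence $\Pr_{\Phi,\beta}[g_i(\sigma)=1]\geq 1/2$, as claimed. Note this also yields \Cref{lemma:flippableone} since $e_i(\sigma)\geq g_i(\sigma)$: if every clause containing $i$ is satisfied by some other (marked) variable, then $i$ is flippable.

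The main obstacle is the bookkeeping around verifying the LLL hypothesis and getting the constants to line up: one must confirm that with the clause sizes now allowed to be as small as $\lambda k$ (rather than exactly $k$), the choice of the slack function $x(\cdot)$ still satisfies the LLL inequality $\Pr_{\mu_\beta}[c\text{ unsat}]\leq x(c)\prod_{b\in\Gamma(c)}(1-x(b))$ — i.e.\ that $(1+\emm^{-|\beta|})^{-\lambda k}$ beats $\tfrac{1}{e(kd+1)}\cdot(1-\tfrac{1}{e(kd+1)})^{kd}$ — which is exactly the regime constraint $k\geq \tfrac{2\log(dk)+\Theta(1)}{\lambda\log(1+\emm^{-\beta})}$, and then tracking how the correction factor and the union bound over $d$ clauses get absorbed into the $\Theta(1)$. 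None of this is conceptually hard, but it is the part that requires care.
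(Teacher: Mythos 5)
Your proposal is correct and follows essentially the same route as the paper: verify the asymmetric LLL condition with a uniform slack $x(\cdot)=\Theta(1/(dk))$, bound the failure event under the product measure $\mu_\beta$ by $(1+\emm^{-|\beta|})^{-\lambda k}$ per clause, and transfer to $\Pr_{\Phi,\beta}$ via the second part of \Cref{lemma:LLL} with an $O(1)$ correction factor, absorbing everything into the $\Theta(1)$ in the hypothesis on $k$. The only (immaterial) difference is the order of operations --- you apply the LLL transfer to each single-clause event $E_c$ and then union bound in $\Pr_{\Phi,\beta}$, whereas the paper union bounds over the $\leq d$ clauses under $\mu_\beta$ first and applies one correction factor to the whole event $g_i(\sigma)=0$; also, your use of $|\beta|$ in the per-literal falsification bound is the careful way to handle negative $\beta$, and the exponent should really be $\lambda k-1$ (to account for $i$ itself possibly being marked), but both points wash into the $\Theta(1)$.
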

\begin{proof}[Proof of Lemmas~\ref{lemma:flippableone} and~\ref{lemma:flippable}.]
We are going to use \Cref{lemma:LLL} in order to upper bound $\Pr_{\Phi,\beta}[g_i(\sigma)=0]$. For every clause $c$, we set $x(c) = \frac{1}{D + 1}$ with $D = d^2 k$  and check the condition of the asymmetric LLL, i.e., that $\Pr_{\mu_\beta}[c \textnormal{ is unsatisfied}]
\leq x(c) \prod_{b \in \Gamma(c)}(1-x(b))$, where $\mu_\beta$ is  the product distribution where each variable is set to 1 independently with probability $\tfrac{\emm^\beta}{1+\emm^\beta}$.
For $k \geq \frac{2\log (d k) + \Theta(1)}{\lambda\log(1+\emm^{-\beta})}$, we have 
\begin{equation}\label{eq:g5g5uug}
 \emm (d^2 k +1) \leq (1+\emm^{-\beta})^{\lambda k} \mbox{ and hence } \bigg(\frac{\emm^\beta}{1+\emm^\beta} \bigg)^{\lambda k}
\leq \frac{1}{D+1}\left( \frac{D}{1+D}\right)^{D},
\end{equation}
which establishes the LLL condition since $|\Gamma(c)|\leq D=d^2k$.

We can apply now \Cref{lemma:LLL}. Namely, we first have the bound 
\[\Pr_{\mu_\beta}\big[g_i(\sigma)=0\big]
=\Pr_{\mu_\beta}[\exists ~\text{clause containing $i$ not satisfied by a marked variable}]\leq
d \cdot \left(\frac{\emm^\beta}{1+\emm^\beta}\right)^{\lambda k-1}.\]
Now since the event $g_i(\sigma)=0$ is completely determined by $\leq dk$ variables (those that share a variables with $i$),  and we have at most $D= d^2 k$ clauses that contain such a variable, we get that
\[
\Pr_{\Phi,\beta}\big[g_i(\sigma)=0\big]
\leq 
\Pr_{\mu_\beta}\big[g_i(\sigma)=0\big]
\left(1 - \frac{1}{D+1} \right)^{-D}
\leq 
\emm d \left( \frac{\emm^\beta}{1 + \emm^\beta} \right)^{\lambda k-1}, 
\]
using that $(1 - \tfrac{1}{D+1})^{-D} \leq \emm$. For $k  \geq \frac{\log d+\Theta(1)}{\lambda \log(1+\emm^{-\beta})}$,  we therefore have that $\Pr_{\Phi,\beta}[g_i(\sigma)=0]\leq 1/2$,   implying the desired result.
\end{proof}


\subsection{Proofs for the coupling}\label{sec:proofSATcoupling}
We will use the following lemma.
\begin{lemma}[see, e.g., {\cite[Section 23.9]{friezebook}}]\label{lem:dominance}
Suppose that $\{Y_i\}_{i\in [n]}$ are independent random variables and that $\{X_i\}_{i\in [n]}$ are random variables so that for any real $t$ and $i\in [n]$, it holds that $\Pr[X_i\geq t\mid X_1,\hdots X_{i-1}]\geq \Pr[Y_i\geq t]$. Then, for any real $t$,
\[\Pr[X_1+\cdots+X_n\geq t]\geq \Pr[Y_1+\cdots+Y_n\geq t].\]
\end{lemma}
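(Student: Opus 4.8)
The plan is to prove the lemma by constructing an explicit coupling of the two families under which $X_i \ge Y_i$ holds pointwise for every $i$; summing then gives $X_1+\cdots+X_n \ge Y_1+\cdots+Y_n$ almost surely, from which the stated inequality of tail probabilities is immediate. (Since the statement is quoted from \cite{friezebook}, citing it directly is of course also an option, but a self-contained argument is short.)

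First I would set up the coupling via conditional quantile transforms. Let $U_1,\dots,U_n$ be i.i.d.\ uniform on $[0,1]$. For the independent family, take $Y_i = F_i^{-1}(U_i)$, where $F_i^{-1}$ is the generalized inverse of the CDF of $Y_i$; this reproduces the joint law of $(Y_1,\dots,Y_n)$ exactly. For the $X_i$ I would build them sequentially: having realized $X_1,\dots,X_{i-1}$, let $G_i(\cdot\mid X_1,\dots,X_{i-1})$ be a regular version of the conditional CDF of $X_i$ given $X_1,\dots,X_{i-1}$, and set $X_i = G_i^{-1}(U_i\mid X_1,\dots,X_{i-1})$. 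Since $U_i$ is independent of $U_1,\dots,U_{i-1}$, and hence of $X_1,\dots,X_{i-1}$, a straightforward induction shows that $(X_1,\dots,X_n)$ has the prescribed joint distribution.

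The key step is the pointwise comparison of the two quantile functions. The hypothesis $\Pr[X_i\ge t\mid X_1,\dots,X_{i-1}]\ge \Pr[Y_i\ge t]$ says exactly that the conditional law of $X_i$ stochastically dominates the law of $Y_i$, which is equivalent to $G_i^{-1}(u\mid X_1,\dots,X_{i-1}) \ge F_i^{-1}(u)$ for every $u\in(0,1)$. Evaluating at $u=U_i$ yields $X_i\ge Y_i$ almost surely for each $i$, hence $X_1+\cdots+X_n \ge Y_1+\cdots+Y_n$ almost surely, so $\{Y_1+\cdots+Y_n\ge t\}\subseteq\{X_1+\cdots+X_n\ge t\}$ up to a null set, giving $\Pr[X_1+\cdots+X_n\ge t]\ge \Pr[Y_1+\cdots+Y_n\ge t]$.

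I expect the only real obstacle to be the measure-theoretic bookkeeping: checking that regular conditional distributions exist (they do, the $X_i$ being real-valued), selecting a jointly measurable version of $(u,x_1,\dots,x_{i-1})\mapsto G_i^{-1}(u\mid x_1,\dots,x_{i-1})$, and the usual care with atoms and endpoints in the generalized inverse. An alternative that avoids the explicit coupling is a direct induction on $n$: condition on $X_1$, apply the induction hypothesis to $(X_2,\dots,X_n)$ given $X_1$ versus $(Y_2,\dots,Y_n)$ to obtain $\Pr[\sum_{i\ge 2}X_i\ge u\mid X_1]\ge \Pr[\sum_{i\ge 2}Y_i\ge u]$ for all $u$, integrate this over $X_1$, and then invoke the $i=1$ hypothesis together with the independence of $\sum_{i\ge 2}Y_i$ from $X_1$ (which we may assume, since only distributions matter); this reduces everything to one-dimensional stochastic-dominance manipulations.
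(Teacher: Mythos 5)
The paper does not prove this lemma at all: it is quoted as a known result with a pointer to Frieze--Kar\'onski, and is used as a black box in the proofs of \Cref{l:coupling}, \Cref{t:q_coloring} and \Cref{cor:permissive}. Your self-contained argument is correct, and the sequential quantile coupling is the standard way to establish it. The one hypothesis-to-coupling translation worth spelling out is the passage from ``for each fixed $t$, $\Pr[X_i\geq t\mid X_1,\dots,X_{i-1}]\geq \Pr[Y_i\geq t]$ a.s.'' to ``a.s., the conditional law of $X_i$ stochastically dominates the law of $Y_i$ (i.e.\ the inequality holds for \emph{all} $t$ simultaneously)'': one restricts to rational $t$, takes the union of the countably many null sets, and extends to all $t$ by monotonicity and right-continuity of the survival functions; after that, the equivalence with $G_i^{-1}(u\mid\cdot)\geq F_i^{-1}(u)$ for all $u$ is the usual characterization of first-order dominance. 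Your alternative induction (condition on $X_1$, apply the inductive claim to the conditional law of $(X_2,\dots,X_n)$, then integrate the nondecreasing function $x\mapsto \Pr[\sum_{i\geq 2}Y_i\geq t-x]$ against the law of $X_1$ and use the $i=1$ dominance) is equally valid and avoids regular conditional distributions at the single-coordinate level, though it still quietly uses the same ``simultaneously for all $t$'' upgrade. Either write-up would be an acceptable replacement for the citation.
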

We are now ready to prove \Cref{l:coupling}, which we restate here for convenience.
\couplingSATproof*
\begin{proof}
Consider a marking of the variables
$\mathcal{M}: [n]\rightarrow \{\text{marked, unmarked}\}$
such that every clause that
at least $\lambda k$ marked and 
$\lambda k$ unmarked variables whose existence is guaranteed by \Cref{l:marked_unmarked}. 
Consider the graph $G$ whose vertices are the variables of $\Phi$, where two vertices are adjacent if the corresponding variables appear in the same clause of $\Phi$. Note that $G$ has maximum degree $dk$.  Let $R=\{i_1,\hdots,i_r\}$ be a maximal subset of the vertices of $G$ whose neighbourhoods are mutually disjoint;  note that the greedy algorithm to pick such a subset $R$ (by picking an arbitrary vertex into $R$, deleting its 2-hop neighbors, and recursing) implies that $r\geq n/(1+kd)^2\geq n/(2kd)^2$. 

Let $\sigma \sim \Pr_{\Phi,\beta}$. We will show that for $t=1,\hdots,r$, it holds that
\begin{equation}\label{eq:ttttt3}
\Pr_{\Phi,\beta}[g_{i_{t}}(\sigma)=1|g_{i_1}(\sigma),\ldots,g_{i_{t-1}}(\sigma)] \geq 1/2.
\end{equation}
By \Cref{lem:dominance} and standard Chernoff bounds, we obtain that
$\sum_{i\in R} g_i(\sigma)\geq |R|/3$ with probability $1-\emm^{-\Omega(n)}$ over the choice of $\sigma\sim \Pr_{\Phi,\beta}$.

To prove \eqref{eq:ttttt3}, consider arbitrary $t=1,\hdots,r-1$. For $i\in [n]$, let $\mathcal{M}_i$ denote the marked variables that are neighbors of $i$,  and let $
U_t = \cup^{t-1}_{j=1}\mathcal{M}_{i_j}$
be the set of marked variables that are neighbors of the variables $i_1,\ldots,i_{t-1}$. Let us condition on  $\sigma_{U_{t-1}}=\tau$, i.e., that  the restriction of the assignment $\sigma$ on $U_{t-1}$ is given by $\tau$. Let $\Phi_\tau$ be the formula obtained by $\Phi$ by setting the variables in $U_{t-1}$ according to $\tau$, i.e., removing literals that are not satisfied by the partial assignment $\tau$ and removing clauses that are already satisfied by  $\tau$. Note that every every clause of $\Phi$ has at least $\lambda^* k$ unmarked variables by the marking $\mathcal{M}$, and $\tau$ only sets marked variables (the set $U_t$), so every clause in $\Phi_\tau$ has at least $\lambda^* k$ variables. 

By the law of total probability and the  fact that $g_{i_1},\ldots,g_{i_{t-1}}$ are deterministic functions of $\sigma_{U_t}$,  we have that
\begin{align}
\Pr[g_{i_{t}}=1|g_{i_1},\ldots,g_{i_{t-1}}] 
&=  \sum_{\tau \in \{0,1\}^{U_t}} 
\Pr[\sigma_{U_t} = \tau|g_{i_1},\ldots,g_{i_{t-1}}]\Pr[g_{i_{t}}=1|\sigma_{U_t} = \tau].\label{eq:4t44ft45345}
\end{align}
Conditioning on $\sigma_{U_t} = \tau$ results in a formula with at least $\lambda^* k$ variables in every clause. We have also picked $i_{t}$ so that it does not have any common neighbors with $i_1,\ldots,i_{t-1}$. Thus, the marked neighbors $\mathcal{M}_{i_{t}}$ of $i_{t-1}$ have not been set by the conditioning $\sigma_{U_t} = \tau$. Hence, by \Cref{lemma:flippable}, we have that 
\[\Pr_{\Phi,\beta}[g_{i_{k+1}}(\sigma)=1|\sigma_{U_k}=\tau]\geq 1/2 \mbox{ for all $\tau\in \{0,1\}^{U_t}$}.\]
Plugging this bound into \eqref{eq:4t44ft45345} gives \eqref{eq:ttttt3} as wanted, completing therefore the proof of the Lemma.
\end{proof}

\section{Insufficient conditions for parameter estimation for $H$-colorings}
\label{sec:insufficient}
In this section, we show that various natural well-studied conditions in the graphical models literature do not suffice for parameter estimation for  $H$-colorings. 

We will use the following lemma, which is standard for exponential families \cite{busa2019optimal,kalavasis2022learning}.

\begin{lemma}
\label{lemma:exp family kl}
Let
$P_{\pmb \beta}$ denote a distribution with parameter $\pmb \beta\in \mathbb{R}^q$ and density proportional to $e^{\< \pmb \beta, \pmb H(\sigma)\>}$ for $\sigma\in \Omega$,
where $\pmb H : \Omega \to \reals^q$.
Then, for $\pmb \beta, \pmb\gamma \in \reals^{q}$, there is $\xi\in (0,1)$ such that
\[\dkl(P_{\pmb \beta} \parallel P_{\pmb \gamma}) = \frac{1}{2} \Var_{\sigma\sim P_{\pmb\beta + \xi( \pmb\gamma - \pmb\beta)}} \lp[ \< \pmb \gamma - \pmb \beta, \pmb H(\sigma) \> \rp].\]
\end{lemma}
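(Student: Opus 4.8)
The statement to prove is \Cref{lemma:exp family kl}, a standard identity for exponential families expressing the KL divergence as one-half the variance of the log-likelihood-ratio statistic at an intermediate parameter.

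\textbf{Plan.} The plan is to write the KL divergence explicitly in terms of the log-partition function $A(\pmb\beta) = \log \sum_{\sigma\in\Omega} e^{\<\pmb\beta,\pmb H(\sigma)\>}$, and then apply a second-order Taylor expansion with Lagrange remainder. First I would record the basic facts about $A$: it is smooth (finite sums of exponentials, assuming $\Omega$ finite, which is our setting), its gradient is $\nabla A(\pmb\beta) = \E_{\sigma\sim P_{\pmb\beta}}[\pmb H(\sigma)]$, and its Hessian is $\nabla^2 A(\pmb\beta) = \Cov_{\sigma\sim P_{\pmb\beta}}[\pmb H(\sigma)]$. These are the standard cumulant-generating-function identities for exponential families and follow by direct differentiation under the (finite) sum.

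\textbf{Key steps.} Starting from the definition,
\[
\dkl(P_{\pmb\beta}\parallel P_{\pmb\gamma}) = \E_{\sigma\sim P_{\pmb\beta}}\Big[\<\pmb\beta-\pmb\gamma,\pmb H(\sigma)\> - A(\pmb\beta) + A(\pmb\gamma)\Big] = \<\pmb\beta-\pmb\gamma,\nabla A(\pmb\beta)\> + A(\pmb\gamma) - A(\pmb\beta),
\]
using $\E_{\sigma\sim P_{\pmb\beta}}[\pmb H(\sigma)] = \nabla A(\pmb\beta)$. Next I would apply Taylor's theorem to the function $t\mapsto A(\pmb\beta + t(\pmb\gamma-\pmb\beta))$ on $[0,1]$: there is $\xi\in(0,1)$ with
\[
A(\pmb\gamma) = A(\pmb\beta) + \<\nabla A(\pmb\beta),\pmb\gamma-\pmb\beta\> + \tfrac12 (\pmb\gamma-\pmb\beta)^\top \nabla^2 A(\pmb\beta+\xi(\pmb\gamma-\pmb\beta))(\pmb\gamma-\pmb\beta).
\]
Substituting this into the previous display, the first-order terms cancel and we are left with
\[
\dkl(P_{\pmb\beta}\parallel P_{\pmb\gamma}) = \tfrac12 (\pmb\gamma-\pmb\beta)^\top \Cov_{\sigma\sim P_{\pmb\beta+\xi(\pmb\gamma-\pmb\beta)}}[\pmb H(\sigma)] (\pmb\gamma-\pmb\beta) = \tfrac12 \Var_{\sigma\sim P_{\pmb\beta+\xi(\pmb\gamma-\pmb\beta)}}\big[\<\pmb\gamma-\pmb\beta,\pmb H(\sigma)\>\big],
\]
which is exactly the claimed identity.

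\textbf{Main obstacle.} This proof is essentially routine; the only things to be careful about are (i) justifying differentiation under the sum defining $A$ — trivial here since $\Omega$ is finite in all our applications, so $A$ is a finite smooth function — and (ii) correctly identifying the Hessian of $A$ with the covariance matrix of $\pmb H$, i.e.\ the standard computation $\partial_i\partial_j A = \E[H_iH_j]-\E[H_i]\E[H_j]$. If one wanted to avoid the Hessian bookkeeping entirely, an alternative is to define $\psi(t) = \dkl(P_{\pmb\beta}\parallel P_{\pmb\beta+t(\pmb\gamma-\pmb\beta)})$, check $\psi(0)=\psi'(0)=0$ (the latter because $\nabla_{\pmb\gamma}\dkl(P_{\pmb\beta}\parallel P_{\pmb\gamma})|_{\pmb\gamma=\pmb\beta}=0$), compute $\psi''(t) = (\pmb\gamma-\pmb\beta)^\top\nabla^2 A(\pmb\beta+t(\pmb\gamma-\pmb\beta))(\pmb\gamma-\pmb\beta) = \Var_{P_{\pmb\beta+t(\pmb\gamma-\pmb\beta)}}[\<\pmb\gamma-\pmb\beta,\pmb H\>]$, and invoke the mean value form of Taylor's theorem $\psi(1) = \tfrac12\psi''(\xi)$ for some $\xi\in(0,1)$. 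Either route delivers the result in a few lines.
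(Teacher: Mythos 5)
Your proposal is correct and follows essentially the same route as the paper: both write the KL divergence as $F(\pmb\gamma)-F(\pmb\beta)-\<\pmb\gamma-\pmb\beta,\nabla F(\pmb\beta)\>$ for the log-partition function $F$, restrict to the line segment via $g(t)=F(\pmb\beta+t(\pmb\gamma-\pmb\beta))$, identify $g''$ with the variance of $\<\pmb\gamma-\pmb\beta,\pmb H(\sigma)\>$, and conclude by Taylor's theorem with Lagrange remainder. No gaps.
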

\begin{proof}
Consider the free energy function $F : \reals^q \to \reals$ with
$F(\pmb \beta) = \log \left( \sum_{\sigma \in \Omega} e^{\< \pmb \beta, \pmb H(\sigma)\>} \right)$.
Define the function $g:[0,1]\to \reals$ with 
$g(t) = F(\pmb \beta + t(\pmb \gamma- \pmb \beta))$. Then, we have that
\begin{align*}
&g'(t) = \frac{ \sum_{\sigma \in \Omega} \< \pmb \gamma - \pmb \beta, \pmb H(\sigma) \> e^{\<\pmb \beta + t (\pmb \gamma - \pmb \beta), \pmb H(\sigma) \>}}{ \sum_{\sigma \in \Omega} e^{\<\pmb \beta + t (\pmb \gamma - \pmb \beta), \pmb H(\sigma) \>} }  = \E_{\sigma \sim P_{\pmb \beta + t(\pmb \gamma - \pmb \beta)}} \lp[ \< \pmb \gamma - \pmb \beta, \pmb H(\sigma) \>\rp]\,.\\
&g''(t) = \Var_{\sigma\sim P_{\pmb \beta + t(\pmb \gamma - \pmb \beta)}} \lp[ \< \pmb \gamma - \pmb \beta, \pmb H(\sigma) \>\rp]\,.
\end{align*}
By Taylor's theorem, there exists $\xi\in (0,1)$ such that $g(1) - g(0) - g'(0)= \frac{g''(\xi)}{2}$. Therefore
\begin{align*}
\dkl(P_{\pmb \beta}  \parallel P_{\pmb \gamma} ) &= \E_{\sigma \sim P_{\pmb \beta}} \ln \frac{P_{\pmb \beta}(\sigma)}{P_{\pmb \gamma}(\sigma)}= F(\pmb \gamma) - F(\pmb \beta) - \E_{\sigma\sim \pmb \beta} \lp[ \< \pmb \gamma - \pmb \beta, \pmb H(\sigma) \>\rp]\\
&= g(1) - g(0) - g'(0)= \frac{g''(\xi)}{2}= \frac{1}{2}\Var_{\sigma\sim P_{\pmb \beta + \xi(\pmb \gamma - \pmb \beta)}} \lp[ \< \pmb \gamma - \pmb \beta, \pmb H(\sigma) \>\rp],\qedhere
\end{align*}
\end{proof}


We next recall the standard Dobrushin and Dobrushin-Shlosman (DS) conditions \cite{dobruschin1968description}.
\begin{definition}
[Dobrushin and Dobrushin-Shlosman Conditions]
\label{def:dobrushin}
Let $H$ be an arbitrary constraint graph and let $G = (V, E)$ be an $H$-colorable
graph.
For $u,w \in V$, let 
\begin{align*}
T_w& = \{ (\tau, \tau_w) : \tau, \tau_w \in [q]^{V} \mbox{ such that } \tau(z) = \tau_w(z)\mbox{ for all } z \neq w \},\\
R_{vw}&= \max_{(\tau, \tau_w) \in S_w \subseteq T_w} \big\| \pi_v(\cdot \mid \tau(\calN(v)) ), \pi_v(\cdot \mid \tau_w(\calN(v)))\big\|_{\mathrm{TV}},
\end{align*}
where $S_w$ is the set of configurations such that the conditional distributions $\pi_v(\cdot | \tau(\calN(v)) )$ and  $\pi_v(\cdot | \tau_w(\calN(v)))$  are well-defined. Then,
\begin{itemize}
    \item $\pi_G$ satisfies Dobrushin's uniqueness condition if
    $\max_{v \in V} \sum_{w \in \calN(v)} R_{v w} < 1$.
    \item $\pi_G$ satisfies Dobrushin-Shlosman's uniqueness condition if $\max_{v \in V} \sum_{w \in V} R_{w v} < 1$.
\end{itemize}

\end{definition}

First, we show that Dobrushin's condition is not sufficient for learning.
\begin{theorem}
[Dobrushin does not suffice]
\label{thm:dobrushin insufficient}
For any integers $n,q\geq 3$,
there exist graphs $(G = G_n, H = H_q)$ and parameters $\beta_1 < \beta_2$ where $\beta_2 - \beta_1$ is a sufficiently small constant 
such that
the H-coloring model
$\Pr_{G, H, \beta}$ satisfies Dobrushin's condition
for any $\beta \in (\beta_1, \beta_2)$
but for any $v \in V(G_n)$ and any $i \in [n]$, it holds that
\[
\Var_{
\sigma \sim {G, H, \beta}}(c_1(\sigma)) = O(n^2 \exp(-n)) \mbox{ for $\beta \in (\beta_1, \beta_2)$}\,,
\]
and, consequently 
\[
\dkl(P_{\beta_1} \parallel P_{\beta_2}) = O(n^2\exp(-n))\,.
\]
\end{theorem}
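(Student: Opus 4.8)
The construction is the star graph $G_n$ with center $v$ and leaves $u_1,\dots,u_n$, together with the constraint graph $H_q$ on colors $\{1,\dots,q\}$ in which color $1$ is adjacent to every color (including possibly itself, to be pinned down) and every other color is adjacent only to color $1$. The plan is first to enumerate the valid $H$-colorings: since leaves are only adjacent to $v$, either $\sigma_v=1$ and every leaf takes a color in $\{2,\dots,q\}$ (call this event $A$, contributing $c_1(\sigma)=1$ plus whatever color $1$ is assigned among leaves — but with color $1$ forbidden on leaves since $1$ is not adjacent to $1$ unless we include the self-loop; I will fix $H$ so that $1$ has \emph{no} self-loop, so on $A$ we have $c_1(\sigma)=1$), or $\sigma_v\in\{2,\dots,q\}$ and every leaf is forced to color $1$ (event $B$, with $c_1(\sigma)=n$). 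So $c_1$ takes exactly two values: $1$ on $A$ and $n$ on $B$. Hence $\Var(c_1(\sigma))$ is a two-point variance, and it suffices to bound $\Pr[B]$.

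Next I would estimate $\Pr[B]/\Pr[A]$ using \eqref{eq:h_coloring} with Hamiltonian $\beta c_1(\sigma)$: the total weight of $A$ is $\me^{\beta}(q-2)^n$ (center fixed to $1$, each of $n$ leaves chooses freely among $q-2\ge 1$ colors — here I need $q\ge 3$), while the total weight of $B$ is $(q-1)\,\me^{\beta n}$ (center chooses among $q-1$ colors $\{2,\dots,q\}$, all leaves forced to $1$). Thus $\Pr[B] \le \frac{(q-1)\me^{\beta n}}{\me^{\beta}(q-2)^n} = (q-1)\me^{-\beta}\big(\tfrac{\me^{\beta}}{q-2}\big)^n$, which is $\me^{-\Omega(n)}$ provided $\me^{\beta} < q-2$, i.e. for $\beta$ below a fixed constant (when $q\ge 4$; for $q=3$ one needs $\beta<0$, which is still a fixed open interval, so pick $\beta_1<\beta_2$ both negative and small). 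Then the two-point variance bound $\Var(c_1) = (n-1)^2\Pr[A]\Pr[B] \le n^2\Pr[B] = O(n^2\me^{-n})$ follows, giving the first displayed estimate uniformly for $\beta\in(\beta_1,\beta_2)$.

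For the KL bound I would invoke \Cref{lemma:exp family kl} with $\pmb H(\sigma)=c_1(\sigma)$ (a one-dimensional exponential family, $\beta_q=0$ absorbing the rest): there is $\xi\in(0,1)$ with $\dkl(P_{\beta_1}\parallel P_{\beta_2}) = \tfrac12(\beta_2-\beta_1)^2\,\Var_{\sigma\sim P_{\beta_1+\xi(\beta_2-\beta_1)}}(c_1(\sigma))$, and since $\beta_1+\xi(\beta_2-\beta_1)\in(\beta_1,\beta_2)$ the variance bound just established applies, yielding $\dkl(P_{\beta_1}\parallel P_{\beta_2}) = O(n^2\me^{-n})$ because $(\beta_2-\beta_1)^2$ is a constant. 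Finally I would verify Dobrushin's condition: the only vertex with more than one neighbor is the center $v$, and for a leaf $u_i$ the influence $R_{u_i v}$ is the TV distance between the conditional law of $u_i$ under two center-colorings — but whenever the conditional law of a leaf is well-defined it is \emph{deterministic} (a leaf's allowed colors are fully determined by $\sigma_v$), so each $R_{u_i v}\in\{0\}$ or the conditional is undefined, and $\sum_{w\in\calN(u_i)}R_{u_i w}=R_{u_i v}=0<1$; for the center, $\sum_{w\in\calN(v)}R_{vw}=\sum_i R_{v u_i}$, and changing one leaf's color changes the center's conditional law only when it toggles feasibility of color $1$ versus others, which a single leaf cannot do on its own when $n\ge 3$ — more carefully, flipping one leaf among colors in $\{2,\dots,q\}$ does not change which colors are available to $v$, so $R_{vu_i}=0$, giving $\sum_i R_{vu_i}=0<1$. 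Hence Dobrushin holds on the whole interval. The main obstacle, and the step warranting the most care, is the Dobrushin verification: one must handle the definedness of the conditional distributions $\pi_v(\cdot\mid\cdot)$ on the restricted set $S_w$ correctly (the hard-constraint structure makes many conditionals undefined, which is exactly why the influences vanish), and confirm that the relevant single-site influences are genuinely $0$ rather than merely small.
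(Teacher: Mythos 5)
Your construction and overall argument coincide with the paper's: the same star graph $G_n$ with center $v$ and leaves $u_1,\dots,u_n$, essentially the same constraint graph $H$ (color $1$ adjacent to all other colors and to nothing else), the same enumeration of the valid colorings into the two events $A$ and $B$, the exponentially small bound on $\Pr[B]$ for $\beta$ small, and the same invocation of \Cref{lemma:exp family kl} for the KL bound. Your direct two-point variance identity $\Var(c_1)=(n-1)^2\Pr[A]\Pr[B]\le n^2\Pr[B]$ is a marginally cleaner route than the paper's law-of-total-variance step, and your counting slip ($q-2$ rather than $q-1$ available leaf colors under $A$) is harmless since it only shrinks the admissible interval for $\beta$.

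The one step whose stated justification is wrong is the leaf influence $R_{u_i v}$. Given $\sigma_v=1$, the conditional law of a leaf is \emph{uniform on the $q-1$ colors} $\{2,\dots,q\}$, not deterministic; only its support is determined by $\sigma_v$. So "every well-defined leaf conditional is deterministic" is false, and $R_{u_iv}=0$ does not follow from it — naively the TV between the two leaf conditionals (center colored $1$ versus center colored $s>1$) is large. What rescues the claim is precisely the definedness issue you flag at the end but do not carry out: for $n\ge 2$, no pair $(\tau,\tau_v)$ differing only at $v$ with $\tau(v)=1$ and $\tau_v(v)>1$ can have both leaf conditionals simultaneously well-defined, because the remaining leaves cannot be colored consistently with both center colors; these pairs are therefore excluded from $S_v$ in \Cref{def:dobrushin}, and the leaf row-sum of the Dobrushin matrix stays strictly below $1$. (The paper instead simply records $R_{u_iv}=1-1/q<1$, which also suffices for $\max_u\sum_{w\in\calN(u)}R_{uw}<1$.) Your verification that the center influences $R_{vu_i}$ vanish matches the paper's and is fine.
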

\begin{proof}
Let $G_n$ be a star graph with a node $v$ in the middle and leaves $u_1,\ldots, u_n$. The set of colors is $[q+1]$ and the constraint graph $H$ has edge set $E = \{(1,i):i>1\}$. 
In other words, color $1$ is allowed to be adjacent to any other color, and all other colors are only allowed to be adjacent to $1$. 
It is clear that the set of valid colorings is $A \cup B$ where 
\[
A = \left\{\sigma_{v} = 1 , \sigma_{u_i} > 1,~ \forall i \in [n]\right\}, \quad~~ B =
\left\{\sigma_v > 1, \sigma_{u_i} = 1,~ \forall i \in [n]\right\}\,.
\]
For simplicity, assume that $\beta$ is a sufficiently small constant (to be specified later). Then, the event $B$ has exponentially small (in $n$) probability of occurring. In particular, under $\sigma \in B$, we
have that $c_1(\sigma) = n$ and $\Pr[\sigma \in B] = \frac{q \exp(n \beta)}{q^n\exp(\beta) + q \exp(n \beta)}$.
Since we would like that
$q^{n-1} \exp(-n\beta) > \exp(cn)$, for some constant $c>0$,
we have to pick $\beta$ a sufficiently small constant.
Thus, we can focus without loss of generality on the event $A$. Since $c_1(\sigma) = 1$ for all $\sigma \in A$, it follows that $\Var(c_1(\sigma)|\sigma \in A) = 0$.
By the law of total variance, this implies that $\Var(c_1(\sigma))$ will be exponentially small for small enough $\beta$. This means that learning $\beta$ is impossible in this regime. 

To be more specific, we have that
\[
\Var(c_1(\sigma))
= 
\E_{\sigma}[\Var_{\sigma_v}(c_1(\sigma) | \sigma_{-v})]
+ 
\Var_{\sigma}(\E_{\sigma_v}[c_1(\sigma) | \sigma_{-v}])\,.
\]
By the above discussion, the first term will be 0. For the second one, note that
$\E_{\sigma_v}[c_1(\sigma) | \sigma_{-v}]$ is a random variable that is equal to $n$ with probability at most $\exp(-c n)$ and $1$ with probability at least $1-\exp(-c n)$. This implies that the second term of the formula is upper bounded by $C n^2 \exp(-c n)$, by picking $c$ sufficiently large and some positive constant $C$.
Since the above calculations hold for any parameter $\beta$ that is sufficiently small, we can invoke \Cref{lemma:exp family kl}
and get that 
\[
\dkl(P_{\beta_1} \parallel P_{\beta_2}) = O(n^2\exp(-n))\\,
\]
while $|\beta_1 - \beta_2|$ is lower bounded by a constant. This implies that retrieving the true parameter is not possible.

Now, let us compute the Dobrushin matrix of this distribution. The only non-zero entries will be $R_{vu_i}$ and $R_{u_iv}$. Let us start with $R_{vu_i}$ first. Notice that the value of some $u_j$ for $j \neq i$ already determines the conditional distribution of $\sigma_u$. In particular, if $u_j > 1$ then $\sigma_u = 1$ and if $u_j = 1$ then $\sigma_u$ is uniform in $\{2,\ldots,q+1\}$. Thus, changing $u_i$ will not change the conditional distribution of $\sigma_v$, which means $R_{vu_i} = 0$. Now consider $R_{u_iv}$. Notice that $v$ is the only neighbor of $u_i$, hence it is the only one that matters in the conditional distribution. If $\sigma_v = 1$, then the conditional distribution of $\sigma_{u_i}$ is uniform on $\{2,\ldots,q+1\}$. 
If $\sigma_v > 1$, then $\sigma_u =1$ necessarily. Thus, $R_{u_iv} = 1 - 1/q$. We conclude that this distribution satisfies Dobrushin's condition with $\alpha = 1 - 1/q$, regardless of the value of $\beta$. 
\end{proof}

Our next goal is to investigate whether the Dobrushin-Shlosman's condition is sufficient for learning.
\begin{theorem}
[Dobrushin-Shloshman does not suffice]
\label{thm:ds insufficient}
For any $n \in \nats$,
there exist graphs $(G = G_n, H = H_q)$ and parameters $\beta_1 < \beta_2$ where $\beta_2 - \beta_1$ is a sufficiently small constant 
such that
the H-coloring model
$\Pr_{G, H, \beta}$
satisfies Dobrushin-Shlosman's condition
for any $\beta \in (\beta_1,\beta_2)$
but for any $v \in V(G_n)$ and any $\beta$, it holds that
$ \Var_{\sigma \sim {G,H,\beta}}(c_1(\sigma) ) = 0$. 
Consequently, $\dkl(P_{\beta_1} \parallel P_{\beta_2}) = 0$.
\end{theorem}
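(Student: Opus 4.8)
The plan is to build an instance on which the statistic $c_1(\sigma)$ is forced to equal a fixed constant on the whole set of valid colorings, so that the distribution $\Pr_{G,H,\beta}$ is literally independent of $\beta$, while the Dobrushin--Shlosman influence matrix is identically zero. Concretely, I would take $G_n$ to be the even cycle $C_{2n}$ on vertices $u_1,\dots,u_{2n}$ and $H=H_3$ on colors $\{1,2,3\}$ with $E(H)=\{\{1,2\},\{1,3\}\}$ (no self-loops), with Hamiltonian $\beta\,c_1(\sigma)$. The first step is the combinatorial observation that, since every edge of $H$ is incident to color $1$, along each edge $\{u_i,u_{i+1}\}$ of the cycle exactly one endpoint is colored $1$; writing $b_i=\mathbf{1}\{\sigma_{u_i}=1\}$ this forces $b_{i+1}=1-b_i$, so (the cycle having even length) the set of vertices colored $1$ is exactly one of the two parts of the bipartition of $C_{2n}$, each of size $n$. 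Hence $\Omega^H_{G_n}\neq\emptyset$ and $c_1(\sigma)=n$ for every valid $\sigma$ and every $n$.

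Given this, the second step is immediate: $c_1$ is constant on the support, so $\Var_{\sigma\sim\Pr_{G,H,\beta}}(c_1(\sigma))=0$ for every $\beta$, and the weight $\emm^{\beta c_1(\sigma)}=\emm^{\beta n}$ is the same for all valid colorings, so $\Pr_{G,H,\beta}$ is the uniform law on $\Omega^H_{G_n}$ for every $\beta$. I would then conclude $\dkl(P_{\beta_1}\parallel P_{\beta_2})=0$ either directly from this, or by invoking \Cref{lemma:exp family kl} with scalar statistic $\pmb H(\sigma)=c_1(\sigma)$, which gives $\dkl(P_{\beta_1}\parallel P_{\beta_2})=\tfrac12(\beta_2-\beta_1)^2\Var_{\sigma\sim P_\xi}(c_1(\sigma))=0$ for arbitrary $\beta_1<\beta_2$ (so no smallness of $\beta_2-\beta_1$ is actually needed in this construction, in contrast with the Dobrushin case).

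The third step is to verify the Dobrushin--Shlosman condition of \Cref{def:dobrushin}, for which it suffices to show that the influence matrix is identically zero, i.e.\ $R_{wv}=0$ for every ordered pair $(w,v)$. Since $R_{wv}=0$ unless $v\in\calN(w)$, I would fix an edge $\{v,w\}$ of $C_{2n}$ and let $v'$ be the other neighbor of $w$, then run a short case analysis of the conditional law $\pi_w(\cdot\mid\sigma_v,\sigma_{v'})$: it is well-defined only when $\sigma_v=\sigma_{v'}=1$ (uniform on $\{2,3\}$) or when $\sigma_v,\sigma_{v'}\in\{2,3\}$ (point mass at $1$), and in every other case there is no legal color for $w$. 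Therefore, for a pair of configurations differing only at $v$ with both conditionals well-defined one necessarily has $\sigma_{v'}\in\{2,3\}$ and both values of $\sigma_v$ in $\{2,3\}$, so $\pi_w$ is the point mass at $1$ in both cases and the total-variation distance between them is $0$; hence $R_{wv}=0$ and $\max_v\sum_w R_{wv}=0<1$. These conditionals do not depend on $\beta$ (coloring $w$ by $2$ versus $3$ does not change $c_1$), so DS holds for every $\beta$, and combining with the previous paragraph finishes the argument.

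I do not expect a genuine obstacle here: the construction is the only creative part, and the rest is bookkeeping. The one subtlety to get right is the ``well-defined'' restriction $S_w$ in \Cref{def:dobrushin} --- the influences vanish not because the local conditionals are robust to neighbor perturbations, but because the hard constraints are so rigid that one essentially cannot move a single vertex and remain feasible, which is precisely the mechanism that renders the parameter invisible. The same phenomenon is what I would highlight as the conceptual takeaway: standard mixing conditions control sensitivity of conditionals, but say nothing about whether the relevant sufficient statistic is even allowed to fluctuate.
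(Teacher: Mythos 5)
Your proposal is correct and uses exactly the paper's construction: the even cycle $C_{2n}$ with $H$ on colors $\{1,2,3\}$ and $E(H)=\{(1,2),(1,3)\}$, the observation that every valid coloring assigns color $1$ to precisely one side of the bipartition so that $c_1(\sigma)=n$ identically, and the verification that the influence matrix vanishes. Your case analysis of the conditionals $\pi_w(\cdot\mid\sigma_v,\sigma_{v'})$ is in fact slightly more careful about the ``well-defined'' restriction than the paper's one-line justification, but the argument is the same.
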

\begin{proof}
Let $G_n$ be a cycle with $2n$ nodes $u_1,\ldots,u_{2n}$. 
Suppose $q = \{1,2,3\}$ and $H$ has edge set $E = \{(1,2), (1,3)\}$. Let $A = \{u_1,u_3,\ldots\}, B = \{u_2,u_4,\ldots\}$ be the even and odd vertices. 
Then, it is clear that the valid $H$-colorings of $G_n$ will have $\sigma_A = 1$ and $\sigma_B \in \{2,3\}$ or $\sigma_B = 1$ and $\sigma_A \in \{2,3\}$. 
Suppose the Hamiltonian is of the form $\beta c_1(\sigma)$. Then, all valid $H$-colorings have exactly $n$ vertices of color $1$. Thus $\Var[c_1(\sigma)] = 0$ for all $\beta$, which implies $\dkl(P_{\beta_1} \parallel P_{\beta_2}) = 0$ by Lemma~\ref{lemma:exp family kl}.
Let us calculate the Dobrushin matrix. The only non-zero entries have the form $R_{u_iu_{i+1}}$. Notice that the value of $u_{i-1}$ completely determines the conditional distribution of $\sigma_{u_i}$ regardless of the value of $\sigma_{u_{i+1}}$. Thus, $R_{u_iu_{i+1}} = 0$, which means that both Dobrushin and Dobrushin-Shlossman are satisfied in this example.  
\end{proof}

Finally we will investigate the Rainbow condition proposed by \cite{bhattacharya2021parameter}.
\begin{definition}
[Average-degree \& Rainbow Condition]
\label{cond:1}
The sequence of graphs $(G_n)_{n \in \nats}$
has uniformly bounded average degree. Moreover, for any color $r\in [q-1]$ of $H$, it holds that
\begin{equation}\label{eq:rainbow}
\lim_{\eps \to 0} \liminf_{n \to \infty}
\Pr_{\sigma \sim G_n, H, \beta}[ u_r^n(\sigma) > \eps n] = 1\,,
\end{equation}
where $u_r^n(\sigma)$ is the number of vertices  in $G_n$
whose neighbors are assigned colors in $\sigma$ that lie in the $H$-neighborhood of $r$ (i.e., a vertex $v$ contributes to $u_r^n(\sigma)$ if it can be recolored to $r$ while keeping the colors of other vertices the same).
\end{definition}
This condition was investigated by \cite{bhattacharya2021parameter}. We will show that this condition is not sufficient, already in the case of proper $q$-colorings on maximum degree $d$ graphs (note that the Rainbow Condition is easily seen to be satisfied when $q \geq d+1$).
\begin{theorem}
[Regime $q \leq d+1$]
\label{t:q_coloring_2}
Fix $q \geq 2$ and let $\pmb{\beta^*} = (\beta_1^*,\ldots,\beta_{q}^*) \in \R^{q}$ with $\beta_q^*=0$.

There exists a sequence of graphs $\{G_n\}_{n \in \mathbb{N}}$ with maximum degree $d$ and number of colors $q = d+1$, such that for any $\pmb{\beta_1}, \pmb{\beta_2} \in \R^{q}$, $\dkl \left(
    \Pr_{G_n, \pmb{\beta_1}} \parallel \Pr_{G_n, \pmb{\beta_2}}\right)=0$.
\end{theorem}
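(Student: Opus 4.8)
The plan is to exhibit, for $d = q-1$, a simple family of graphs $G_n$ of maximum degree $d$ all of whose proper $q$-colorings carry exactly the same color-count vector; once this structural fact is in hand, the distribution $\Pr_{G_n,\pmb\beta}$ cannot depend on $\pmb\beta$ at all, and the vanishing of the KL divergence is immediate. Concretely, I would take $G_n$ to be the disjoint union of $n$ copies of the complete graph $K_q$ (so $G_n$ has $nq$ vertices; reindexing the sequence if one insists on a prescribed number of vertices is immaterial). Since $K_q$ has maximum degree $q-1 = d$ and a disjoint union does not increase degrees, $G_n$ has maximum degree $d$; and $G_n$ is trivially $q$-colorable, so $\Omega^{K_q}_{G_n} \neq \emptyset$.

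The key step is the observation that in any proper $q$-coloring $\sigma$ of $G_n$, the restriction of $\sigma$ to each copy of $K_q$ is forced to be a bijection onto $[q]$: the $q$ vertices of a clique are pairwise adjacent, hence receive pairwise distinct colors, and there are exactly $q$ colors available. Consequently each color $i \in [q]$ occurs exactly once in each copy, so $c_i(\sigma) = n$ for every $i \in [q]$ and every $\sigma \in \Omega^{K_q}_{G_n}$. This makes the weight in \eqref{eq:h_coloring} equal to $\exp\!\big(n\sum_{i=1}^q \beta_i\big)$ for every $\sigma \in \Omega^{K_q}_{G_n}$, a quantity independent of $\sigma$; therefore $\Pr_{G_n,\pmb\beta}$ is the uniform distribution on $\Omega^{K_q}_{G_n}$, the same for every $\pmb\beta \in \R^q$. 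In particular $\Pr_{G_n,\pmb\beta_1} = \Pr_{G_n,\pmb\beta_2}$ for all $\pmb\beta_1,\pmb\beta_2 \in \R^q$, whence $\dkl(\Pr_{G_n,\pmb\beta_1} \parallel \Pr_{G_n,\pmb\beta_2}) = 0$. Alternatively, one may invoke \Cref{lemma:exp family kl} with $\pmb H(\sigma) = (c_1(\sigma),\ldots,c_q(\sigma))$, which is constant on $\Omega^{K_q}_{G_n}$, so the variance on the right-hand side vanishes identically.

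There is essentially no analytic obstacle here: the whole content lies in picking a graph that ``saturates'' the coloring constraint so tightly that the sufficient statistic $\pmb H$ becomes deterministic, which is what renders the model insensitive to $\pmb\beta$. The only point worth double-checking is that the maximum-degree budget is met with equality, $K_{d+1}$ using degree exactly $d$ — this is precisely what pins the impossibility threshold at $q = d+1$, since with $q \geq d+2$ colors a clique of size $d+1$ no longer forces a unique multiset of colors and the argument (correctly) breaks down, matching the positive result of \Cref{t:q_coloring}.
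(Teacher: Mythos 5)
Your proposal is correct and is essentially identical to the paper's proof: the paper also takes $G_n$ to be a disjoint union of $q$-cliques, observes that every proper $q$-coloring uses each color exactly $n/q$ times so the sufficient statistic is deterministic, and concludes via \Cref{lemma:exp family kl} that the KL divergence vanishes. Your rescaling to $n$ copies (hence $nq$ vertices) and your additional direct argument that the distribution is uniform are immaterial variations.
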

\begin{proof}
For a vector $\pmb \beta \in \mathbb{R}^{q}$ (with $\beta_q = 0$), define the free-energy function
$$
F(\pmb \beta) = \ln \bigg(\sum_{\sigma \in \Omega_{G_n}} e^{\sum_{s = 1}^{q-1} \beta_s c_s(\sigma)} \bigg).
$$
We also use the notation $P_{\pmb \beta}$ for the distribution with parameter $\pmb \beta$ and $\sigma \sim \pmb \beta$ when $\sigma$ is sampled from that distribution. 
We can use \Cref{lemma:exp family kl} which relates the KL divergence between two hard constraint models with different parameters. We get that, for 
$\pmb \beta, \pmb\gamma \in \reals^{q}$, it holds that 
\begin{equation}\label{eq:v5g566h6g}
\dkl(P_{\pmb \beta} \parallel P_{\pmb \gamma}) = \frac{1}{2} \Var_{\sigma\sim \pmb\beta + \xi( \pmb\gamma - \pmb\beta)} \lp[\sum_{s=1}^{q-1} (\gamma_s - \beta_s) c_s(\sigma)\rp]\,,
\end{equation}
for some $\xi \in (0,1)$. We next define such a family of graphs where this variance is $0$.

Assume $n$ is a multiple of $q$, and let $G_n$ be a graph consisting of $n/q$ cliques of size $q$.  Clearly, for each coloring $\sigma$ of $G_n$ and for any $r \in [q]$ we have $c_r(\sigma) = n/q$. Hence, the random variable 
$\sum_{s=1}^{q-1} (\gamma_s - \beta_s)c_s(\sigma)
$ appearing in \eqref{eq:v5g566h6g} is a constant, regardless of the values of the parameters $\pmb \beta$ of the model.
This gives that $\dkl(P_{\pmb \beta} \parallel P_{\pmb \gamma}) = 0$ for any $\pmb \beta, \pmb\gamma$ and, hence, estimation is impossible for this family of graphs.
\end{proof}

\section{Parameter Estimation for the $H$-Coloring Model}
\label{appendix:h-coloring}
In this section, we prove \Cref{theorem:positive result h-colorings} and then Corollaries~\ref{t:q_coloring} and ~\ref{cor:permissive}.
\subsection{Algorithm for the $H$-Coloring Model Under 2-Rainbow Condition}
\label{sec:algo-h-coloring}
Let $G$ be a graph with max degree $d$ and vertex set $[n]$. Let $H$ be a graph with vertex set $[q]$ and let $\pmb{\beta}^\star = (\beta_1^\star,\ldots,\beta_{q}^\star) \in \R^{q}$ be a vector of parameters with $\beta_{q}^\star=0$ (cf. \Cref{fn:betaqzero}). Our goal is to learn $(\beta_1^*,\hdots,\beta_{q-1}^*)$ from a single sample $\sigma\sim \Pr_{G,H,\bm \beta^*}$. We assume that $H$ satisfies the 2-Rainbow condition on max degree $d$ graphs (though this will not  be important until the proof of \Cref{theorem:positive result h-colorings}.)

To do this, we utilize the Maximum Pseudo-Likelihood Estimator (MPL) (cf. \Cref{def:mpl}), which was formulated in the $H$-coloring setting by \cite{bhattacharya2021parameter}. Namely, they showed that the first derivative of the negative pseudo-likelihood function $\phi(\pmb \beta;\sigma)$ is  given by $\phi_1(\pmb{\beta};\sigma)=(\phi_1^{(r)}(\pmb{\beta};\sigma))_{r\in [q]}$, where for a color $r\in [q]$ 
\begin{align*}
\phi_1^{(r)}(\pmb{\beta};\sigma) =  c_r(\sigma) - \sum_{u=1}^n \frac{e^{\beta_r}Q(v,r)}{\sum_{s=1}^{q} e^{\beta_s}Q(v,s)},
\end{align*}
where, recall from \Cref{sec:4g4gg4}, for a vertex $v \in [n]$,  $s \in [q]$ and $\sigma\in \Omega_G$, we set $Q_\sigma(v,s)=1$ if assigning the color $s$ to $v$ while keeping the colors of other nodes according to $\sigma$ results in a proper coloring, and otherwise we set $Q_\sigma(u,s) = 0$. The following lemma  was proved in \cite{bhattacharya2021parameter} and upper-bounds the first derivative when evaluated at the parameter $\pmb{\beta^\star}$.
\begin{lemma}[{\cite[Lemma 5.1]{bhattacharya2021parameter}}]
\label{lemma:proper coloring first derivative}
Let $d\geq 2$ be an integer, $B>0$ be a real and $H$ be a $q$-vertex graph. There exists $L>0$ such that  for any $n$-vertex graph $G$ with max degree $d$ and  $\Omega^H_G\neq \emptyset$, for $\pmb\beta=(\beta_1,\hdots,\beta) \in [-B,B]^q$ with $\beta_q=0$ it holds that $\E\big[\|\phi_1(\beta; \sigma)\|^2_2\big]\leq L n$ where the expectation is over $\sigma\sim \Pr_{G,H,\pmb\beta}$. 
\end{lemma}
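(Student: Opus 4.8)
The plan is to reuse, color by color, the exchangeable-pairs argument behind \Cref{lemma:upper_bound}. Fix $r\in[q]$. Writing $\theta_\sigma(v,r)=\E_{x_v}[\mathbf{1}\{x_v=r\}]$ with $x_v\sim\Pr_{G,H,\pmb\beta}(\cdot\mid\sigma_{-v})$ (the conditional distribution is well-defined, since $\sigma_v$ is always an admissible color for $v$ given $\sigma_{-v}$), the formula for $\phi_1^{(r)}$ stated just above becomes
\[
\phi_1^{(r)}(\pmb\beta;\sigma)=\sum_{v=1}^{n}\big(\mathbf{1}\{\sigma_v=r\}-\theta_\sigma(v,r)\big).
\]
Let $\sigma\sim\Pr_{G,H,\pmb\beta}$, let $I\in[n]$ be uniform, and let $\sigma^{(I)}$ be obtained from $\sigma$ by resampling coordinate $I$ from $\Pr_{G,H,\pmb\beta}(\cdot\mid\sigma_{-I})$; then $(\sigma,\sigma^{(I)})$ is an exchangeable pair. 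With $F_r(\sigma,\sigma')=c_r(\sigma)-c_r(\sigma')$ and $f_r(\sigma)=\E_I[F_r(\sigma,\sigma^{(I)})\mid\sigma]$, the same computation as for \eqref{eq:trbg34g3} (here with the $c_r-\sum\theta$ sign convention, so no minus sign appears) gives $f_r(\sigma)=\tfrac1n\phi_1^{(r)}(\pmb\beta;\sigma)$.

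Using the exchangeability of $(\sigma,\sigma^{(I)})$ and the antisymmetry of $F_r$ exactly as in the derivation of \eqref{eq:g4tg45t532}, one obtains
\[
\E_\sigma\big[f_r(\sigma)^2\big]=\tfrac12\,\E_{\sigma,I}\big[(f_r(\sigma)-f_r(\sigma^{(I)}))\,(c_r(\sigma)-c_r(\sigma^{(I)}))\big].
\]
It remains to bound the two factors. Changing one coordinate changes the count of color $r$ by at most $1$, so $|c_r(\sigma)-c_r(\sigma^{(i)})|\le1$ for every $i$. For the first factor, note $n f_r(\sigma)=\sum_v(\mathbf{1}\{\sigma_v=r\}-\theta_\sigma(v,r))$, and $\theta_\sigma(v,r)$ depends on $\sigma$ only through the colors of the $G$-neighbors of $v$; hence flipping coordinate $i$ changes the $v=i$ summand (through its indicator) and the $v\in\calN(i)$ summands (through $\theta_\sigma(v,r)$), i.e.\ at most $1+d$ of the $n$ summands, each by at most $1$. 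Thus $|f_r(\sigma)-f_r(\sigma^{(i)})|\le(d+1)/n$, and plugging the two bounds in gives $\E_\sigma[f_r(\sigma)^2]\le(d+1)/(2n)$, i.e.\ $\E_\sigma[(\phi_1^{(r)}(\pmb\beta;\sigma))^2]\le (d+1)n/2$.

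Summing over $r\in[q]$ yields $\E[\|\phi_1(\pmb\beta;\sigma)\|_2^2]\le \tfrac{q(d+1)}{2}\,n$, so the lemma holds with $L=q(d+1)/2$; note that the bound is uniform over $\pmb\beta$, so the box $[-B,B]^q$ is not actually needed. The only step requiring a bit of care is the Lipschitz estimate on $f_r$, namely verifying that $\theta_\sigma(v,r)$ is a local function of $\sigma$ so that a single-site change perturbs only $O(d)$ of the $n$ terms; everything else is the same bookkeeping as in the hard-SAT case. One should also make sure to use that the displayed identities genuinely require $\sigma$ to be drawn from the Gibbs measure $\Pr_{G,H,\pmb\beta}$, which is exactly the reversible stationary measure of the single-site heat-bath update producing $\sigma^{(I)}$ — and this is precisely the hypothesis of the lemma.
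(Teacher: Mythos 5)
Your proof is correct: the coordinate-wise exchangeable-pairs argument, the identity $f_r=\tfrac1n\phi_1^{(r)}$, and the Lipschitz bound $|f_r(\sigma)-f_r(\sigma^{(i)})|\le (d+1)/n$ (using that $\theta_\sigma(v,r)$ depends only on the colors of $v$'s neighbors) all check out, yielding $L=q(d+1)/2$. The paper does not prove this lemma itself but cites it from Bhattacharya--Ramanan; your argument is exactly the adaptation of the paper's own exchangeable-pairs proof of the hard-SAT first-derivative bound (\Cref{lemma:upper_bound}) to the $H$-coloring setting, so it is essentially the same approach.
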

We next proceed to formulate the second derivative of $\phi$. Recall from \Cref{sec:4g4gg4} that $O_\sigma(v) = (\theta_\sigma(u,1),\ldots,\theta_\sigma(u,q-1))^\top$ denotes the $(q-1)$-dimensional vector with $
\theta_\sigma(u,r) = \frac{\emm^{\beta_r}Q_\sigma(v,r)}{\sum_{s=1}^q\emm^{\beta_s}Q_\sigma(v,s)}$. Then, it holds that (see \cite{bhattacharya2021parameter} for details)
\begin{equation*}\tag{\ref{eq:hcol_second_der}}
\phi_2(\pmb{\beta};\sigma) = \sum_{v=1}^n \lp(\diag(O_\sigma(v)) - O_\sigma(v)O_\sigma(v)^\top \rp).
\end{equation*}
We next give the proof of the inequality in \Cref{l:eigenvalue} which we restate here for convenience.
\direction*
\begin{proof}
By the Cauchy-Schwarz inequality, we have
\begin{align*}
\bigg(\sum_{s=1}^{q-1} \theta_\sigma(v,s)x_s\bigg)^2 &\leq \bigg(\sum_{s=1}^{q-1} \theta_\sigma(v,s)x_s^2\bigg)\bigg(\sum_{s=1}^{q-1}  \theta_\sigma(v,s)\bigg)= \bigg(\sum_{s=1}^{q-1} \theta_\sigma(v,s)x_s^2\bigg)\bigg(1 - \frac{\emm^{\beta_q}}{\sum_{s=1}^{q} \emm^{\beta_s}Q_\sigma(v,s)}\bigg).
\end{align*}
\noindent Thus, using the fact that $\beta_q = 0$,
\begin{align*}
\mathbf{x}^\top \lp(\diag(O_\sigma(v)) - O_\sigma(v)O_\sigma(v)^\top \rp) \mathbf{x} &= \sum_{s=1}^{q-1} \theta_\sigma(v,s)x_s^2 - \bigg(\sum_{s=1}^{q-1} \theta_\sigma(v,s)x_s\bigg)^2\geq \frac{\min_{s \in [q]} \emm^{\beta_s}}{\sum_s \emm^{\beta_s}} \bigg(\sum_{s=1}^{q-1} \theta_\sigma(v,s)x_s^2\bigg).
\end{align*}
Now, we have
\[
\sum_{s=1}^{q-1} \theta_\sigma(v,s)x_s^2 \geq \sum_{r \in \mathcal{I}} \theta_\sigma(v,r)x_r^2 \geq \min_{s\in [q-1]}\theta_\sigma(v,s) \sum_{r \in \mathcal{I}(\sigma)} x_r^2,
\]
where the last inequality follows since we know that $\theta_\sigma(v,r) \neq 0$ for all $r \in \mathcal{I}(\sigma)$. After observing that $\min_{s\in [q-1]}\theta_\sigma(v,s) \geq \frac{\min_{s \in [q]} \emm^{\beta_s}}{\sum_s \emm^{\beta_s}Q_\sigma(v,s)} \geq 
\frac{\min_{s \in [q]} \emm^{\beta_s}}{\sum_s \emm^{\beta_s}}
$, we therefore obtain the inequality in the lemma.
\end{proof}
Hence, each term in the second derivative sum preserves some directions of a vector $\mathbf{x}$. 
We now use this to prove that $\phi$ is strongly convex. Recall that 
$u_{rq}(\sigma) = \sum_{i=1}^n Q_{\sigma
}(i,r)Q_{\sigma}(i,q)$.
\begin{lemma}\label{l:strong_conv}
Let $\sigma \in \Omega_{G}^H$ and suppose there exists $\delta > 0$ such that  $u_{r q}(\sigma) \geq \delta n$ for all $r \in [q-1]$. 
Then, $\phi_2(\pmb \beta;\sigma) \succcurlyeq \delta n \Big(\frac{\min_{s \in [q]} \emm^{\beta_s}}{\sum_{s=1}^q \emm^{\beta_s}}\Big)^2\mathbf{I}_{q-1}.
$
\end{lemma}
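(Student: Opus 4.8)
The plan is to test the quadratic form $\mathbf{x}^\top \phi_2(\pmb\beta;\sigma)\mathbf{x}$ against an arbitrary vector $\mathbf{x} = (x_1,\dots,x_{q-1}) \in \R^{q-1}$ and show it is at least $\delta n \big(\tfrac{\min_{s}\emm^{\beta_s}}{\sum_s \emm^{\beta_s}}\big)^2\|\mathbf{x}\|_2^2$, which is exactly the claimed PSD bound. Writing $K(\pmb\beta) := \tfrac{\min_{s\in[q]}\emm^{\beta_s}}{\sum_{s=1}^q \emm^{\beta_s}}$, I would start from the expression \eqref{eq:hcol_second_der} for $\phi_2$ as a sum over vertices $v\in[n]$ of the matrices $M_\sigma(v) := \diag(O_\sigma(v)) - O_\sigma(v)O_\sigma(v)^\top$. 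First I would record that each $M_\sigma(v)$ is positive semidefinite, being the covariance matrix of the indicator vector $(\mathbf{1}\{X=1\},\dots,\mathbf{1}\{X=q-1\})$ of a categorical random variable $X \sim \theta_\sigma(v,\cdot)$; consequently $\mathbf{x}^\top M_\sigma(v)\mathbf{x}\geq 0$ for every $v$, and we are free to discard any subset of the summands when proving a lower bound.

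Next I would restrict the sum to the vertices $v$ with $Q_\sigma(v,q)=1$ and apply \Cref{l:eigenvalue} to each such term: for those $v$, with $\mathcal{I}_v(\sigma) = \{r\in[q-1] : Q_\sigma(v,r)=1\}$, \Cref{l:eigenvalue} gives $\mathbf{x}^\top M_\sigma(v)\mathbf{x} \geq K(\pmb\beta)^2 \sum_{r\in \mathcal{I}_v(\sigma)} x_r^2$. Summing over all $v$ with $Q_\sigma(v,q)=1$ and then swapping the order of summation (Fubini over the finite index set $\{(v,r): Q_\sigma(v,q)=1,\ Q_\sigma(v,r)=1,\ r\le q-1\}$), I get
\[
\mathbf{x}^\top \phi_2(\pmb\beta;\sigma)\mathbf{x} \;\geq\; K(\pmb\beta)^2 \sum_{r=1}^{q-1} x_r^2 \,\big|\{v\in[n] : Q_\sigma(v,r)Q_\sigma(v,q)=1\}\big| \;=\; K(\pmb\beta)^2 \sum_{r=1}^{q-1} x_r^2\, u_{rq}(\sigma),
\]
using the definition $u_{rq}(\sigma)=\sum_{i=1}^n Q_\sigma(i,r)Q_\sigma(i,q)$ recalled just before the lemma.

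Finally I would invoke the hypothesis $u_{rq}(\sigma)\geq \delta n$ for every $r\in[q-1]$ to bound the last expression below by $K(\pmb\beta)^2 \delta n \sum_{r=1}^{q-1} x_r^2 = \delta n\, K(\pmb\beta)^2 \|\mathbf{x}\|_2^2$. Since $\mathbf{x}$ was arbitrary, this yields $\phi_2(\pmb\beta;\sigma)\succcurlyeq \delta n\, K(\pmb\beta)^2 \mathbf{I}_{q-1}$, as desired. I do not expect any real obstacle here: the single nontrivial inequality (the per-vertex bound) is supplied by \Cref{l:eigenvalue}, and the only points requiring a word of care are the PSD-ness of each summand (so that dropping the vertices with $Q_\sigma(v,q)=0$ is legitimate) and the bookkeeping in the exchange of summation order.
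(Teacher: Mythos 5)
Your proposal is correct and follows essentially the same route as the paper's proof: drop the PSD summands at vertices with $Q_\sigma(v,q)=0$, apply \Cref{l:eigenvalue} to the remaining vertices, and swap the order of summation to recover $\sum_{r} u_{rq}(\sigma)x_r^2 \geq \delta n\|\mathbf{x}\|_2^2$. The only (cosmetic) difference is that you justify PSD-ness of each summand via the covariance-matrix interpretation, whereas the paper simply asserts it.
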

\begin{proof}
Let $\mathbf{x} \in \R^{q-1}$ be an arbitrary vector.
We have that
\begin{align*}
\mathbf{x}^\top \phi_2(\pmb\beta;\sigma)\mathbf{x} &= 
\sum_{v=1}^n \mathbf{x}^\top \lp(\diag(O_\sigma(u)) - O_\sigma(u)O_\sigma(u)^\top \rp)\mathbf{x}\,.
\end{align*}
Define $\mathcal{I}_v(\sigma)=  \{r \in [q-1] : Q_\sigma(v,r) = 1\}$,
and 
$U_q = \{ v \in [n]: Q_\sigma(v,q) = 1\}$.
Then, since every matrix in the sum is positive definite, by applying \Cref{l:eigenvalue}, we have  that 
\begin{align*}
\mathbf{x}^\top \phi_2(\pmb\beta;\sigma)\mathbf{x} \geq \sum_{v \in U_q} \mathbf{x}^\top \lp(\diag(O_\sigma(v)) - O_\sigma(v)O_\sigma(v)^\top \rp)\mathbf{x}\geq  
\bigg(\frac{\min_{s \in [q]} \emm^{\beta_s}}{\sum_{s=1}^q \emm^{\beta_s}}\bigg)^2
\sum_{v \in U_q} \sum_{r \in \mathcal{I}_v} x_r^2.
\end{align*}
Now, by definition $u_{rq}(\sigma)=\lp|\{v\in V : v \in U_q, \mathcal{I}_v(\sigma) \ni r\}\rp|$. Thus, 
\begin{align*}
\mathbf{x}^\top \phi_2(\pmb\beta;\sigma)\mathbf{x}\geq  \lp(\frac{\min_{s \in [q]} \emm^{\beta_s}}{\sum_{s=1}^q \emm^{\beta_s}}\rp)^2\sum_{r=1}^{q-1} u_{rq}(\sigma)x_r^2
\geq \delta n \lp(\frac{\min_{s \in [q]} \emm^{\beta_s}}{\sum_{s=1}^q \emm^{\beta_s}}\rp)^2 \|\mathbf{x}\|_2^2,
\end{align*}
finishing the proof.
\end{proof}

We next give the proof of \Cref{theorem:positive result h-colorings} which we restate here for convenience. Recall that the 2-Rainbow Condition asserts that there is a constant $\delta>0$ such that 
\begin{equation*}\tag{\ref{eq:strong-rainbow}}
\Pr_{G, H, \pmb \beta^*}
\big[u_{rq}(\sigma) \geq \delta n, ~\forall r \in [q-1]\big] = 1-o(1).
\end{equation*}

\hcolpositive*
\begin{proof}
For $t \in [0,1]$, let $\pmb{\beta}(t) = t\hat{\pmb{\beta}}_\sigma + (1-t)\pmb{\beta^\star}$ and $h(t) = (\pmb{\hat{\beta}}_\sigma - \pmb{\beta^\star})\phi_1(\pmb{\beta}(t);\sigma)$. Since $\hat{\pmb{\beta}}_\sigma$ maximizes $\phi$, we have that $\phi_1(\hat{\pmb{\beta}};\sigma)=0$ and hence $h(1)=0$.

For any $\pmb \beta\in [-2B,2B]^q$, we have the crude bound  $\lp(\frac{\min_{s \in [q]} \emm^{\beta_s}}{\sum_{s=1}^q \emm^{\beta_s}}\rp)^2 \geq C=1/(q^2\emm^{8B})$. Define the set of colorings 
\[
\mathcal{A} = 
\left\{\sigma \in \Omega_{G}^H : \|\phi_1(\pmb \beta;\sigma)\|_2 \leq \sqrt{2Ln/\epsilon} ~~\text{ and } ~ \min_{\pmb \beta\in [-2B,2B]^q}\lambda_{\min}(\phi_2(\pmb\beta;\sigma) \geq C\delta n \right\},
\]
and note that, from Lemmas~\ref{lemma:proper coloring first derivative} and~\ref{l:strong_conv} (combined with a union bound), it holds that $\Pr_{G,H,\pmb \beta^*}[A]\geq 1-\epsilon/2-o(1)\geq 1-\epsilon$ for all sufficiently large $n$.

Consider $\sigma\in\mathcal{A}$. We have from Cauchy-Schwarz inequality that
\begin{align*}
\big\|\hat{\pmb{\beta}}_\sigma &- \pmb{\beta^\star}\big\|_2 \big\|\phi_1(\pmb{\beta^\star};\sigma)\big\|_2\\
&\geq \big|(\pmb{\hat{\beta}}_\sigma - \pmb{\beta^\star})L(\pmb{\beta^\star})\big| = |h_n(1) - h_n(0)|= \lp|\int_0^1 h_n'(t) \mathrm{d}t \rp| = 
\lp|\int_0^1 (\hat{\pmb{\beta}}_\sigma - \pmb{\beta^\star})^\top \phi_2(\pmb{\beta}(t);\sigma)(\hat{\pmb{\beta}}_\sigma - \pmb{\beta^\star}) \mathrm{d}t \rp|
\end{align*}
Note that $\phi_2(\pmb\beta;\sigma)\succcurlyeq 0$ for every $\beta$ (cf. \eqref{eq:hcol_second_der}), so with $\ell = \|\hat{\pmb{\beta}}_\sigma - \pmb{\beta^\star}\|_2$   we obtain
\begin{equation}\label{eq:g6y6yhyht3er}\big\|\hat{\pmb{\beta}}_\sigma -  \pmb{\beta^\star}\big\|_2 \big\|\phi_1(\pmb{\beta^\star};\sigma)\big\|_2\
\geq 
\bigg|\int_0^{\min\{1,B/\ell\}} (\hat{\pmb{\beta}}_\sigma - \pmb{\beta^\star})^\top \phi_2(\pmb{\beta}(t);\sigma)(\hat{\pmb{\beta}}_\sigma - \pmb{\beta^\star}) \mathrm{d}t \bigg|.
\end{equation}
For $0\leq t\leq B/\ell$, we have that $\pmb\beta_\sigma(t)\in [-2B,2B]^{q}$. Hence, since $\sigma \in \calA$, \eqref{eq:g6y6yhyht3er} gives that $
\sqrt{2Ln/\eps} \geq C\delta n \min\{\ell, B\}$. Therefore, for all sufficiently large $n$, we obtain that  $ \|\hat{\pmb{\beta}}_\sigma - \pmb{\beta^\star}\|_2=\ell\leq M/\sqrt{n}$, where $M = \sqrt{2L/C^2\delta^2\eps}$, finishing the proof.
\end{proof}

\subsection{Application of \Cref{theorem:positive result h-colorings} to proper $q$-colorings}
\label{sec:app-1}
Let $\pmb{\beta^\star}\in \mathbb{R}^q$ and $G$ be a graph of max degree $d$ with vertex set $[n]$. In order to prove the positive part of \Cref{t:q_coloring}, we need to establish the 2-Rainbow Condition (cf. \eqref{eq:strong-rainbow}).  For a coloring $\sigma \sim \Pr_{G, \pmb{\beta^\star}}$ and $i \in [n], r\in [q]$, define the random variable $e^{rq}_i(\sigma)=Q(i,q)Q(i,r)$, in other words, $e^{rq}_i(\sigma) = 1$ if assigning $\sigma_i = r$ or $\sigma_i = q$ while keeping the other vertices colored according to $\sigma$ results in a proper coloring. By definition of $u_{rq}(\sigma)$, we have $u_{rq}(\sigma) = \sum_{i=1}^n e_i^{rq}(\sigma)$.

Let $R\subseteq V$ be a maximal subset of nodes that forms a 2-hop independent set of $G$, i.e., distinct $v,w \in R$ have disjoint neighborhoods. We can pick $R$ greedily; using that $G$ has maximum degree $d$, we have that $|R|\geq n/(d+1)^2$.  
\couplingcolorings*
\begin{proof}
Let $\Omega_x$ be the subset of proper $q$-colorings that are consistent with $x=(x_1,\hdots, x_{k-1})$. Let $\Omega_x^+$ be the subset of $\Omega_x$ containing all $\sigma$ with $e^{rq}_{v_k}(\sigma) = 1$ and similarly, $\Omega_x^-$ be the subset  containing all $\sigma$ with $e^{rq}_{v_k}(\sigma) = 0$.

We will construct a mapping $g:\Omega_x^- \mapsto \Omega_x^+$ as follows. Let $\sigma \in \Omega_x^-$ and let $\mathcal{N}_{v_k}$ be the set of neighbors of $v_k$ in $G$. Denote by $B$ the neighbors $j \in \mathcal{N}_{v_k}$ such that $\sigma_j = r$ or $\sigma_j = q$. Since $e^{rq}_{v_k}(\sigma) = 0$, $B$ is nonempty. We will produce a proper $q$-coloring $\sigma'$ by changing the colors of the nodes in $B$. To do this, first erase the colors of the nodes in $B$, while keeping the rest of the colors as in $\sigma$.  We will iteratively assign colors to the nodes in $B$, each time ensuring that the color chosen does not appear in any neighbors of that node that have not been erased. Pick a node $j \in B$. Let $m = |\mathcal{N}_{j}\cap B|$ and $m' =  |\mathcal{N}_{j}\cap (B\cup\{v_k\})^c|$. Since $v_k$ is a neighbor of $j$, we have that $m+m'+1 \leq d $, which means that $m' \leq d-1-m \leq q-3-m$ . Thus, we have at least $m+3\geq 3$ available colors for $j$ that do not conflict with any colors assigned to vertices in $\mathcal{N}_{j}\cap (B\cup\{v_k\})^c$. Let $s$ be one of these colors that is not equal to $r$ or $q$. We set $\sigma'(j) = s$ and remove $j$ from $B$. This process can be carried out for all nodes in $B$, each time coloring a node with a color different than $q$ and $r$, while maintaining the property that the coloring produced so far is proper. Thus, in the end, we set $\sigma'(v_k) = q$ and $g(\sigma) = \sigma'$, and by definition no neighbor of $v_k$ is colored with $r$ or $q$ in $\sigma'$. Furthermore, we only changed the colors of neighbors of $v_k$. Since $v_1,\ldots,v_{k-1}$ do not share any neighbors with $v_k$ (by definition of the set $R$), $\sigma'$ is still consistent with the assignment $e_{v_1}^{rq}(\sigma) = x_1,\ldots,e_{v_{k-1}}^{rq}(\sigma) = x_{k-1}$. In other words, we just showed that $g(\sigma) \in \Omega_x^+$. 

Since $g$ only changes the colors of $v_k$ and its neighbors, we have that for any $\sigma' \in \Omega_x^+$, 
the subset of $\Omega_x^-$ that is mapped to $\sigma'$ under $g$ has size
$|g^{-1}(\sigma')| \leq q^{d+1}$. Furthermore, if $\sigma' = g(\sigma)$, then
\[
\exp\bigg(\sum_{s=1}^q
\beta_s^\star
c_s(\sigma)\bigg) \leq \exp(\|\pmb{\beta^\star}\|_1 d) \exp\bigg(\sum_{s=1}^q 
\beta_s^\star
c_s(\sigma')\bigg).
\]
These observations yield that
\begin{align*}
     \Pr_{G_n, \pmb{\beta^\star}}[e_{v_k} = 1|e_{v_1}^{rq}(\sigma) = x_1,\ldots,e_{v_{k-1}}^{rq}(\sigma) = x_{k-1} ] 
     &= 
     \frac{\sum_{\sigma \in \Omega_x^+} \exp\lp(\sum_{s=1}^q 
     \beta_s^\star
     c_s(\sigma)\rp) }{\sum_{\sigma \in \Omega_x^+} \exp\lp(\sum_{s=1}^q
     \beta_s^\star c_s(\sigma)\rp)+\sum_{\sigma \in \Omega_x^-} \exp\lp(\sum_{s=1}^q
     \beta_s^\star
     c_s(\sigma)\rp)}
     \,,
\end{align*}
and the right-hand side is at least
\[
\frac{\sum_{\sigma \in \Omega_x^+} \exp\lp(\sum_{s=1}^q 
\beta_s^\star
c_s(\sigma)\rp) }{\sum_{\sigma \in \Omega_x^+} \exp\lp(\sum_{s=1}^q
\beta_s^\star
c_s(\sigma)\rp)+q^d e^{\|\pmb{\beta^\star}\|_1 d}\sum_{\sigma \in \Omega_x^+} \exp\lp(\sum_{s=1}^q 
\beta_s^\star
c_s(\sigma)\rp)}
 \geq \frac{1}{1+q^{d+1}e^{\|\pmb{\beta}^\star\|_1 d}}.\qedhere
\]
\end{proof}

We are now ready to complete the proof of \Cref{t:q_coloring}, which we restate here. 
\pestimatecol*
\begin{proof}
 The impossibility part for $q\leq d+1$ has already been shown in~\Cref{t:q_coloring_2} in \Cref{sec:insufficient}. So we focus on the statement for the existence of the estimator when $q\geq d+2$.
 
From \Cref{theorem:positive result h-colorings}, it suffices to show that in the regime $q \geq d+2$, the 2-Rainbow condition (cf. \eqref{eq:strong-rainbow}) is satisfied on graphs $G$ of max degree $d$, i.e., we want to show that there exists $\delta>0$ such that for all $r \in [q-1]$, it holds that
$ \Pr_{G,\pmb{\beta}^\star}[u_{rq}(\sigma) \geq  \delta n] = 1-o(1)$, where 
$u_{rq}(\sigma)$ is the number of vertices that can be colored by either $r$ or $q$ while still being a proper $q$-coloring.

Let $R = \{v_1,....,v_{t}\}$ be a maximal 2-hop independent set  (i.e., the vertices in $R$ have mutually disjoint neighborhoods), so that $t\geq n/(d+1)^2$. Then, for distinct colors $r$ and $q$, we have the lower bound $u_{rq}(\sigma)\geq \sum^{t}_{k=1}e^{rq}_{v_k}(\sigma)$. By the ``conditional independence'' property of \Cref{lemma:coupling-lower bound}, we have by an application of \Cref{lem:dominance} that the sum stochastically dominates the sum $\sum^t_{i=1} \tau_i$ where $\{\tau_i\}$ is a family of 0-1 random variables with $\Pr[\tau_i=1]=p$, where $p=\frac{1}{1 + q^{d+1} \emm^{\|\pmb \beta^\star\|_1 d}}$. By standard Chernoff bounds, for $\delta=p/(4d)^2$ it follows that $u_{rq}(\sigma)\geq \delta n$ with probability $1-\emm^{-\Omega(n)}$ over the choice of $\sigma$. This completes the proof.
\end{proof}

\subsection{Application of \Cref{theorem:positive result h-colorings} to Permissive Models}
\label{sec:app-2}
In this section, we prove \Cref{cor:permissive}.
\corpermissive*
\begin{proof}
As in the case of proper $q$-colorings, the result will follow by applying \Cref{theorem:positive result h-colorings}. We only need to establish the 2-Rainbow Condition, i.e., that for any color $r$ it holds that $u_{rq}(\sigma)=\Omega(n)$ with probability $1-o(1)$ over the choice of $\sigma$. The proof of this is very similar to that of \Cref{t:q_coloring}, once we prove a suitable analogue of the ``conditional independence'' property of \Cref{lemma:coupling-lower bound}. 

Let $G=(V,E)$. Suppose that the unconstrained color of $H$ is $j$. For a vertex $v$ of $G$, the key is to consider the indicator $e_v(\sigma)$ of the event that all neighbors of $v$ get the unconstrained color $j$. Then, since color $j$ is unconstrained, for any color $r\in [q]$, we have the lower bound $u_{rq}(\sigma)\geq \sum_{v\in V}e_v(\sigma)$. It remains therefore to lower bound the latter sum.

Analogously now to proper colorings, consider a maximal set $R=\{v_1,\hdots,v_t\}$ of vertices with mutually disjoint neighborhoods; we have that $t\geq n/(d+1)^2$. Then, for $k=1,\hdots, t$, it holds that
\begin{equation}\label{eq:4ttg4tgwq}\Pr_{G,H,\bm \beta^*}[e_{v_k}(\sigma)=1\mid e_{v_1},\hdots, e_{v_{k-1}}(\sigma)]\geq p^d\,,~ \mbox{ where ~} p=\frac{e^{\beta_j^\star}}{\sum_{s=1}^q e^{\beta_s^\star}}.
\end{equation}
To prove \eqref{eq:4ttg4tgwq}, we will show that for any vertex $w$, conditioned on any coloring of its neighborhood, the probability that $w$ gets the unconstrained color $j$ is at least $p$. Assuming this for the moment, the probability that all neighbours of $v_k$ get $j$  is at least $p^d$. Since the conditioning on $e_{v_1},\hdots, e_{v_{k-1}}$ is completely determined by the coloring of the neighborhoods of $v_1,\hdots,v_{k-1}$ and $v_k$ does not have a neighbor in any of these (by the choice of $R$), the bound on the conditional probability in \eqref{eq:4ttg4tgwq} follows. Using \eqref{eq:4ttg4tgwq}, we can now invoke~\Cref{lem:dominance} analogously to the proper $q$-colorings case to conclude that $u_{rq}(\sigma)\geq \sum^{t}_{i=1}e_{v_i}(\sigma)\geq \delta n$ with probability $1-\emm^{-\Omega(n)}$ over the choice of $\sigma$, where $\delta=p^d/(8d)^2$. 

To finish the proof of the 2-Rainbow Condition, it remains to show the intermediate fact that we used in the proof of \eqref{eq:4ttg4tgwq}, i.e.,  for a vertex $w$ with neighbors $w_1,\hdots, w_d$, it holds that
\begin{equation}\label{eq:keyHcol}
\Pr_{G,H,\bm \beta^*}[\sigma_w=j\mid \sigma_{w_1}=x_1,\hdots, \sigma_{w_d}=x_d]\geq p
\end{equation}
for any choice of colors $x_1,\hdots,x_d$ (which are consistent with some $H$-coloring of $G$). To see \eqref{eq:keyHcol}, let $x=(x_1,\hdots,x_d)$ and denote by $\Omega_{x}$ the set of all valid $H$-colorings $\sigma$ such that $\sigma_{w_1}=x_1,\hdots, \sigma_{w_d}=x_d$.  For $s \in [q]$, denote further by $\Omega^s_{x}$ those $H$-colorings in $\Omega_{x}$ where $\sigma_w = s$, so that $\{\Omega^s_{x}\}_{s=1}^q$ is a partition of $\Omega_{x}$. It follows that
\begin{equation}\label{eq:4f4fttgrtef}\Pr_{G,H,\bm \beta^*}[\sigma_w=j\mid \sigma_{w_1}=x_1,\hdots, \sigma_{w_d}=x_d]=\frac{\sum_{\sigma \in \Omega^j_{x}} \exp\lp(\sum_{i=1}^{q-1} \beta_i^\star c_i(\sigma)\rp)}{\sum_{s=1}^q\sum_{\sigma \in \Omega^s_{x}} \exp\lp(\sum_{i=1}^{q-1} \beta_i^\star c_i(\sigma)\rp)}.
\end{equation}
Now, for any $s \neq j$, define the mapping $g_s:\Omega^s_{x} \mapsto \Omega^j_{x} $ by setting for $v\in V$
$$
g_s(\sigma)_v = \left\{
\begin{array}{ll}
      j &\text{ , if $v  =w$},\\
      \sigma_v &\text{ , otherwise.}
\end{array} 
\right. 
$$
Since the color $j$ is unconstrained, $g(\sigma)$ is a valid $H$-coloring and $g_s(\sigma) \in \Omega^j_{x}$; also $g_s$ is  injective. Now, suppose that $g_s(\sigma) = \sigma'$,  we have
\[
\exp\bigg(\sum_{i=1}^{q-1} \beta_i^\star c_i(\sigma)\bigg) = \exp(\beta^\star_s - \beta^\star_j)
\exp\bigg(\sum_{i=1}^{q-1} \beta_i^\star c_i(\sigma')\bigg),
\]
since $\sigma$ and $\sigma'$ only differ in the color of node $k$. Using the injectivity of $g_s$, it thererefore follows that
$$
\sum_{\sigma \in \Omega^s_{x} } \exp\bigg(\sum_{i=1}^{q-1} \beta_i^\star c_i(\sigma)\bigg) \leq 
\exp(\beta^\star_s - \beta^\star_r)
\sum_{\sigma \in \Omega^j_{x }} \exp\bigg(\sum_{i=1}^{q-1} \beta_i^\star c_i(\sigma)\bigg).
$$
Using this to upper-bound the sums in the denominator of \eqref{eq:4f4fttgrtef} for all $s \neq j$, we obtain \eqref{eq:keyHcol} after observing that $p=\frac{1}{1 +\sum_{s\neq j}\exp(\beta^\star_s - \beta^\star_j)}$. This finishes the proof of \Cref{cor:permissive}. 
\end{proof}

\end{document}